\definecolor{maroon}{cmyk}{0,0.87,0.68,0.32}
\newcommand{\adv}{\mathrm{adv}}
\newcommand{\nat}{\mathrm{nat}}
\newtheorem{theorem}{Theorem}[section]
\newtheorem{lemma}[theorem]{Lemma}
\newtheorem{corollary}[theorem]{Corollary}
\newtheorem{remark}{Remark}
\newtheorem{definition}{Definition}
\renewcommand{\a}{\mathbf{a}}
\renewcommand{\b}{\mathbf{b}}
\renewcommand{\c}{\mathbf{c}}
\newcommand{\e}{\mathbf{e}}
\renewcommand{\u}{\mathbf{u}}
\renewcommand{\v}{\mathbf{v}}
\newcommand{\w}{\mathbf{w}}
\newcommand{\x}{\mathbf{x}}
\newcommand{\y}{\mathbf{y}}
\newcommand{\z}{\mathbf{z}}
\newcommand{\E}{\mathbf{E}}
\newcommand{\R}{\mathbb{R}}
\newcommand{\0}{\mathbf{0}}
\newcommand{\1}{\mathbf{1}}
\newcommand{\comment}[1]{}
\newcommand{\blue}[1]{{\color{black}#1}}
\newcommand{\proposed}[2]{{#1}}
\newcommand{\cA}{\mathcal{A}}
\newcommand{\cB}{\mathcal{B}}
\newcommand{\cC}{\mathcal{C}}
\newcommand{\cD}{\mathcal{D}}
\newcommand{\cE}{\mathcal{E}}
\newcommand{\cS}{\mathcal{S}}
\newcommand{\cT}{\mathcal{T}}
\newcommand{\cN}{\mathcal{N}}
\newcommand{\cO}{\mathcal{O}}
\newcommand{\cY}{\mathcal{Y}}
\newcommand{\cX}{\mathcal{X}}
\newcommand{\cZ}{\mathcal{Z}}
\newcommand{\bbB}{\mathbb{B}}
\newcommand{\bbE}{\mathbb{E}}
\newcommand{\dist}{\mathsf{dist}}
\newcommand{\spann}{\mathsf{span}}
\newcommand{\vol}{\mathsf{vol}}
\newcommand{\Null}{\mathsf{null}}
\newcommand{\Area}{\mathsf{Area}}
\newcommand{\ind}{\mathbb{I}}
\DeclareMathOperator*{\argmax}{argmax}
\DeclareMathOperator*{\argmin}{argmin}
\begin{document}

\title{An Analysis of Robustness of Non-Lipschitz Networks
}

\author{\textbf{Maria-Florina Balcan}\\
\texttt{ninamf@cs.cmu.edu} \\
        Carnegie Mellon University\\
        \and
		\textbf{Avrim Blum}\\
		\texttt{avrim@ttic.edu}\\
        Toyota Technological Institute at Chicago
        \AND
        \textbf{Dravyansh Sharma}\\
        \texttt{dravyans@cs.cmu.edu} \\
        Carnegie Mellon University
        \and
        \textbf{Hongyang Zhang}\\
        \texttt{hongyang.zhang@uwaterloo.ca}\\
        University of Waterloo}

\maketitle

\begin{abstract}%
Despite significant advances, deep networks remain highly susceptible to adversarial attack.  One fundamental challenge is that small input perturbations can often produce large movements in the network’s final-layer feature space.  In this paper, we define an attack model that abstracts this challenge, to help understand its intrinsic properties.  In our model, the adversary may move data an arbitrary distance in feature space but only in random low-dimensional subspaces.  We prove such adversaries can be quite powerful: defeating any algorithm that must classify any input it is given.  However, by allowing the algorithm to abstain on unusual inputs, we show such adversaries can be overcome when classes are reasonably well-separated in feature space. We further provide strong theoretical guarantees for setting algorithm parameters to optimize over accuracy-abstention trade-offs using data-driven methods. Our results provide new robustness guarantees for nearest-neighbor style algorithms, and also have application to contrastive learning, where we empirically demonstrate the ability of such algorithms to obtain high robust accuracy with low abstention rates.  Our model is also motivated by {\em strategic classification}, where entities being classified aim to manipulate their observable features to produce a preferred classification, and we provide new insights into that area as well.
\end{abstract}

\section{Introduction}
A substantial body of work has shown that deep networks can be highly susceptible to adversarial attacks, in which minor changes to the input lead to incorrect, even bizarre classifications~\citep{szegedy2014intriguing,moosavi2016deepfool,madry2018towards,su2019one,brendel2018adversarial}.  Much of this work has considered bounded $\ell_p$-norm attacks, though many other forms of attack are considered as well \citep{brown2018unrestricted,engstrom2017rotation,gilmer2018motivating,xiao2018spatially,alaifari2018adef}. What these results have in common is that changes that either are imperceptible or should be irrelevant to the classification task can lead to drastically different network behaviors.  

One key reason\footnote{\blue{Additional explanations include the presence of brittle features that are human incomprehensible \citep{ilyas2019adversarial}, and the location of the classification boundary relative to the submanifold of sampled data \citep{tanay2016boundary}.}} for this vulnerability to attacks is the non-Lipschitzness of typical neural networks: small but adversarial movements in the input space can produce large perturbations in the feature space \citep{yang2020closer,szegedy2014intriguing,goodfellow2014explaining}. This ability of an adversary to produce large movements in feature space appears to be at the heart of many of the successful attacks to date.  If we assume that non-Lipschitzness is important for good performance on natural data, then it is crucial to understand to what extent this property makes a network intrinsically susceptible to attacks.

\comment{
\begin{figure}[t]
\centering
\includegraphics[scale=0.97]{intro_figure.pdf}
\caption{Illustration of a non-Lipschitz feature mapping using a deep network. $A$: Clean image of CIFAR-10. $A'$: Corrupted image by 8-intensity-level adversarial perturbation. $B$: Clean image of another class. For a naturally trained deep network $F$ on CIFAR-10, we see that $\|F(A)-F(A')\|_2>\|F(A)-F(B)\|_2$, even though $A$ and $A'$ are human-indistinguishable.}
\label{figure: nl}
\end{figure}
}

In this work, we propose and analyze an abstract attack model designed to focus on this question of the intrinsic vulnerability of non-Lipschitz networks, and what might help to make such networks robust. In particular, suppose an adversary, by making an imperceptible change to an input $x$, can cause its representation $F(x)$ in feature space (the final layer of the network) to move by an arbitrary amount: will such an adversary always win?  Clearly if the adversary can modify $F(x)$ by an arbitrary amount {\em in an arbitrary direction}, then yes, because it can then move $F(x)$ into the classification region of any other class it wishes.  But what if the adversary can modify $F(x)$ by an arbitrary amount but only in a {\em random} direction or within a random low-dimensional subspace (which it cannot control)?   In this case, we show an interesting dichotomy: if the classifier must output a classification on any input it is given, then indeed the adversary will still win, no matter how well-separated the natural data points from different classes are in feature space and no matter what decision surface the classifier uses.  Specifically, for any data distribution and any decision surface, there must exist at least one class such that the adversary wins with significant probability on random examples of that class.  However, if we provide the classifier the ability to abstain, then we show it can defeat such an adversary (while maintaining a low abstention rate on natural data) with a nearest-neighbor style approach under fairly reasonable conditions on the distribution of natural data in feature space.  Moreover, we show these conditions can often be achieved using contrastive learning. Our results also hold for generalizations of these models, such as  directions that are not completely random. More broadly, our results provide a theoretical explanation for the importance of allowing abstaining, or selective classification, in the presence of adversarial attacks that exploit network non-Lipschitzness.  Our results also provide new understanding of the robustness of nearest-neighbor algorithms.

\proposed{A second motivation of our work comes from the area of {\em strategic classification}, where the concern is that entities being classified may try to manipulate their observable features to achieve a preferred outcome.  Consider, for example, a public rating system used to classify companies into those that are good to work for and those that are not.  Naturally, companies want to be viewed as being good to work for.  So, they may try to modify any easy-to-manipulate features used by the system in order to achieve a positive classification, even if this does not change their true status. For example, perhaps the system uses the ratio of managers to associates, which the company can manipulate arbitrarily (from 0 to infinity) by changing employee titles, without actually changing pay or responsibilities.  Suppose we assume agents (the companies) have a small number of parameters they can manipulate arbitrarily, and that there is an unknown linear function that maps changes in these parameters to movement in feature space. In this case, our results can provide some guidance.  Our negative results imply that for any non-abstaining classifier, there must be at least one class such that for most examples from that class, manipulation in a random direction has a significant chance of being successful; whereas our positive results imply that by using the ability to abstain, we can be secure against manipulation in most low-dimensional subspaces. Note that this is very similar to the model used by \cite{Kleinberg2018HowDC} (see also \citealt{Alon2020MultiagentEM,shavit2020causal}) who assume that the ``effort conversion matrix'' mapping changes in manipulable parameters to changes in observable features is {\em known} (or at least can be learned through experimentation, \citealp{shavit2020causal}); our results provide insight into what can be done when it is {\em unknown}, and the classifier must be fielded before any manipulations are observed.}{}

\comment{
A large amount of works in the adversarial robustness study small perturbation threat model.
One may argue that these small perturbations roughly correspond to ``human-imperceptible'' attacks, that is, the adversary is motivated to keep the attack bounded to avoid human detection. But a human-in-the-loop is a fairly limiting requirement for a scalable machine learning application, and partly defeats the purpose of automated robustness. Also one may consider applications of machine learning to domains like malware detection~\cite{chen2017adversarial}, where a human inspection is infeasible.
}

\comment{
In this work, we present a model where large perturbations are indeed possible to defend against. If we could verify some of our features at a cost, the adversary would be driven to perturb a small random subset of them. This motivates us to consider an adversarial model where the attacker can perturb within a random subspace of the feature space. We allow the adversary to perform arbitrarily large manipulations within this subspace. Intuitively if the features are well-clustered and separated in the embedding space, an adversary moving in a random direction from one cluster is unlikely to hit another cluster even with large movements.
}

In addition to providing a formal separation between algorithms that can abstain and those that cannot, our work also yields an interesting trade-off between robustness and accuracy~\citep{tsipras2019robustness,zhang2019theoretically,raghunathan2020understanding} for nearest-neighbor algorithms.  By controlling a distance threshold determining the rate at which the nearest-neighbor algorithm abstains, we are able to trade off (robust) precision against recall, and we provide results for how to provably optimize for such a trade-off using a data-driven approach. We also perform experimental evaluation in the context of contrastive learning \citep{he2020momentum, chen2020simple, khosla2020supervised}.

\blue{We acknowledge that our model is only an abstraction. Additionally, one can also consider relaxations of the Lipschitzness condition. We discuss some work along these lines in Section \ref{sec:related}.}

\comment{
We present a nearest-neighbor based algorithm which obtains high robustness under our threat model. We further establish that under practical conditions satisfied by the feature embedding, our algorithm does not say ``don't know'' too often. We demonstrate the effectiveness of data-driven algorithm design in navigating the trade-off between robust accuracy and abstention. Our theoretical results provide useful practical guidelines for determining and constructing ``good'' feature spaces that enable adversarially robust classification. We supplement our theory with empirical applications to contrastive learning.
}

\subsection{Our Contributions}

Our main contributions are the following.
Conceptually, we introduce a new \emph{random feature subspace} threat model to abstract the effect of non-Lipschitzness in deep networks. Technically, we show the power of abstention and data-driven algorithm design in this setting, proving that classifiers with the ability to abstain are provably more powerful than those that cannot in this model, and giving provable guarantees for nearest-neighbor style algorithms and data-driven hyperparameter learning.  Experimentally, we show that our algorithms perform well in this model on representations learned by supervised and self-supervised contrastive learning.  More specifically,

\begin{itemize}[leftmargin=*,topsep=0pt,partopsep=1ex,parsep=1ex]\itemsep=-4pt

\item We introduce the \emph{random feature subspace} threat model, an abstraction designed to  focus on the impact of non-Lipschitzness  on vulnerability to adversaries.

\item
We show for this threat model that \emph{all} classifiers that partition the feature space into two or more classes---without an ability to abstain---are provably vulnerable to adversarial attacks.  In particular, no matter how nice the data distribution is in feature space, for at least one class the adversary succeeds with significant probability.

\item
We show that in contrast, a classifier with the ability to abstain can overcome this vulnerability. We present a thresholded nearest-neighbor algorithm that is provably robust in this model when classes are sufficiently well separated, and characterize the conditions under which the algorithm does not abstain too often. This result can be viewed as providing new robustness guarantees for nearest-neighbor style algorithms as well as for \emph{proof-carrying predictions}, where predictions are accompanied by certificates of confidence.

\item

We leverage and extend dispersion techniques from data-driven algorithm design, and present a novel data-driven method for learning data-specific hyperparameters in our defense algorithms to simultaneously obtain high robust accuracy and low abstention rates. Unlike typical hyperparameter tuning, our approach provably converges to a global optimum.

\item
Experimentally, we show that our proposed algorithm achieves \emph{certified} adversarial robustness on representations learned by supervised and self-supervised contrastive learning. Our method significantly outperforms algorithms without the ability to abstain.

\end{itemize}
Our framework can be thought of as a kind of smoothed analysis  \citep{spielman2004smoothed} in its combination of random and adversarial components. This is especially so for Section \ref{sec:bounded-density} where we broaden our guarantees to apply to arbitrary $\kappa$-bounded distributions. However, a key distinction is that in smoothed analysis, the adversary moves first, and randomness is added to its decision afterwards.  In our model, in contrast, first a random restriction is applied to the space of perturbations the adversary may choose from, and then the adversary may move arbitrarily in that random subspace. Thus, the adversary in our setting has more power, because it can make its decision after the randomness has been applied.

\subsection{Related Work}
\label{sec:related}
\noindent{\textit{Large-magnitude adversarial perturbations.}} 
While most work on adversarial robustness  considers small perturbations (for example, \citealt{szegedy2014intriguing,madry2018towards,zhang2019theoretically}), there has also been significant work on other kinds of attacks such as adversarial rotations, translations, and deformations~\citep{brown2018unrestricted,engstrom2017rotation,gilmer2018motivating,xiao2018spatially,alaifari2018adef}.  Perhaps most closely related to our negative results in Section \ref{sec:negative} is work of \citet{shamir2019simple}.  \citet{shamir2019simple} consider an adversary that can make small $\ell_0$ perturbations in the input space: that is, perturb a small number of input coordinates, but change them by an arbitrary amount. They present algorithms {\em for the adversary} giving targeted attacks against any learner that partitions space with a hyperplane partition using a limited number of hyperplanes in general position.  Our negative results for non-abstaining classifiers are inspired by their work, though they are formally incomparable (our results are stronger in that they hold even if an adversary can just change one random linear combination and for an {\em arbitrary} partition of space, but weaker in that we consider an untargeted adversary, and different in that we assume randomness in the direction of adversarial power rather than in the space partition).  \citet{shamir2019simple} do not consider the use of abstention to combat this adversarial power. We discuss further connections to coordinate-wise perturbations in Section \ref{sec:comparison-coordinatewise}. \blue{\cite{shafahi2019are} look at the effect of dimensionality on robustness limits for $\ell_p$-norm bounded attacks, but their negative results do not hold for abstentive classifiers.}

\medskip
\blue{\noindent{\textit{Network Lipschitzness and relaxed notions.}} We model non-Lipschitzness of the network mapping in the context of robustness via large adversarial feature space movements corresponding to small input space perturbations. Several relaxations of the Lipschitz condition have been studied in the literature including H\"{o}lder smoothness \citep{an2021generalization}, local Lipschitzness \citep{hein2017formal,yang2020closer} and probabilistic Lipschitzness  \citep{urner2013probabilistic}. Typically satisfying these relaxed conditions leads to better performance than bounding the global Lipschitzness of the networks \citep{cisse2017parseval}.}

\medskip
\noindent{\textit{Adversarial robustness with abstention options.}}
Classification with abstention options (a.k.a. selective classification~\citep{geifman2017selective}) is a relatively less explored direction in the adversarial machine learning literature. \cite{hosseini2017blocking} augmented the output class set with a NULL label and trained the classifier to reject the adversarial examples by classifying them as NULL; \cite{stutz2020confidence} and \cite{laidlaw2019playing} obtained robustness by rejecting low-confidence adversarial examples according to confidence thresholding or predictions on the perturbations of adversarial examples. Another related line of research to our method is the detection of adversarial examples~\citep{grosse2017statistical,li2017adversarial,carlini2017adversarial,meng2017magnet,metzen2017detecting,bhagoji2018enhancing,xu2017feature,hu2019new,liu2018adversarial,Deng_2021_CVPR}. \blue{This direction also often involves thresholding a heuristic confidence score. For example, \cite{ma2018characterizing} use a confidence metric based on $k$-nearest neighbors in the training sample, and \cite{lee2018simple} fit class-wise Gaussian distributions and flag test points away from all distributions. These approaches have been studied empirically  but typically lack formal guarantees.}
\cite{goldwasser2020beyond}, on the other hand, gave provable guarantees for selective classification in a transductive setting in which performance was measured according to an adversarial test distribution from which unlabeled examples are provided to the learning algorithm in advance.

\medskip
\noindent{\textit{Data-driven algorithm design.}} Data-driven algorithm design refers to using machine learning for algorithm design, including choosing a good algorithm from a parameterized family of algorithms for given data. It is known as ``hyperparameter tuning'' to machine learning practitioners and typically involves a ``grid search'', ``random search'' \citep{bergstra2012random} or gradient-based search, with no guarantees of convergence to a global optimum.

Data-driven algorithm design was formally introduced to the theory of computing community by \cite{gupta2017pac} as a learning paradigm, and was further extended in \cite{balcan2017learning}. The key idea is to model the problem of identifying a good algorithm from data as a statistical learning problem. The technique has found useful application in providing provably better algorithms for several problems of fundamental significance in machine learning including clustering \citep{balcan2020learning,balcan2018data,balcan2021learning}, semi-supervised learning \citep{balcan2021data}, simulated annealing \citep{blum2021learning}, regularized regression \citep{balcan2022provably}, mixed integer programming \citep{balcan2018learning,balcan2022structural}, low rank approximation \citep{bartlett2022generalization} and even beyond, providing guarantees like differential privacy and adaptive online learning \citep{balcan2018dispersion,sharma2020learning}.
See \cite{balcan2020data} for further discussion on this rapidly growing body of research. For learning in an adversarial setting, we provide the first demonstration of the effectiveness of data-driven algorithm design in a defense method 
to optimize over the accuracy-abstention trade-off with strong theoretical guarantees.

\proposed{
\medskip
\noindent{\textit{Strategic classification.}} Strategic classification considers the case that entities being classified have a stake in the outcome, and will aim to manipulate their observable features to receive the classification they desire.  Typically it is assumed these entities have some limited power to manipulate, and that this power is known to the classifier. \cite{chen2020learning,ahmadi2021strategic} consider entities that can manipulate inside a ball of some limited radius, whereas \citet{Kleinberg2018HowDC,Alon2020MultiagentEM,shavit2020causal} consider agents that have ``activities'' they can engage in at some cost, which get converted into movement in feature space via an ``effort conversion matrix''. This latter work assumes the effort conversion matrices are known, or at least can be learned from experimentation.  In contrast, our setting can be viewed as a case where the matrices are unknown and the classifier must be fielded before any manipulations are observed (and agents have an unlimited activity budget).  Note, the work of \citet{Kleinberg2018HowDC,Alon2020MultiagentEM,shavit2020causal} also considers the case that only certain activities correspond to ``gaming'' and others correspond to true self-improvement; we do not consider the self-improvement aspect here.}{}

\blue{
\medskip
\noindent{\textit{Adversarial defenses by non-parametric methods.}}
Adversarial defenses by $k$-nearest neighbor classifier have received significant attention in recent years. In the setting of norm-bounded threat model without the ability to abstain, \cite{wang2018analyzing} showed that the robustness properties of $k$-nearest neighbors depend critically on the value of $k$---the classifier may be inherently non-robust for small $k$, but its robustness approaches that of the Bayes Optimal classifier for fast-growing $k$. \cite{yang2020robustness,bhattacharjee2020non} provided and analyzed a general defense method, adversarial pruning, that works by preprocessing the data set to become well-separated and then running $k$-nearest neighbors. Theoretically, they derived an optimally robust classifier, which is analogous to the Bayes Optimal, and showed that adversarial pruning can be viewed as a finite sample approximation to this optimal classifier. In this work, we study the power of 1-nearest neighbors for adversarial robustness with the ability to abstain, under a random-subspace adversarial threat model.

\medskip
\noindent{\textit{Feature-space attacks.}}
Different from most existing attacks that directly perturb input pixels, there are a few prior works that focus on perturbing abstract features as ours. More specifically, the subspaces of features typically characterize styles, which include interpretable styles such as vivid colors and sharp outlines, and uninterpretable ones~\citep{xu2020towards}. \cite{ganeshan2019fda} proposed a \emph{feature disruptive attack} that generates an image perturbation that disrupts features at each layer of the network and causes deep-features to be highly corrupt. They showed that the attacks generate strong adversaries for image classification, even in the presence of various defense measures. Despite a large amount of empirical works on adversarial feature-space attack, many fundamental questions remain open, such as developing a \emph{provable} defense against feature-space attacks.

\medskip
\noindent{\textit{Learning with noise.}}
Classic work on learning with noise is a related line of work with theoretical guarantees \citep{kearns1988learning,bshouty2002pac,awasthi2014power}. These models typically  involve perturbations of  input-space features of training points. Our nearest-neighbor based techniques for test-time feature-space attacks are different from the localization and disagreement-based approaches that are known to work for poisoning attacks \citep{awasthi2014power,awasthi2016learning,balcan2022robustly}. An interesting direction for future work is to determine how to adapt our techniques to handle noise in data.
}

\section{Preliminaries and Threat Model}

\noindent\textit{Notation.} We will use \emph{bold lower-case} letters such as $\x$ and $\y$ to represent vectors, \emph{lower-case} letters such as $x$ and $y$ to represent scalars, and \emph{calligraphic capital} letters such as $\cX$, $\cY$ and $\cD$ to represent distributions. Specifically, we denote by $\x\in\cX$ a sample instance, and by $y\in\cY$ a label, where $\cX\subseteq \R^{n_1}$ and $\cY$ indicate the image and label spaces, respectively. Let $F:\cX\rightarrow \R^{n_2}$ be our given \emph{feature embedding} (which we assume has already been learned) that maps an instance to a high-dimensional vector in the latent space $F(\cX)$. It can be parameterized, for example, by deep neural networks. We will frequently use $\v\in\R^{n_2}$ to represent an adversarial perturbation in the feature space. Denote by $\dist(\z,\z')$ the Euclidean distance between any two vectors $\z,\z'$ in the image or feature space,
and let $\bbB(\z,\tau) = \{\z':\dist(\z,\z')\le\tau\}$ be the ball of radius $\tau$ about $\z$. We will use $\cD_{\cX}$ to denote the distribution of instances in the input space, $\cD_{\cX|y}$ the distribution of instances in the input space conditioned on the class $y$, $\cD_{F(\cX)}$ the distribution of instances in feature space, and $\cD_{F(\cX)|y}$ the distribution of instances in feature space conditioned on the class $y$.
Finally, we will typically use $(\x_1,y_1),...,(\x_m,y_m)$ to denote a given set of $m$ labeled training examples. 

\subsection{The Random Feature Subspace Threat Model}
\label{sec: threat model}

We now formally present the {\em random feature subspace} threat model, in which the adversary, by making small changes in the input space, is assumed to be able to create arbitrarily large movements in feature space, though only in random low-dimensional subspaces.  Note that because this large modification in feature space is assumed to come from a small perturbation in input space, we always assume that the {\em true correct label $y$ is the same for the modified point and the original point}.  

Specifically, let $\x$ be an $n_1$-dimensional test input for classification. The input is embedded into an $n_2$-dimensional feature space using an abstract mapping $F$. Our threat model is that the adversary may corrupt $F(\x)$ such that the modified feature vector is any point in a random $n_3$-dimensional affine subspace denoted by $\cS+\{F(\x)\}$. For example, if $n_3=1$ then $\cS+\{F(\x)\}$ is a random line through $F(\x)$, and the adversary may select an arbitrary point on that line; if $n_3=2$ then $\cS+\{F(\x)\}$ is a random 2-dimensional plane through $F(\x)$, and the adversary may select an arbitrary point in that plane. Conceptually, we are viewing $F$ as ``squashing'' the adversarial ball about $\x$ in input space into a random infinitely thin and infinitely wide $n_3$-dimensional pancake in feature space.  The adversary is given access to everything including the algorithm's classification function, $F$, $\x$, $\cS$ and the true label of $\x$. Throughout the paper, we will use \emph{adversary} and \emph{adversarial example} to refer to this threat model.

\subsection{Discussion and Examples}
As noted above, we are viewing the network as conceptually squashing the ball about $\x$ in input space into a random infinitely wide $n_3$-dimensional pancake in feature space. Of course, in a real network there would be some limit on the magnitude of a perturbation in feature space, and the available directions wouldn't exactly form a subspace. However, we believe this is a clean theoretical model worthy of understanding for insight.  Also, it is interesting to note that our negative results for non-abstaining classifiers, such as Theorem \ref{theorem: negative results}, apply even if $n_3=1$, whereas our positive results for classifiers that can abstain, such as Theorem \ref{theorem: positive results}, apply even if $n_3\geq n_1$ so long as $n_3 \ll n_2$.  

\smallskip

\noindent\textit{An example of a non-Lipschitz mapping.}  While our threat model is intended to be an abstraction, here is an example of a concrete non-Lipschitz mapping captured by our model.  Let us say the support of the natural data distribution $\cD$ only includes points with integer coefficients (data is in $\mathbb{R}^d$, but all natural points have integer coordinates). Assume the adversary can move points in the input space within an $\ell_2$ ball of radius 1/4. Now, for a point $x$, let us define $\text{frac}(x)$ to be its fractional part and $\text{int}(x)$ to be its integral part.  So if $d=2$ and $x = (1.1, 2.3)$ then frac$(x) = (0.1, 0.3)$ and $\text{int}(x)=(1,2)$.  If $x = (0.9, 0.8)$ then frac$(x) = (-0.1, -0.2)$ and int$(x)=(1,1)$. Now, let us say the network maps a point $x$ to $F(x)=x + \langle w,\text{frac}(x)\rangle w$, where $w$ is a large random vector (chosen independently at random for each lattice point int$(x)$). Then, all natural data will stay where they are (this is the identity mapping on natural data), but points in the adversarial ball can move very far in the direction of their $w$. So, if the true decision boundary is, say $x_1 \geq 1/2$, the adversary will not change the true label of any data point but (in the limit as $|w| \rightarrow\infty$) will be able to defeat any non-abstaining classifier in the feature space by Theorem \ref{theorem: negative results}.

\smallskip

\noindent\textit{Additional remarks.}
We wish to be clear that our intent is not to create a threat model against which one would design a new network architecture or training procedure.  Instead, we are thinking of a network that has already been trained (say, using adversarial training or any of the other available methods that try to improve robustness). But, the designers are finding that the adversarial loss is unacceptably high, because for many test points, the adversary can still move those points a large distance in feature space and cross over their decision boundary (even if the natural data of different classes are well-separated in feature space).  Our framework is aimed to consider this setting, and our results provide a practical suggestion: modify the final level to allow it to abstain if a test point is ``too different'' from the training data.  The justification is that if the adversary can move large distances but not in every possible direction (if it can do that, then no defense will work) and indeed only do so in random lower-dimensional subspaces, then we can provide theoretical guarantees for this approach. Moreover, our lower bounds show that abstention is {\em necessary} no matter how nicely distributed the data may be.  In fact, it is necessary even if the adversary can move points arbitrarily large distances in feature space even in just a single random direction.

\section{Negative Results without an Ability to Abstain}
\label{sec:negative}

We now present a hardness result showing that no matter how nicely data is distributed in feature space (for example, even if the network perfectly clusters data by label in feature space), any classifier that is not allowed to abstain will fail against our threat model even for an adversary that can perturb points in a single random direction ($n_3=1$).

\begin{theorem}
\label{theorem: negative results}
For any classifier that partitions $\R^{n_2}$ into two or more classes, any data distribution $\cD$, any $\delta > 0$ and any feature embedding $F$, there must exist at least one class $y^*$, such that for at least a $1-\delta$ probability mass of examples $\x$ from class $y^*$ (that is, $\x$ is drawn from $\cD_{\cX|y^*}$), for a random unit-length vector $\v$, with probability at least $1/2 - \delta$ for some $\Delta_0>0$, $F(\x) + \Delta_0 \v$ is not labeled $y^*$ by the classifier. In other words, there must be at least one class $y^*$ such that for at least $1-\delta$ probability mass of points $\x$ of class $y^*$, the adversary wins with probability at least $1/2-\delta$.
\end{theorem}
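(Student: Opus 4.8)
The plan is to view the classifier as a partition of $\R^{n_2}$ into regions $R_c := \{\z \in \R^{n_2} : \text{the classifier labels }\z\text{ as }c\}$, one per class, and to reduce the statement to a geometric claim about these regions. Since $\v$ is a uniformly random unit vector, $F(\x)+\delta_0\v$ is uniformly distributed on the sphere $S_{\delta_0}(F(\x))$ of radius $\delta_0$ about $F(\x)$, so $\Pr_\v[F(\x)+\delta_0\v \text{ is labeled } y^*]$ equals the fraction of $S_{\delta_0}(F(\x))$ lying in $R_{y^*}$. Integrating this fraction over $\delta_0 \in [a,b]$ against the surface element $r^{n_2-1}dr$ recovers, by the coarea formula, the volume density of $R_{y^*}$ inside the shell $\{\z : a \le \|\z-F(\x)\| \le b\}$. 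Hence it suffices to find a class $y^*$ such that for all but a $\delta$-fraction of $\x \sim \cD_{\cX|y^*}$ there is a shell around $F(\x)$ in which $R_{y^*}$ occupies at most a $\tfrac12+O(\delta)$ fraction of the volume: picking $\delta_0$ to be a radius in that shell on which the sphere-fraction is no larger than the shell average then does the job (note $\delta_0$ is allowed to depend on $\x$).

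Choosing $y^*$ is a pigeonhole argument ``at infinity''. For every radius $L$ we have $\sum_c \mathrm{dens}(R_c, B_L(\0)) = 1$, writing $B_L(\0)$ for the origin-centered ball and $\mathrm{dens}(A,B) := \mathrm{vol}(A\cap B)/\mathrm{vol}(B)$. Applying $\liminf_L a_L + \liminf_L b_L \le \liminf_L (a_L+b_L) \le 1$ to the densities of any two distinct classes (at least two exist), it cannot be that every class $c$ has $\liminf_{L\to\infty}\mathrm{dens}(R_c,B_L(\0)) > \tfrac12$; fix $y^*$ to be a class with $\liminf_{L\to\infty}\mathrm{dens}(R_{y^*},B_L(\0)) \le \tfrac12$. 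This choice depends only on the classifier, not on $F$ or $\cD$.

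To finish, choose $M$ with $\Pr_{\x\sim\cD_{\cX|y^*}}[\|F(\x)\|>M] \le \delta$, which exists for any distribution, and fix $\x$ with $p := F(\x)$ satisfying $\|p\|\le M$. For large $L$ the shell $B_L(p)\setminus B_{M+1}(p)$ differs from an origin-centered shell only by a translation of magnitude at most $M$ together with removal of a fixed inner ball, so $\mathrm{dens}(R_{y^*}, B_L(p)\setminus B_{M+1}(p)) \le (1+o_L(1))\,\mathrm{dens}(R_{y^*}, B_{L+M}(\0))$. Taking $L$ along a subsequence on which the last density is $\le \tfrac12+\delta$ (possible by the choice of $y^*$) and large enough that the $o_L(1)$ term is $\le\delta$, the average over $r\in[M+1,L]$ (against $r^{n_2-1}dr$) of $\Pr_\v[p+r\v \text{ is labeled } y^*]$ is at most $\tfrac12+O(\delta)$; hence some $\delta_0\in[M+1,L]$ has $\Pr_\v[p+\delta_0\v \text{ is labeled } y^*] \le \tfrac12+O(\delta)$, i.e.\ the adversary wins with probability $\ge\tfrac12-O(\delta)$. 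Running this with $\delta$ rescaled by a constant yields the claimed bound. The argument never uses that $F(\x)$ is correctly classified: if $F(\x)$ already lies in the interior of some $R_c$ with $c\ne y^*$, nearly every small perturbation keeps it there, which only helps.

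The step I expect to be most delicate is the passage from a single sphere at radius $\delta_0$ to the surrounding solid shell, and in particular the quantitative control of shell-volume ratios once the sphere is centered at $F(\x)$ rather than at the origin — this is exactly what forces the surface-area weighting $r^{n_2-1}dr$ and the requirement that $L$ be large relative to $M$ and $n_2$. Secondary points are a measurability assumption on the classifier so that the areas, volumes, and probabilities above are defined, and bookkeeping of the several constant-factor losses in $\delta$ (e.g.\ carrying $\delta/3$ through the intermediate bounds). The pigeonhole step needs only the existence of two classes, so it is insensitive to whether there are finitely or infinitely many.
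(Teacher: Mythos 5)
Your proof is correct, and its conclusion is in fact slightly stronger than what the paper establishes, but the route differs enough to be worth comparing. Both arguments rest on the same pigeonhole: some class $y^*$ occupies at most half the volume of a sufficiently large region around the data, and for a random direction the perturbed point behaves like a random point of that region. The paper executes this with a single ball $\cB$ of explicit radius $R = r_\delta\sqrt{n_2}/\delta$ centered at the origin: it picks $y^*$ with $\vol_{y^*}(\cB)/\vol(\cB)\le 1/2$, transfers this density bound to the ball $\cB_x$ centered at $F(\x)$ via a ball-overlap estimate, and couples the random unit vector with a uniform point $\z_x\in\cB_x$ through $\v=(\z_x-\x)/\|\z_x-\x\|_2$; as a result the witness magnitude $\delta_0=\|\z_x-\x\|_2$ depends on $\v$. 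You instead select $y^*$ by a $\liminf$ of densities at infinity (so your $y^*$ depends only on the classifier, not on $F$, $\cD$, or $\delta$), work in a thick annulus $[M+1,L]$ with $L$ so large that the off-center translation by at most $M$ and the excised inner ball are negligible, and average the sphere fractions over radii to extract a single $\delta_0$, depending on $\x$ but not on $\v$, at which the sphere of that radius is at most $\tfrac12+O(\delta)$ inside the region of $y^*$. This buys a fixed-radius attack and a choice of $y^*$ intrinsic to the classifier, and it avoids the dimension-dependent overlap computation; the cost is a non-constructive subsequence of radii, whereas the paper's argument is fully quantitative and shorter. Both proofs share the same implicit measurability assumption on the decision regions, and your bookkeeping of the constant-factor losses in $\delta$ is routine.
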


\begin{proof}
~Define $r_\delta$ to be a radius such that in the feature space, for every class $y$, at least a $1-\delta$ probability mass of examples $F(\x)$ of class $y$ lie within distance $r_\delta$ of the origin.  Define $R$ such that for a ball of radius $R$, if we move the ball by a distance $r_\delta$, at least a $1-\delta$ fraction of the volume of the new ball is inside the intersection with the old ball. 
Now, let $\cB$ be the ball of radius $R$ centered at the origin in feature space.  Let $\vol(\cB)$ denote the volume of $\cB$ and let $\vol_y(\cB)$ denote the volume of the subset of $\cB$ that is assigned label $y$ by the classifier.  Let $y^*$ be any label such that $\vol_{y^*}(\cB) / \vol(\cB) \leq 1/2$. 
Now by the definition of $y^*$, a point $\z$ picked uniformly at random from $\cB$ has probability at least $1/2$ of being classified differently from $y^*$.  This implies that, by the definition of $R$, if $F(\x)$ is within distance $r_\delta$ of the origin, then a point $\z_x$ that is picked uniformly at random in the ball $\cB_x$ of radius $R$ centered at $F(\x)$ has probability at least $1/2 - \delta$ of being classified differently from $y^*$.  This immediately implies that if we choose a random unit-length vector $\v$, then with probability at least $1/2 - \delta$, there exists $\Delta_0>0$ such that $F(\x) + \Delta_0 \v$ is classified differently from $y^*$, since we can think of choosing $\v$ by first sampling $\z_x$ from $\cB_x$ and then defining $\v = (\z_x - F(\x))/\|\z_x - F(\x)\|_2$.  Moreover, since the classifier has no abstention region, being classified differently from $y^*$ implies a win by the adversary. So, the theorem follows from the fact that, by the definition of $r_\delta$, at least $1-\delta$ probability mass of examples $F(\x)$ from class $y^*$ are within distance $r_\delta$ of the origin in feature space.
\end{proof}

\noindent We remark that our lower bound applies to any classifier and exploits the fact that a classifier without abstention must label the entire feature space. For a simple linear decision boundary, a perturbation in any direction (except parallel to the boundary) can cross the boundary with an appropriate magnitude (Figure \ref{figure: lb}, mid). The left and right figures show that if we try to `bend' the decision boundary to `protect' one of the classes, the other class is still vulnerable. Our argument formalizes and generalizes this intuition, and shows that there must be at least one vulnerable class irrespective of how you may try to shape the class boundaries, where the adversary succeeds in a large fraction of directions.
\begin{figure}[t]
\centering

\includegraphics[scale=0.65]{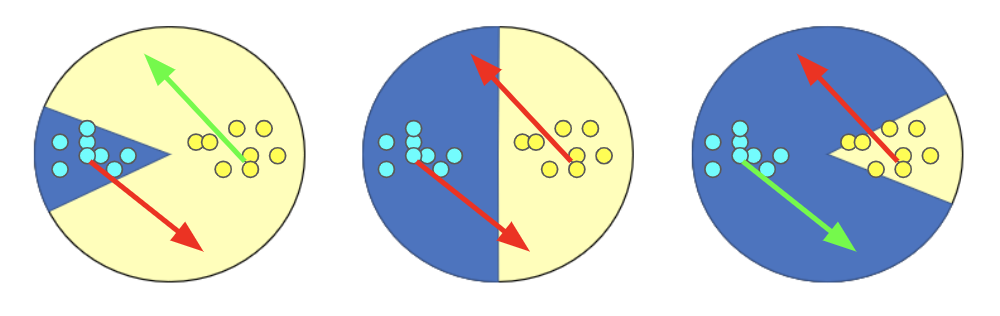}
\caption{A simple example to illustrate Theorem \ref{theorem: negative results}. Bending the decision boundary to avoid adversarial examples for one class makes it harder to defend the other class.\label{figure: lb}}

\end{figure}

\subsection{Comparison to Coordinate-wise Perturbations}\label{sec:comparison-coordinatewise}
It is interesting to compare our model to one in which the adversary can make only {\em coordinate-wise} perturbations in the feature space.  An adversary that can only make coordinate-wise changes would, in contrast, {\em not} be powerful enough to defeat any non-abstaining classifier.  For example, consider data in $\R^3$ where all the positive examples are at location $(1,1,1)$ and all the negative examples are at location $(-1,-1,-1)$ in feature space.  Then so long as the classifier partitions the space such that the lines $(1,1,\cdot)$, $(1, \cdot, 1)$, and $(\cdot, 1, 1)$ are positive and $(-1,-1,\cdot)$, $(-1,\cdot,-1)$, and $(\cdot, -1,-1)$ are negative, the adversary will not be able to defeat it with a single coordinate-wise change.  (We need to use $\R^3$ here rather than $\R^2$, because in $\R^2$ these lines would cross and so the classifier would not be well-defined). In contrast, by Theorem \ref{theorem: negative results}, an adversary that can perturb in a uniformly-random direction {\em will} defeat any non-abstaining classifier.

\section{Positive Results with an Ability to Abstain}\label{sec: positive results}

Theorem \ref{theorem: negative results} gives a hardness result for robust classification without abstention. In this section, we give positive results for a nearest-neighbor style classifier that has the power to abstain.

Given a test instance $\x\sim\cD_\cX$,
recall that the adversary is allowed to corrupt $F(\x)$ with an arbitrarily large perturbation in a uniformly-distributed subspace $S$ of dimension $n_3$. Consider the prediction rule that classifies the unseen example $F(\x)\in\R^{n_2}$ with the class of its nearest training example provided that the distance between them is at most $\tau$; otherwise the algorithm outputs ``don't know'' (see Algorithm \ref{algorithm: robust separated inference-time classifier} when $\sigma=0$). 
The threshold parameter $\tau$ trades off robustness against abstention rate; when $\tau\rightarrow\infty$, our algorithm is equivalent to the nearest-neighbor algorithm. Note that Algorithm \ref{algorithm: robust separated inference-time classifier} also contains a parameter $\sigma$ to remove training points that are too close to other training points of a different class---we will consider non-zero values of this parameter later. 

\begin{algorithm}[ht]
\caption{$\textsc{RobustClassifier}(\tau, \sigma)$}
\label{algorithm: robust separated inference-time classifier}
\begin{algorithmic}[1]
\STATE {\bfseries Input:} A test example $F(\x)$ (potentially an adversarial example), a set of training examples $F(\x_i)$ and their labels $y_i$, $i\in[m]$, a threshold parameter $\tau$, a separation parameter $\sigma$.
\STATE{\bfseries Preprocessing:} Delete training examples $F(\x_i)$ if $\min_{j\in[m],y_i\ne y_j}\dist(F(\x_i),F(\x_j))<\sigma$.
\STATE {\bfseries Output:} A predicted label of $F(\x)$, or ``don't know''.
\IF{$\min_{i\in[m]} \dist(F(\x),F(\x_i))<\tau$}
\STATE{\textbf{return} $y_{\argmin_{i\in[m]} \dist(F(\x),F(\x_i))}$;}
\ELSE
\STATE{\textbf{return} ``don't know''.}
\ENDIF
\end{algorithmic}
\end{algorithm}

\noindent Denote by $\cE_\adv^{\x}(f):=\mathbb{E}_{S\sim\cS}\1\{\exists \e\in S+F(\x) \text{ s.t. } f(\e)\ne\y \text{ and $f(\e)$ does not abstain}\}$ the robust error of a given classifier $f$ for classifying instance $\x$.
Our analysis leads to the following positive results on this algorithm. This theorem states that so long as the threshold $\tau$ is sufficiently small compared to the distance $r$ between the test point $\x$ and the nearest training point $\x_i$ of a different class (see Figure \ref{figure: meta-algorithm}), and the dimension $n_2$ of the ambient feature space is sufficiently large compared to the dimension $n_3$ of the adversarial subspace $S$, the algorithm will have low robust error on $\x$.

\begin{figure}[ht]
\center
\includegraphics[scale=0.65]{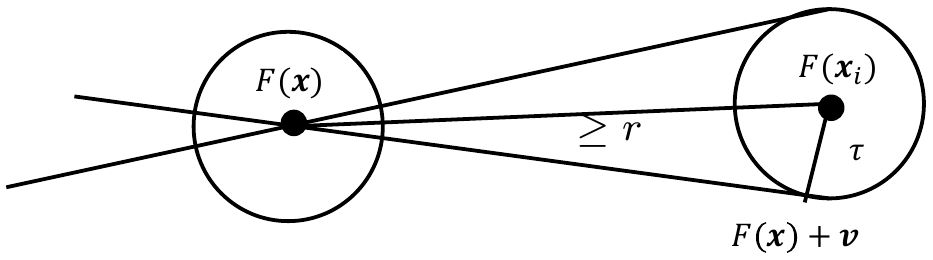}
\caption{Adversarial misclassification for nearest-neighbor predictor. Here $F(\mathbf{x})$ is the test point and $F(\mathbf{x}_i)$ is a training point from a different class. For $n_3=1$, the adversary succeeds for the directions inside the depicted cone around $F(\mathbf{x}_i)$.}
\label{figure: meta-algorithm}
\end{figure}

\begin{theorem}
\label{theorem: positive results}
Let $\x\sim\cD_\cX$ be a test instance, $m$ be the number of training examples and $r$ be the shortest distance between $F(\x)$ and $F(\x_i)$ where $\x_i$ is a training point from a different class. Suppose  $\tau=o\left(r\sqrt{1-\frac{n_3}{n_2}}\right)$. The robust error of Algorithm \ref{algorithm: robust separated inference-time classifier}, $\cE_\adv^{\x}(\textsc{RobustClassifier}(\tau,0))$, is at most $$m\Bigg(\frac{c\tau}{r\sqrt{1-\frac{n_3}{n_2}}}\Bigg)^{n_2-n_3}+mc_0^{n_2-n_3},$$ where $c>0$ and $0<c_0<1$ are absolute constants. For the case $n_3=1$, the robust error is at most $$m\left(\frac{\tau}{r}\right)^{n_2-1}.$$
\end{theorem}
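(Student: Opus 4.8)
The plan is to bound the robust error by a union bound over wrong-label training points, reduce each term to the lower tail of the squared norm of a fixed vector projected onto a uniformly random subspace, and estimate that tail through a $\chi^2$-ratio representation. \emph{Reduction:} fix the attack subspace $S$ and write $f=\textsc{RobustClassifier}(\tau,0)$. If the adversary wins with some $\e\in S+\{F(\x)\}$ — i.e.\ $f(\e)$ is neither ``don't know'' nor the true label $y$ — then the training point nearest to $\e$ has label $\ne y$ and lies within distance $\tau$ of $\e$, so some $F(\x_j)$ with $y_j\ne y$ satisfies $\dist\big(F(\x_j),\,S+\{F(\x)\}\big)<\tau$. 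Taking a union bound over the at most $m$ wrong-label training points,
\[
\cE_\adv^{\x}\big(\textsc{RobustClassifier}(\tau,0)\big)\ \le\ m\cdot\max_{j:\,y_j\ne y}\ \Pr_{S}\Big[\dist\big(F(\x_j),\,S+\{F(\x)\}\big)<\tau\Big].
\]

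\emph{Geometry and distribution.} With Euclidean distance and $\w=F(\x_j)-F(\x)$, the distance from $F(\x_j)$ to the affine subspace $S+\{F(\x)\}$ equals $\|\Pi_{S^\perp}\w\|$, where $\Pi_{S^\perp}$ is the orthogonal projection onto the $(n_2-n_3)$-dimensional complement of $S$, and $\|\w\|\ge r$ by definition of $r$. Since $S$, hence $S^\perp$, is uniformly distributed, rotational invariance gives $\|\Pi_{S^\perp}\w\|^2/\|\w\|^2\sim\mathrm{Beta}\big(\tfrac{n_2-n_3}{2},\tfrac{n_3}{2}\big)$, which I will realize as $\chi^2_{n_2-n_3}/(\chi^2_{n_2-n_3}+\chi^2_{n_3})$ with the two $\chi^2$ variables independent. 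Because $\|\w\|\ge r$, the event $\|\Pi_{S^\perp}\w\|<\tau$ is contained in $\{\chi^2_{n_2-n_3}<s\,\chi^2_{n_3}\}$ with $s=(\tau^2/r^2)/(1-\tau^2/r^2)\le 2\tau^2/r^2$ (valid once $\tau^2/r^2\le\tfrac12$, which the hypothesis forces).

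\emph{Tail estimate.} Conditioning on $\chi^2_{n_3}=u$ and using the Chernoff bound $\Pr[\chi^2_k\le x]\le(ex/k)^{k/2}$ gives $\Pr[\chi^2_{n_2-n_3}<su]\le\min\{1,(esu/(n_2-n_3))^{(n_2-n_3)/2}\}$. Split at $u_0=(n_2-n_3)/(es)$, the value where the power equals $1$. The contribution from $u<u_0$ is at most $\big(es/(n_2-n_3)\big)^{(n_2-n_3)/2}\,\E\big[(\chi^2_{n_3})^{(n_2-n_3)/2}\big]$; using the moment identity $\E[(\chi^2_{n_3})^p]=2^p\Gamma(n_3/2+p)/\Gamma(n_3/2)$ with the Gamma-ratio bound $\Gamma(x+p)/\Gamma(x)\le(x+p)^p$ to get $\E[(\chi^2_{n_3})^{(n_2-n_3)/2}]\le n_2^{(n_2-n_3)/2}$, this becomes $\big(esn_2/(n_2-n_3)\big)^{(n_2-n_3)/2}\le\big(c\tau/(r\sqrt{1-n_3/n_2})\big)^{n_2-n_3}$ with $c=\sqrt{2e}$ after substituting $s\le2\tau^2/r^2$. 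The contribution from $u\ge u_0$ is exactly $\Pr[\chi^2_{n_3}\ge u_0]$; here the assumption $\tau=o\big(r\sqrt{1-n_3/n_2}\big)$ makes $s$ small enough that $u_0\ge(n_2-n_3)/e$ while simultaneously $u_0$ exceeds a large multiple of $n_3$, so a crude $\chi^2$ upper-tail bound gives $\Pr[\chi^2_{n_3}\ge u_0]\le e^{-u_0/4}\le e^{-(n_2-n_3)/(4e)}=c_0^{\,n_2-n_3}$ for an absolute $c_0\in(0,1)$. Summing the two contributions and multiplying by $m$ yields the claimed bound.

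\emph{Main obstacle.} The delicate point is making the additive error term carry the exponent $n_2-n_3$ with an \emph{absolute} constant: it shows up naturally as $\Pr[\chi^2_{n_3}\ge u_0]$, which a first pass would bound by something like $c_0^{n_3}$. Re-expressing it with the correct exponent is exactly what the separation hypothesis buys — it forces the truncation point $u_0\approx(n_2-n_3)r^2/(e\tau^2)$ to be at once large relative to $n_3$ (so the tail bound is valid and tiny) and at least a constant fraction of $n_2-n_3$. Everything else — the union-bound reduction, the $\mathrm{Beta}$/$\chi^2$ identities, the Chernoff bound for $\chi^2$, and the Gamma-ratio inequality controlling the moment of $\chi^2_{n_3}$ — is routine; one should just double-check the edge cases where $n_2-n_3$ or $n_3$ is small (half-integer $\chi^2$ parameters), where all the inequalities used still hold.
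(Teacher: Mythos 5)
Your proof is correct, but it takes a genuinely different route from the paper's. The paper proves Theorem \ref{theorem: positive results} by a \emph{peeling argument}: it builds the random $n_3$-dimensional subspace one direction at a time, invokes a Random Projection Theorem at each of the $n_3$ stages to guarantee the projected inter-class distance stays $\Omega\bigl(r\sqrt{(n_2-i)/n_2}\bigr)$ (the failures of these events produce the $mc_0^{n_2-n_3}$ term), and at each stage reuses a surface-area-ratio estimate for the one-dimensional case; the two sums over the $n_3$ stages are then collapsed into their dominant last terms. You instead go directly after the relevant random variable: the adversary wins against training point $F(\x_j)$ iff $\dist(F(\x_j),S+\{F(\x)\})=\|\Pi_{S^\perp}(F(\x_j)-F(\x))\|<\tau$, whose normalized square is exactly $\mathrm{Beta}\bigl(\tfrac{n_2-n_3}{2},\tfrac{n_3}{2}\bigr)$, and you bound its lower tail through the $\chi^2_{n_2-n_3}/(\chi^2_{n_2-n_3}+\chi^2_{n_3})$ representation, with the two terms of the theorem emerging from the two sides of a single truncation at $u_0=(n_2-n_3)/(es)$. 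Your route is arguably cleaner and more self-contained: it avoids the heuristic ``roughly the ratio of surface areas'' step and the iterated conditioning of the peeling construction, yields explicit constants ($c=\sqrt{2e}$, $c_0=e^{-1/(4e)}$), and is in spirit closer to the paper's own Appendix~\ref{sec:upper bound beta function} (Theorem \ref{theorem: improved bound}), which also exploits the exact Beta-type geometry of a random subspace, than to the designated main-text proof. What the paper's peeling approach buys in exchange is a construction that makes the role of each added adversarial direction transparent and that generalizes naturally to the conditional/sequential variants used elsewhere in the paper. Two minor points to tighten in a final write-up: state the upper-tail inequality for $\chi^2_{n_3}$ you invoke (e.g.\ $\Pr[\chi^2_k\ge x]\le (x/k)^{k/2}e^{(k-x)/2}\le e^{-x/8}$ for $x\ge Ck$, which your verified condition $u_0=\omega(n_2)\ge\omega(n_3)$ licenses), and replace ``exactly'' by ``at most'' for the $u\ge u_0$ contribution, since you are dropping the factor $\min\{1,\cdot\}\le 1$ there.
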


\begin{proof}
We begin our analysis with the case of $n_3=1$. Suppose we have a training example $\x'$ of another class, and suppose $F(\x)$ and $F(\x')$ are at distance $D$ in the feature space. That is, $\dist(F(\x),F(\x'))=D$. Because $\tau=o\left(D\right)$, the probability that the adversary can move $F(\x)$ to within distance $\tau$ of $F(\x')$ is at most the ratio of the surface area of a sphere of radius $\tau$ to the surface area of a sphere of radius $D$, which is at most $$\left(\frac{\tau}{D}\right)^{n_2-1}\le \left(\frac{\tau}{r}\right)^{n_2-1}$$ if the feature space is $n_2$-dimensional.  See Figure \ref{figure: meta-algorithm}.

The analysis for the case of general values of $n_3$ follows from a peeling argument.  For this, we need the following Random Projection Theorem  \citep{dasgupta2003elementary,vempala2005random}. 
\begin{lemma}[Random Projection Theorem]
\label{lemma: Random Projection Theorem}
Let $\z$ be a fixed unit length vector in $d$-dimensional space and $\widetilde \z$ be the projection of $\z$ onto a random $k$-dimensional subspace. For $0<\delta<1$,
\begin{equation*}
\Pr\left[\left|\|\widetilde \z\|_2^2-\frac{k}{d}\right|\ge \delta \frac{k}{d}\right]\le e^{-\frac{k(\delta^2\blue{-\delta^3})}{4}}.
\end{equation*}
\end{lemma}
Without loss of generality, we assume $F(\x)=\0$ in $\R^{n_2}$. Next, note that the random subspace in which the adversary vector is restricted to lie can be constructed by the following sampling scheme: we first sample a vector $\v_1$ uniformly at random from a unit sphere in the ambient space $\R^{n_2}$ centered at $\0$; fixing $\v_1$, we then sample a vector $\v_2$ uniformly at random from a unit sphere in the nullspace of $\spann\{\v_1\}$; we repeat this procedure $n_3$ times and let $\spann\{\v_1,\v_2,...,\v_{n_3}\}$ be the desired subspace. Note that the sampling scheme satisfies the random adversary model. For the fixed nullspace $\Null(\spann\{\v_1,...,\v_i\})$ of dimension $n_2-i$, according to the analysis of the case $n_3=1$, if we condition on the distance $D_i$ between $F(\x)$ and $F(\x')$ when they are projected to $\Null(\spann\{\v_1,...,\v_i\})$, the probability over the draw of $\v_{i+1}$ of failure with respect to $\x'$ is at most $(\cO(\tau/D_i))^{n_2-i-1}$. We also note that $\Null(\spann\{\v_1,...,\v_i\})$ is a random subspace of dimension $n_2-i$. Thus by Lemma \ref{lemma: Random Projection Theorem} (with constant $\delta$), we have $D_i\ge Cr \sqrt{\frac{n_2-i}{n_2}}$ with probability at least $1-e^{-c'(n_2-i)}$, where $C,c'>0$ are absolute constants.
Therefore, by the union bound over the choice of $n_3$ nullspaces and the failure probability of the event $D_i\ge Cr\sqrt{\frac{n_2-i}{n_2}}$, the failure probability of the algorithm over $x'$ is at most
\begin{equation*}
\sum_{i=1}^{n_3}e^{-c'(n_2-i)}+\sum_{i=1}^{n_3}\left(\cO\left(\frac{\tau}{Cr\sqrt{\frac{n_2-i}{n_2}}}\right)\right)^{n_2-i}\le c_0^{n_2-n_3}+\left(\frac{c\tau}{r\sqrt{\frac{n_2-n_3}{n_2}}}\right)^{n_2-n_3}.
\end{equation*}
By the union bound over all $m$ training data points $\x'$ completes the proof.
\end{proof}

\comment{
\subsection{Improved bound for $n_3=n_2-1$}
\begin{theorem}
Given a test instance $\x$ and let $m$ be the number of training data. Suppose that $\tau=o\left(r\sqrt{\frac{1}{n_2}}\right)$, and $n_3=n_2-1$. The failure probability of Algorithm \ref{algorithm: robust inference-time classifier} for classifying $\x$ is at most $m\frac{c\tau\sqrt{n_2}}{r}$ for an absolute constants $c>0$.
\end{theorem}

\begin{proof}
Without loss of generality, we assume $F(x)=0$ and $r=1$. We consider the threat model where the $n_3$-dimensional adversary subspace is fixed, denoted by $\cA:=\{x:x_1=0\}$, and the vector $F(x')$ distributes uniformly at random on the sphere of $\R^{n_2}$. Note that this threat model is equivalent to the threat model where the vector $F(x')$ is fixed and the adversary subspace is random. Thus the algorithm fails if and only if $F(x')$ is within distance $\tau$ to the adversary subspace, that is, $F(x')\in \cS:=\{x\in \R^{n_2}:\|x\|_2=r,d(x,\cA)\le\tau,x_1\ge 0\}$.

We now bound $\Pr[f(x')\in\cS]$. Let $A(d)$ be the surface area of a unit sphere in $\R^{d}$. Note that
\begin{equation*}
\begin{split}
\Area(\cS)&=\int_{0}^\tau (1-x_1^2)^{\frac{n_2-2}{2}}A(n_2-1)dx_1\\
&\le A(n_2-1)\int_{0}^\tau e^{-\frac{n_2-2}{2}x_1^2}dx_1 \quad\text{(since $1+x\le e^x$ for all $x$)}\\
&=A(n_2-1) \int_{0}^\tau \left(1-\frac{n_2-2}{2}x_1^2+...\right) dx_1 \quad\text{(by Taylor expansion of $e^x$)}\\
&=A(n_2-1) \tau+A(n_2-1) o(\tau).\quad\text{(since $n_2=o(1/\tau^2)$)}
\end{split}
\end{equation*}
To upper bound $\Pr[f(x')\in\cS]$, we need to lower bound $A(n_2)/2$ (the surface area of upper semi-sphere) in term of $A(n_2-1)$. Clearly, the surface area of upper semi-sphere is larger than the surface area of the side of a $n_2$-dimensional cylinder of height $\frac{1}{\sqrt{n_2-2}}$ and radius $\sqrt{1-\frac{1}{n_2-2}}$. The surface area of the cylinder is $\frac{1}{\sqrt{n_2-2}}$ times the circumference area of the $n_2$-dimensional cylinder of radius $\sqrt{1-\frac{1}{n_2-2}}$, which is $$\frac{1}{\sqrt{n_2-2}}A(n_2-1)\left(1-\frac{1}{n_2-2}\right)^\frac{n_2-2}{2}\ge \frac{1}{\sqrt{n_2-2}}A(n_2-1)\left(1-\frac{1}{n_2-2}\frac{n_2-2}{2}\right)=\frac{1}{2\sqrt{n_2-2}}A(n_2-1).$$ Therefore, $\Pr[f(x')\in\cS]\le \frac{\Area(\cS)}{A(n_2)/2}= \cO(\tau\sqrt{n_2})$.
\end{proof}
}

\noindent Theorem \ref{theorem: positive results} states that the robust error of Algorithm \ref{algorithm: robust separated inference-time classifier} on a test point $\x$ will be small so long as its distance $r$ to its nearest training point $\x_i$ from a different class is sufficiently larger than $\tau$, and so long as the number of labeled examples $m$ is sub-exponential in $n_2-n_3$.  If $m$ is so large that a sphere of radius $r$ about point $\x$ can be covered by $m$ balls of radius $\tau$, then the adversary could indeed win, because any ray extending from $\x$ will pierce one of these balls.   One simple way to address this would be that if size of the labeled sample really is exponential in $n_2-n_3$, then to just use a sub-exponentially large random subsample of it.\footnote{This observation shows that nearest-neighbor is not an optimal algorithm when the number of examples is exponential in the dimension.  In the case of very large $m$, one could instead use an algorithm that estimated densities in each part of space.} We also prove the following  asymptotic improvement over Theorem \ref{theorem: positive results} for fixed $n_3$ and large $n_2$ via a tighter bound on the probability mass of the region of adversarial success.

\begin{restatable}{theorem}{thmimprovedbound}\label{theorem: improved bound}
If $\tau=o(r)$, the robust error $\cE_\adv^{\x}(f)$ of $\textsc{RobustClassifier}(\tau,0)$ in Algorithm \ref{algorithm: robust separated inference-time classifier} for classifying $\x$ is at most $\mathcal{O}\left(\frac{m}{n_2-n_3}\left(\frac{\tau}{r}\right)^{n_2-n_3}\frac{1}{B(n_3/2,(n_2-n_3)/{2})}\right)$, where $B(\cdot,\cdot)$ is the Beta function. \blue{The Beta function is given by $B(r_1,r_2)=\int_0^1t^{r_1-1}(1-t)^{r_2-1}dt$, for $r_1,r_2\in\R^+$, and is closely related to binomial coefficients.}
\end{restatable}

\begin{proof}
We drop $F(\cdot)$ from the notation for simplicity. Let $\x$ be the origin. Let $\x'$ be a training point of another class, and $R$ be a random $n_3$-dimensional linear subspace. Scale all distances by a factor of $\frac{1}{\dist(\x,\x')}$. By rotational symmetry, we assume WLOG that $R$ is given by $x_{n_3+1}=x_{n_3+2}=\dots=x_{n_2}=0$, and $\x'$ is the uniformly random unit vector $(z_1,\dots,z_{n_2})$. Indeed, for a fixed direction from $\x$, the set of subspaces for which the projection of $\x'$ lies along that direction is constrained by one vector each in the range space and kernel space, and is therefore in bijection to the set of subspaces associated with another fixed direction (Figure \ref{fig:rotational symmetry}).

\begin{figure}[t]
\center

\tikzset{every picture/.style={line width=0.75pt}} 

\begin{tikzpicture}[x=0.75pt,y=0.75pt,yscale=-1,xscale=1]

\draw  [dash pattern={on 0.84pt off 2.51pt}] (322.5,323.69) .. controls (326.77,321.82) and (327.97,318.74) .. (326.1,314.47) -- (320.43,301.53) .. controls (317.75,295.43) and (318.55,291.44) .. (322.82,289.56) .. controls (318.55,291.44) and (315.07,289.33) .. (312.4,283.22)(313.6,285.97) -- (306.72,270.29) .. controls (304.85,266.02) and (301.77,264.82) .. (297.5,266.69) ;
\draw    (114,339) -- (325.79,238.97) ;
\draw [shift={(328.5,237.69)}, rotate = 514.72] [fill={rgb, 255:red, 0; green, 0; blue, 0 }  ][line width=0.08]  [draw opacity=0] (8.93,-4.29) -- (0,0) -- (8.93,4.29) -- cycle    ;
\draw [shift={(114,339)}, rotate = 334.72] [color={rgb, 255:red, 0; green, 0; blue, 0 }  ][fill={rgb, 255:red, 0; green, 0; blue, 0 }  ][line width=0.75]      (0, 0) circle [x radius= 3.35, y radius= 3.35]   ;
\draw    (114,339) -- (320.5,337.69) ;
\draw [shift={(320.5,337.69)}, rotate = 359.64] [color={rgb, 255:red, 0; green, 0; blue, 0 }  ][fill={rgb, 255:red, 0; green, 0; blue, 0 }  ][line width=0.75]      (0, 0) circle [x radius= 3.35, y radius= 3.35]   ;
\draw    (291.5,256.69) -- (320.5,337.69) ;
\draw [shift={(320.5,337.69)}, rotate = 70.3] [color={rgb, 255:red, 0; green, 0; blue, 0 }  ][fill={rgb, 255:red, 0; green, 0; blue, 0 }  ][line width=0.75]      (0, 0) circle [x radius= 3.35, y radius= 3.35]   ;
\draw [shift={(291.5,256.69)}, rotate = 70.3] [color={rgb, 255:red, 0; green, 0; blue, 0 }  ][fill={rgb, 255:red, 0; green, 0; blue, 0 }  ][line width=0.75]      (0, 0) circle [x radius= 3.35, y radius= 3.35]   ;
\draw    (324.5,290.69) .. controls (337.37,297.62) and (348.28,302.59) .. (387.31,273.58) ;
\draw [shift={(388.5,272.69)}, rotate = 503.13] [color={rgb, 255:red, 0; green, 0; blue, 0 }  ][line width=0.75]    (10.93,-3.29) .. controls (6.95,-1.4) and (3.31,-0.3) .. (0,0) .. controls (3.31,0.3) and (6.95,1.4) .. (10.93,3.29)   ;
\draw  [dash pattern={on 0.84pt off 2.51pt}] (258.5,260.69) .. controls (256.43,256.51) and (253.3,255.46) .. (249.12,257.53) -- (195.69,284.03) .. controls (189.72,286.99) and (185.69,286.38) .. (183.62,282.2) .. controls (185.69,286.38) and (183.74,289.95) .. (177.77,292.91)(180.46,291.58) -- (138.66,312.31) .. controls (134.48,314.38) and (133.43,317.51) .. (135.5,321.69) ;
\draw    (186.5,274.69) .. controls (166.24,285.3) and (164.59,271.71) .. (161.81,254.57) ;
\draw [shift={(161.5,252.69)}, rotate = 440.54] [color={rgb, 255:red, 0; green, 0; blue, 0 }  ][line width=0.75]    (10.93,-3.29) .. controls (6.95,-1.4) and (3.31,-0.3) .. (0,0) .. controls (3.31,0.3) and (6.95,1.4) .. (10.93,3.29)   ;
\draw    (274.5,266.69) .. controls (283.5,289.69) and (272.5,284.69) .. (295.5,275.69) ;

\draw (90,338) node [anchor=north west][inner sep=0.75pt]   [align=left] {$\displaystyle \x$};
\draw (331,337) node [anchor=north west][inner sep=0.75pt]   [align=left] {$\displaystyle \x'$};
\draw (368,253) node [anchor=north west][inner sep=0.75pt]   [align=left] {in kernel space};
\draw (118,235) node [anchor=north west][inner sep=0.75pt]   [align=left] {in range space};
\draw (331,216) node [anchor=north west][inner sep=0.75pt]   [align=left] {$\displaystyle \y$};
\draw (264,231) node [anchor=north west][inner sep=0.75pt]   [align=left] {Proj$\displaystyle [ \x']$};

\end{tikzpicture}
\caption{Rotational symmetry of adversarial subspaces. Let $\y$ be a random direction from test point $\x$, and Proj$[\x']$ be the projection of training point $\x'$ on to $\x\y$. For any adversarial space with Proj$[\x']$ as the projection of $\x'$ on the space, we must have $\x\y$ in the range space and $\x'$Proj$[\x']$ in the nullspace.\label{fig:rotational symmetry}}
\end{figure}

The adversary can win only if the distance between $\x'$ with the closest vector Proj$[\x']$ in $R$, that is with $(z_1,\dots,z_{n_3},0,\dots,0)$, is at most $\frac{\tau}{\dist(\x,\x')}\le \frac{\tau}{r}$. We can now apply Lemma \ref{lem:hsphere} (Appendix \ref{sec:upper bound beta function}), which gives a bound on the fraction of the surface of the sphere at some fixed small distance from the orthogonal space, to get that the adversary succeeds by perturbing $\x$ to a point within $\cB(\x',\tau)$ with probability at most
$$\frac{2(\tau/r)^{n_2-n_3}}{n_2-n_3}\cdot\frac{A(n_2-n_3-1)A(n_3-1)}{A(n_2-1)}, $$
where $A(n)$ is the surface-area of the unit $n$-sphere embedded in $\R^{n+1}$. We have a closed form $A(n)=\frac{2\pi^{\frac{n+1}{2}}}{\Gamma\left(\frac{n+1}{2}\right)}$, where $\Gamma(z)=\int_{t=0}^\infty t^{z-1}e^{-t}dt$ is the gamma function.

Noting that $B(z_1,z_2)=\frac{\Gamma(z_1)\Gamma(z_2)}{\Gamma(z_1+z_2)}$, together with a union bound over all training points from a different class, gives the result. 
\end{proof}

\subsection{Outlier Removal and Improved Upper Bound}
The guarantees above are good when the test points are far from training points from other classes in the feature space. This empirically holds true for good data and perfect embeddings---a so-called neural collapse phenomenon that the trained network converges to representations such that all points of class $k$ get embedded close to a single point $\mu_k$ in the feature space~\citep{papyan2020prevalence}. But for noisy data and good-but-not-perfect embeddings, the condition may not hold. In Theorem \ref{theorem: robust accuracy with separation} (in Appendix \ref{app:outlier}) we show that we obtain almost the same upper bound on failure probability as above by exploiting the outlier removal threshold $\sigma$. Intuitively, outlier removal artificially induces well-separateness in the feature space, by deleting training examples that are close to other examples with a different label.

\subsection{Upper Bound on Abstention Rate on Natural Data}

Of course, the statement that robust error is low just means the adversary has a low probability of being able to create an error.  This is only half the picture: the other half is that we also want our algorithm to have a low probability of abstaining on natural data.  This is what we address in the next two sections, and it will require assumptions on how natural data is distributed.  In particular, we give two different sufficient conditions for having a low abstention rate on natural data: (1) that natural data is well-clustered in feature space (Section \ref{sec:well-clustered}), and alternatively (2) that the natural data has low {\em doubling dimension} (Section \ref{sec:doubling}).  For these results, we assume our $m$ training points $\x_1,\ldots,\x_m$ are i.i.d.~draws from distribution $\cD_{F(\cX)}$; if we also have additional training points used in the construction of $F$ (which, therefore, cannot be treated as i.i.d.~draws), this can only help.

\subsubsection{Low Abstention Rate for Well-clustered Data}
\label{sec:well-clustered}
We show here that if natural data has the property that for every label class, one can cover most of the probability mass of the class with not too many (potentially overlapping) balls of at least some minimal probability mass, then our algorithm will have a low abstention rate.  

\blue{
\begin{definition} 
\label{assumption: covering}
A distribution $\cD$ is $(\delta,\beta,N)$-coverable if at least a $1-\delta$ fraction of probability mass of the marginal distribution $\cD_{F(\cX)}$ over $\R^{n_2}$ can be covered by $N$ balls $\bbB_1$, $\bbB_2$, ... $\bbB_N$ of radius $\tau/2$ and of mass $\Pr_{\cD_{F(\cX)}}[\bbB_k]\ge \beta$.
\end{definition}

\noindent Intuitively, if a set of balls cover (most of) the distribution and we sample enough points from the distribution, we should get at least one sample from each ball and our algorithm will not abstain on the covered points. 
Formally, we show the following guarantee on the abstention rate on distributions that are $(\delta,\beta,N)$-coverable w.r.t. threshold $\tau$.

\begin{theorem}
\label{theorem: samples nearly cover all support}
Suppose that $F(\x_1),...,F(\x_m)$ are $m$ training instances i.i.d. sampled from marginal distribution $\cD_{F(\cX)}$. If the distribution $\cD$ is $(\delta,\beta,N)$-coverable, 
for sufficiently large $m\ge\frac{1}{\beta}\ln\frac{N}{\gamma}$, 
with probability at least $1-\gamma$ over the sampling, we have $\Pr[\cup_{i=1}^m \bbB(F(\x_i),\tau)]\ge 1-\delta$.
\end{theorem}

\begin{proof} Fix ball $\bbB_i$ in the cover from Definition \ref{assumption: covering}. Let $B_i$ denote the event that no point is drawn from ball $\bbB_i$ over the $m$ samples. Since successive draws are independent, and by Definition \ref{assumption: covering} $\Pr_{\cD_{F(\cX)}}[\bbB_i]\ge {\beta}$, we have that $\Pr[B_i]\le \left(1-{\beta}\right)^m\le \exp(-\beta m)$. Further, by a union bound over $N$ balls $\Pr[\cup_iB_i]\le N\exp(-\beta m)\le \gamma$, for $m\ge\frac{1}{\beta}\ln\frac{N}{\gamma}$.

Therefore, with probability at least $1-\gamma$ for all $k\in[N]$ there is at least a sample $F(\x_{i_k})\in\{F(\x_1),F(\x_2),...,F(\x_m)\}$ such that $F(\x_{i_k})\in\bbB_k$. This implies $\cup_{i=1}^m\bbB(F(\x_i),\tau)\supseteq \cup_{k=1}^N\bbB_k$, since $\bbB_k$ is a ball of radius $\tau/2$. So with probability at least $1-\gamma$ over the sampling, we have $\Pr[\cup_{i=1}^m\bbB(F(\x_i),\tau)]\ge \Pr[\cup_{k=1}^N\bbB_k]\ge 1-\delta.$
\end{proof}
Note that in the special case that the $N$ balls are disjoint and each has probability mass $\beta=1/N$, then $m = \Omega(N\log N)$ samples are also necessary to get a point inside each ball, by a standard coupon-collector analysis.

Theorem \ref{theorem: samples nearly cover all support} implies that if we have a covering with $N$ balls, each with probability mass at least ${\beta}{}$ and large enough sample size $m$, with probability at least $1-\gamma$ over the sampling, we have $\Pr[\cup_{i=1}^m \bbB(F(\x_i),\tau)]\ge 1-\delta$. Therefore, with high probability, the algorithm will output ``don't know'' only for a $\delta$ fraction of natural data.  Below we give an example of a distribution where our algorithm will simultaneously achieve low robust error and low natural abstention rates.

}

\smallskip
\blue{
\noindent\textit{Example distribution where Algorithm \ref{algorithm: robust separated inference-time classifier} is robust with low abstention rate.} Our example will consist of well-separated data in the feature space. Suppose $\cD_{F(\cX)|y}$ for each label class $y$ consists of the uniform distribution over $N_y$ $n_2$-balls of radius $\tau/2$ centered at axis-aligned unit vectors $\{\mathbf{e}_j\mid j\in S_y\}$, where $S_y\subset [n_2]$ is the set of axes with balls labeled by $y$, with $\tau<1/3$ and $S_{y}\cap S_{y'}=\emptyset$ for $y\ne y'$. Further let $m=n_2\log \frac{n_2}{\gamma}$ for some absolute constant $\gamma\in(0,1)$, so this distribution is $(\delta,\beta,N)$-coverable with $\delta=0$, $\beta=1/N$ and $N=n_2$.   If $n_3=1$, by Theorem \ref{theorem: positive results}, the robust error of Algorithm \ref{algorithm: robust separated inference-time classifier} is bounded by $O(n_2\log n_2\tau^{n_2-1}) = o(1)$. 
Thus, in this setting, our algorithm  enjoys low robust error without abstaining too much (for sufficiently large $n_2$).
}

\subsubsection{Controlling Abstention Rate via Doubling Dimension}
\label{sec:doubling}
Here, we give an alternative bound on the abstention rate on natural data based on the {\em doubling dimension} of the data distribution. Doubling dimension can be used to obtain sample complexity of generalization for learning problems \citep{bshouty2009using}. \blue{Bounded doubling dimension has also been used to give bounds on cluster quality for nearest-neighbor based extensions of clustering algorithms in the distributed learning setting \citep{dick2017data}. }  

\begin{definition}[Doubling dimension]
A measure $\cD_{F(\cX)}$ with support $F(\cX)$ is said to have a doubling dimension $d$, if for all points $F(\x)\in F(\cX)$ and all radii $\tau>0$, $\cD_{F(\cX)}(\cB(F(\x),2\tau))\le 2^d\cD_{F(\cX)}(\cB(F(\x),\tau))$.
\end{definition}

\noindent Given a sample $S$ and point $\x$, let $NN_S(F(\x))$ denote $\x$'s nearest neighbor in $S$ in feature space.  We now have the following theorem, which implies low abstention under the assumption of bounded doubling dimension, which we show implies that $\cD$ satisfies Definition \ref{assumption: covering} (for $\beta,N$ that depend on the doubling dimension).
\blue{
\begin{theorem}\label{thm:dd}
Suppose that the measure $\cD_{F(\cX)}$ in the feature space has a doubling dimension $d$. Let $D$ be the diameter of $F(\cX)$.
For any $\tau>0$ and any $\delta>0$, if we draw an i.i.d. sample $S$ of size $m\ge \left(\frac{2D}{\tau}\right)^{d}\left(d\log\frac{4D}{\tau}+\log\frac{1}{\delta}\right)$, then with probability at least $1-\delta$ over the draw of $S$, we have $\sup_{\x\in \cX}d(F(\x),NN_S(F(\x)))\le \tau$.
\end{theorem}

\begin{proof}
Lemma \ref{lemma: relating doubling dimension to covering number} (below) implies that there exists a covering of $F(\cX)$ of size $(4D/\tau)^d$ which consists of balls of radius $\tau/2$ around points $F(\x) \in F(\cX)$. Further Lemma \ref{lemma: probability of ball} implies that for a ball $B$ of radius $\tau/2$ around point $F(\x) \in F(\cX)$ we have $\cD_{F(\cX)}(B)\ge \left(\frac{\tau}{2D}\right)^{d}$. Thus Definition \ref{assumption: covering} is satisfied with $N=(4D/\tau)^d$ and $\beta=\left(\frac{\tau}{2D}\right)^{d}$. Theorem \ref{theorem: samples nearly cover all support} now implies the result.
\end{proof}
}

\noindent\blue{Our proof of Theorem \ref{thm:dd} relies on the following properties of a doubling measure. We first give a lower bound on the probability mass of a small ball in terms of the doubling dimension of the distribution.}

\begin{lemma}
\label{lemma: probability of ball}
Suppose that the measure $\cD_{F(\cX)}$ has a doubling dimension $d$. Let $D$ be the diameter of $F(\cX)$.
Then for any point $F(\x)\in F(\cX)$ and any radius of the form $\tau=D/2^T$ for $T\in\mathbb{N}$, we have $\cD_{F(\cX)}(\cB(F(\x),\tau))\ge (\tau/D)^d$.
\end{lemma}

\begin{proof}
Since $D$ is the diameter of $F(\cX)$, we have $\cD_{F(\cX)}(\cB(F(\x),D))=1$. Therefore, we have
\begin{equation*}
\begin{split}
\cD_{F(\cX)}(\cB(F(\x),\tau))&=\cD_{F(\cX)}(\cB(F(\x),D/2^T))\\
&\ge 2^{-d}\cdot \cD_{F(\cX)}(\cB(F(\x),D/2^{T-1}))\\
&\ge \cdots\\
&\ge 2^{-Td}\cdot \cD_{F(\cX)}(\cB(F(\x),D))\\
&=2^{-Td}\\
&=(\tau/D)^d.
\end{split}
\end{equation*}
\end{proof}

\noindent\blue{This further lets us bound the covering number in terms of the doubling dimension as follows.}

\begin{lemma}[Relating doubling dimension to covering number]
\label{lemma: relating doubling dimension to covering number}
Given any radius $\tau$ of the form $\tau=D/2^T$ for $T\in\mathbb{N}$, there is a covering of $F(\cX)$ using balls of radius $\tau$  around points $F(\x) \in F(\cX)$ of size no more than $(2D/\tau)^d$.
\end{lemma}

\begin{proof}
We construct the covering balls of $F(\cX)$ as follows: when there is a point $F(\x)\in F(\cX)$ which is not contained in any current covering ball of radius $\tau$, we add the ball $\cB(F(\x),\tau)$ to the cover. We follow this procedure until every point in $F(\cX)$ is covered by some covering balls. Denote by $\cC$ the set of centers for the balls in the cover.

We now show that this procedure stops after adding at most $(2D/\tau)^d$ balls to the cover. We note that by our construction, the centers of the covering are at least distance $\tau$ from each other, implying that the collection of $\cB(F(\x),\tau/2)$ for $F(\x)\in\cC$ are disjoint. This yields
\begin{equation*}
\begin{split}
1&\ge \cD_{F(\cX)}\left(\cup_{F(\x)\in\cC}\cB(F(\x),\tau/2)\right)\\
&=\sum_{F(\x)\in\cC} \cD_{F(\cX)}(\cB(F(\x),\tau/2))\quad\;\text{(since $\cB(F(\x),\tau/2)$ are disjoint)}\\
&\ge \sum_{F(\x)\in\cC} \left(\frac{\tau}{2D}\right)^d\qquad\qquad\qquad\quad\text{(by Lemma \ref{lemma: probability of ball})}\\
&= |\cC| \left(\frac{\tau}{2D}\right)^d.
\end{split}
\end{equation*}
So we have $|\cC|\le (2D/\tau)^d$.
\end{proof}

\subsection{A More General Adversary with Bounded Density}\label{sec:bounded-density}
We extend our results in Theorem \ref{theorem: positive results} to a more general class of adversaries, which have a bounded density over the space of linear subspaces of a fixed dimension $n_3$ and the adversary can perturb a test feature vector arbitrarily in the sampled adversarial subspace.  Specifically, a distribution is said to be {\em $\kappa$-bounded} if the corresponding probability density $f(x)$ satisfies, $\sup_xf(x)\le\kappa$. For example, the standard normal distribution $\cN(\mu,\sigma)$ is $\frac{1}{\sqrt{2\pi}\sigma}$-bounded.

\begin{restatable}{theorem}{thmboundedadversary}\label{theorem: bounded density}
Consider the setting of Theorem \ref{theorem: positive results}, with an adversary having a $\kappa$-bounded distribution over the space of linear subspaces of a fixed dimension $n_3$ for perturbing the test point. If $\E(\tau,r)$ denotes the bound on error rate in Theorem \ref{theorem: positive results} for $\textsc{RobustClassifier}(\tau,0)$ in Algorithm \ref{algorithm: robust separated inference-time classifier}, then the error bound of the same algorithm against the $\kappa$-bounded adversary is $\cO(\kappa \E(\tau,r))$.
\end{restatable}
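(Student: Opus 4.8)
The plan is to reduce the statement to Theorem~\ref{theorem: positive results} by a single change-of-measure argument on the Grassmannian $\mathrm{Gr}(n_3,n_2)$ of $n_3$-dimensional linear subspaces of $\R^{n_2}$. Fix the test instance $\x$ (so that $r$ is fixed), write $f=\textsc{RobustClassifier}(\tau,0)$, and let
\[
B \;=\; \bigl\{\, S\in\mathrm{Gr}(n_3,n_2) \;:\; \exists\, \e\in S+F(\x)\subseteq\R^{n_2}\ \text{s.t.}\ f(\e)\ne y\ \text{and}\ f(\e)\ \text{does not abstain}\,\bigr\}.
\]
The key observation is that $B$ depends only on $\x$, the training set, $\tau$, and the algorithm — it is completely independent of which distribution over subspaces the adversary uses. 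For any adversary whose subspace distribution is $\mu$, the robust error $\cE_\adv^{\x}(f)$ is exactly $\mu(B)$. Moreover, Theorem~\ref{theorem: positive results} is precisely the statement that $\lambda(B)\le \E(\tau,r)$, where $\lambda$ is the unique orthogonally-invariant (``uniform'') probability measure on $\mathrm{Gr}(n_3,n_2)$: the peeling construction used in its proof — sample $\v_1$ uniformly on the unit sphere, then $\v_2$ uniformly on the unit sphere of $\spann\{\v_1\}^{\perp}$, and so on — produces exactly $S\sim\lambda$, so the bound proved there is a bound on $\Pr_{S\sim\lambda}[B]$.

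Given this, the second step is routine. A $\kappa$-bounded distribution $\mu$ is, by definition, absolutely continuous with respect to $\lambda$ with Radon--Nikodym derivative $\frac{d\mu}{d\lambda}\le\kappa$ $\lambda$-almost everywhere. Hence
\[
\cE_\adv^{\x}(f) \;=\; \mu(B) \;=\; \int_{B}\frac{d\mu}{d\lambda}\, d\lambda \;\le\; \kappa\int_{B} d\lambda \;=\; \kappa\,\lambda(B) \;\le\; \kappa\,\E(\tau,r),
\]
which is the claimed $\cO(\kappa\,\E(\tau,r))$ bound; equivalently, inflating the density of an event by at most a factor $\kappa$ inflates its probability by at most $\kappa$. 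Since this holds pointwise in $\x$, taking an expectation over $\x\sim\cD_\cX$ (if a population-level version is wanted) changes nothing.

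The main thing to get right — and essentially the only nontrivial point — is the identification step: one must confirm that the quantity upper-bounded in Theorem~\ref{theorem: positive results} really is $\lambda(B)$ for the \emph{same} event $B$ that governs the $\kappa$-bounded adversary, rather than some surrogate produced by the union bound in that proof. This is fine, because the union bound there (over the $m$ training points and the $n_3$ peeling steps) is used to upper bound $\Pr_{S\sim\lambda}[B]$ itself — it bounds a measure, it does not alter the event. A secondary subtlety worth flagging is that the change of measure must be performed \emph{globally} on $\mathrm{Gr}(n_3,n_2)$: a $\kappa$-bounded joint density need not have $\kappa$-bounded conditionals, so one cannot simply rerun the peeling argument step by step with the per-step probabilities inflated by $\kappa$; the one-shot bound above avoids this entirely. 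Finally, the $\cO(\cdot)$ in the statement is only needed to absorb the vacuous regime $\kappa\,\E(\tau,r)\ge 1$ and the absolute constants already hidden inside $\E(\tau,r)$ — the argument above in fact yields the clean constant $1$.
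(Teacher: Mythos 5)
Your proof is correct and is essentially the paper's argument: both reduce to Theorem \ref{theorem: positive results} by a change of measure using the $\kappa$-bounded density relative to the uniform distribution on subspaces, the only organizational difference being that the paper applies this per training point to the sets $\cS(\x',\tau)$ (proving each is convex, hence measurable, in Lemma \ref{lem: convex adversary}) and then union-bounds over the $m$ training points, while you apply it once to the global bad event $B$. The one point you gloss over, and which the paper handles via its convexity lemma, is measurability of the event you integrate over; since $B$ is contained in the finite union of the paper's convex sets $\cS(\x',\tau)$, this is easily repaired by bounding $\mu(B)$ by the measure of that union.
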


\begin{proof}
To argue upper bounds on failure probability, we consider the set of adversarial subspaces which can allow the adversary to perturb the test point $x$ close to a training point $x'$. Let $\cS(x',\tau)$ denote the subset of linear subspaces of dimension $n_3$ such that for any $S\in\cS(x',\tau)$ there exists $v\in S$ with $x+v\in \cB(x',\tau)$. Note that 
we can upper bound the fraction of the total probability space occupied by $\cS(x',\tau)$ by $\frac{1}{m}\E(\tau,r)$, where constants in $n_2,n_3$ have been suppressed.
If we show that $\cS(x',\tau)$ is a measurable set, we can use the $\kappa$-boundedness of the adversary distribution to claim that the failure probability for misclassifying as $x'$ is upper bounded by $\kappa \vol(\cS) \frac{1}{m}\E(\tau,r)=\cO\left( \frac{\kappa}{m}\E(\tau,r)\right)$, since the volume of the complete adversarial space $\cS$ is a constant in $n_2,n_3$. In Lemma \ref{lem: convex adversary} (Appendix \ref{sec:upper bound beta function}), we make the stronger claim that $\cS(x',\tau)$ is convex. We can then use a union bound on the training points to get a bound on the total failure probability as $\cO\left(\kappa \E(\tau,r)\right)$.
\end{proof}

\section{Learning Data-Specific Optimal Thresholds}\label{sec:dd}
Given an embedding function $F$ and a classifier $f_\tau$ which outputs either a predicted class if the nearest neighbor is within distance $\tau$ of a test point or abstains from predicting if not (see Algorithm \ref{algorithm: robust separated inference-time classifier}), we want to evaluate the performance of $f_\tau$ on a test set $\cT$ against an adversary which can perturb a test feature vector in a random $n_3$-dimensional subspace $S\sim\cS$. \blue{To this end, we define
\begin{definition}[Robust error.]
Let $\cE_\adv(\tau,S):=\frac{1}{|\cT|}\sum_{(\x,\y)\in \cT}\1\{\exists \e\in S+F(\x)\subseteq \R^{n_2} \text{ such that }\allowbreak f_{\tau}(\e)\ne\y \text{ and $f_\tau(\e)$ does not abstain}\}$ denote the robust error on the test set $\cT$, for $n_3$-dimensional perturbation subspace $S$ and threshold setting $\tau$ in Algorithm \ref{algorithm: robust separated inference-time classifier}. Also define average robust error as $\cE_\adv(\tau):=\mathbb{E}_{S\sim\cS}\left[\cE_\adv(\tau,S)\right]$ for distribution $\cS$ over $n_3$-dimensional subspaces (assumed to be the uniform distribution unless stated otherwise) and estimated robust error over a set $\hat{S}$ of subspaces as $\hat{\cE}_\adv(\tau,\hat{\cS}):=\frac{1}{|\hat{\cS}|}\sum_{S\in \hat{\cS}}\cE_\adv(\tau,S)$. Let $\hat{\cS}$ consist of multiple samples drawn from $\cS$, and for conciseness, we will often denote  $\hat{\cE}_\adv(\tau,\hat{\cS})$ by $\hat{\cE}_\adv(\tau)$ and $\hat{\cS}$ will be implicit from context.
\end{definition}

\noindent $\hat{\cE}_\adv(\tau)$ gives an easier-to-compute surrogate to $\cE_\adv(\tau)$, by drawing subspaces in $\hat{\cS}$ according to $\cS$ (Algorithm \ref{algorithm: adversarial attack opt} gives the procedure to compute the attack perturbation given subspace $S$). For an abstentive classifier, the robust error can be trivially minimized by abstaining everywhere. We will therefore also need to control the abstention rate on unperturbed data.

\begin{definition}[Natural abstention rate.] Define
$\cD_\nat(\tau):=\frac{1}{|\cT|}\sum_{(\x,\y)\in \cT}\1\{f_\tau(F(\x))\text{ abstains}\}$ as the abstention rate on the unperturbed test set $\cT$.
\end{definition}

 \noindent$\cE_\adv(\tau)$ and $\cD_\nat(\tau)$ are both monotonic in $\tau$; while the former is non-decreasing, the latter is non-increasing (Lemma \ref{lem:monotonicity}). 
 
 \begin{lemma}\label{lem:monotonicity}
  Robust error $\cE_\adv(\tau,S)$ is monotonically non-decreasing in $\tau$ for any $S$. Further, natural abstention rate $\cD_\nat(\tau)$ is monotonically non-increasing in $\tau$.
 \end{lemma}
 \begin{proof}
 Let $0\le\tau_1\le\tau_2\le \infty$. For any $(\x,\y)\in\cT$, if there exists $\e\in S+F(\x)$ for which the adversary succeeds for threshold $\tau_1$, we have $f_{\tau_1}(\e)\ne\y$ and $f_{\tau_1}(\e)$ does not abstain. Since $f_{\tau_2}$ does not abstain whenever $f_{\tau_1}$ does not abstain, we have in particular that $f_{\tau_2}(\e)$ does not abstain. Moreover, conditioned on not abstaining, we have $f_{\tau_2}(\e)=f_{\tau_1}(\e)\ne\y$. Thus $\cE_\adv(\tau_2,S)$ incurs error for each test point $(\x,\y)$ for which $\cE_\adv(\tau_2,S)$ incurs an error, implying monotonicity in $\tau$. A similar argument for counting the abstention on any fixed test point for any pair of values of the threshold implies $\cD_\nat(\tau)$ is monotonically non-increasing.
 \end{proof}
 
 \noindent Lemma \ref{lem:monotonicity} further implies that $\cE_\adv(\tau)$ and $\hat{\cE}_\adv(\tau)$ are also monotonic non-decreasing in $\tau$. The robust error $\cE_\adv(\tau)$ is optimal at $\tau=0$, but this implies that we abstain from prediction all the time (that is, $\cD_\nat(0)=1$). Conversely, we can minimize the abstention rate by not abstaining, that is, $\cD_\nat(\infty)=0$ corresponding to vanilla nearest-neighbor, but this maximizes the robust error. This motivates us to consider the following objective function which combines robust error and natural abstention rate. 

\begin{definition}[Robust Chow's objective.]\label{def:robust-chow}
Define $l(\tau):=\cE_\adv(\tau)+c \cD_\nat(\tau)$ as the robust Chow's objective, where $c$ is a positive constant and denotes the {\it cost of abstention}. Further define $\hat{l}(\tau):=\hat{\cE}_\adv(\tau)+c \cD_\nat(\tau)$ as the estimated robust Chow's objective.
\end{definition}

\noindent Definition \ref{def:robust-chow} may be viewed as an adversarial version of Chow's objective for abstentive classifiers \citep{chow1970optimum}, which uses natural risk instead of adversarial risk. If, for example, we are willing to take a one percent increase of the abstention rate for a two percent drop in the error rate, we could set $c$ to  $\frac{1}{2}$. For a single test set $\cT$, the abstention rate $\cD_\nat(\tau)$ can change at (at most) $|\cT|$ `critical' values of $\tau$ corresponding to nearest neighbor distances. Given oracle access to $\cE_\adv(\tau)$, we can minimize $l(\tau)$ over the given test sample with at most $|\cT|$ evaluations. Suppose, however, the test data arrives sequentially in batches of size $b$, potentially from related tasks with different data distributions, and we need to figure out how to set the threshold $\tau$ for unseen tasks. As we will show, techniques from data-driven algorithm design \citep{balcan2018dispersion,balcan2021learning} can help approach this multi-task robustness setting.

Formally, we define our online learning setting as follows. Consider a game consisting of $T$ rounds. In each round $t=1,\dots,T$, the learner is presented with a new test batch $\cT_t$ of size $b$. In Theorem \ref{thm:regret}, we show no regret can be achieved for online learning of the threshold $\tau$ using test batches of size $b$ (consisting of unperturbed points) on which the learner chooses abstention threshold $\tau_t$, that is, predicting using classifier $f_{\tau_t}$. Let $l_t$ (resp. $\hat{l}_t$) be the (resp. estimated) robust Chow's objective on the test set $\cT_t$. The learner suffers loss $l_t(\tau_t)$ and observes $l_t(\tau)$. The goal of the learner is to minimize total expected regret, defined as $R_T:=\mathbb{E}\left[\sum_{t=1}^Tl_t(\tau_t)-\min_{\tau}\sum_{t=1}^Tl_t(\tau)\right]$, where the expectation is over the randomness of the loss functions as well as learner's internal randomness.

Our main result is the following theorem (Theorem \ref{thm:regret}) in the above setting. Our proof strategy is to show that the sequence of loss functions $l_t(\tau)$ is $(w,k)$-{\it dispersed} in the sense of \cite{balcan2018dispersion}. We present a simplified definition of \emph{dispersion} for real-valued functions.

\begin{definition}[Dispersion, \cite{balcan2018dispersion}]
Let $u_1,\dots,u_T :\R\rightarrow [0,1]$ be a collection of functions where $u_i$ is piecewise Lipschitz over a partition $P_i$ of $\R$. We say that $P_i$ splits a set $A$ if $A$ intersects with at least two sets in $P_i$. The collection of functions is $(w,k)$-dispersed if every interval of length $w$ is split by at most $k$ of the partitions $P_1, \dots , P_T$.\label{def:dispersion}
\end{definition}

\begin{algorithm}[t]
\caption{Exponential Forecaster Algorithm \citep{balcan2018dispersion}}
\label{alg:ef}
\begin{algorithmic}[1]
\STATE {\bfseries Input:} step size parameter $\lambda \in (0, 1]$.
\STATE {\bfseries Output:} thresholds $\tau_t$ for times $t=1,2,\dots,T$.
\STATE{Set $w_1(\tau)=1$ for all $\tau\in [0,\tau_{\max}]$.}
\FOR{$t=1,2,\dots,T$}
\STATE{$W_t:=\int_{[0,\tau_{\max}]}w_t(\tau)d\tau$.}
\STATE{Sample $\tau$ with probability proportional to $w_t(\tau)$, that is, with probability
        $p_{t}(\tau)=\frac{w_t(\tau)}{W_t}$. Output the sampled $\tau$ as $\tau_t$.}
\STATE{Observe $l_t(\cdot)$. Set $u_t(\tau):=1-\frac{l_t(\tau)}{1+c}$.}
\STATE{For each $\tau\in [0,\tau_{\max}], \text{ set }w_{t+1}(\tau)=e^{\lambda u_t(\tau)}w_{t}(\tau)$.}
\ENDFOR
\end{algorithmic}
\end{algorithm}

\noindent Intuitively, if a sequence of functions is piecewise-Lipschitz except for a finite number of breakpoints (or points of discontinuity), it is said to be dispersed if the discontinuities do not concentrate in a small region of the domain space over time. Finally, we will employ known results about no-regret learning of $(w,k)$-dispersed functions using Algorithm \ref{alg:ef} a continuous version of Exponential Weights algorithm for finite experts \citep{balcan2018dispersion}. Proofs of the technical lemmas needed for proving Theorem \ref{thm:regret} can be found in Appendix \ref{app:dd}. }

\begin{restatable}{theorem}{thmdispersionbound}\label{thm:regret}
Consider the online learning setting described above. Assume $\tau\in[0,\tau_{\max}]$ with  $\tau_{\max}=o\left(r\right)$, $r>1$, and each test batch $\cT_t$ is sampled from a data distribution $\cD$ that has $\kappa$-bounded density. If $\tau_t$ is set using a continuous version of the multiplicative updates algorithm, Algorithm \ref{alg:ef}, for $T$ rounds of the online game, then with probability at least $1-\delta$, the total expected regret of the learner for the loss sequence $l_t(\tau)$ is bounded by $O\left(\sqrt{n_2T\log\left(\frac{\kappa  mb\tau_{\max}T}{\delta}\right)}\right)$. Here $b$ is the batch size and $r$ is the smallest distance between points of different labels.
\end{restatable}
\begin{proof} \blue{We show the sequence of loss functions $l_t(\tau)$ is $(w,k)$-{\it dispersed} (Definition \ref{def:dispersion}) in two steps. We first argue that the robust error part of the loss $l(\tau)$ is Lipschitz, and we further show that the natural abstention rate is piecewise constant with dispersed discontinuities.
}

A key challenge is to analyze the adversary success probability and show  that $\cE_\adv(\tau)$ is Lipschitz for sufficiently small $\tau$. In Lemma \ref{lem:acc-lipschitzness} (see  Appendix \ref{app:dd} for a proof), we show that $\cE_\adv(\tau)$ is $L$-Lipschitz, where $L=O\left(m\tau_{\max}^{n_2-n_3-1}/r^{n_2-n_3}\right)$. \blue{Intuitively, for any test point the probability the adversary succeeds by perturbing to within a distance $\tau$ and $\tau+\Delta$ of a fixed training point can be upper bounded using arguments similar to our proof of bounds on robust error in Section \ref{sec: positive results}. A union bound over training points then gives the bound on $L$.  Note that $\cD_\nat(\tau)$ is piecewise constant. This is because, for any set $\cT_t$ of test points, we have at most $|\cT_t|$ points corresponding to distances of the test points to the nearest training point, where the function value decreases by $\frac{1}{|\cT_t|}$. Together with $L$-Lipschitzness of $\cE_\adv(\tau)$, this implies $l(\tau)$ is piecewise $L$-Lipschitz. 

In Lemma \ref{lem:dk-dispersion} we show that, for batch size $b$, $\cD_\nat(\tau)$ has $O(\kappa bmw \tau_{\max}^{n_2-1})$ discontinuities in expectation (over the data distribution) in any interval of width $w$. Note that if a discontinuity occurs within the interval $I=[\tau,\tau+w]$, then there must exist a test point $\x$ in the test set $\cT$ for which the nearest-neighbor training point is at distance $\tau'\in I$. That is, the training point lies within $\cB(\x,\tau+w)\setminus \cB(\x,\tau)$. The proof involves bounding the fraction of points at distance $d\in[\tau,\tau+w]$ for any test point, using smoothness of the data distribution, and using a union bound over the $b$ test points. See Appendix \ref{app:dd} for a formal argument. Since $\cE_\adv(\tau)$ is Lipschitz continuous, $l(\tau)$ has at most $O\left(\kappa bmw \tau_{\max}^{n_2-1}\right)$ discontinuities in expectation in any $w$-interval. 

Using a standard argument based on the VC-dimension of 1D intervals (for example, Theorem 7 in \cite{balcan2020semi}), the maximum number of discontinuities in any interval of width $w$ is $k=O\left(\kappa bmw \tau_{\max}^{n_2-1}T+\sqrt{T\log\frac{b}{\gamma}}\right)$ with high probability $1-\gamma$. In other words, $l(\tau)$ is $(w,k)$-Lipschitz with high probability over the data distribution. This allows us to use a continuous version of standard Exponential Weights update introduced by \cite{balcan2018dispersion} as our online algorithm (which we include as Algorithm \ref{alg:ef} for completeness), for which they show an $O\left(\sqrt{T\log\frac{R}{w}}+k+wLT\right)$ bound on the expected regret if the sequence of loss functions is $(w,k)$-dispersed with $L$-Lipschitz pieces, where $R$ is a bound on the diameter of the continuous domain ($R=\tau_{\max}$ in our setting).

} \noindent Formally, we can apply Theorem \ref{thm:dispersion} with $w=\frac{1}{\kappa bm \tau_{\max}^{n_2-1}\sqrt{T}}$ to get the desired regret bound.
\begin{align*}
    R_T  &= O\left(\sqrt{T\log\frac{R}{w}}+k+wLT\right)\\
    &\le O\left(\sqrt{T\log\frac{\tau_{\max}}{(\kappa bm \tau_{\max}^{n_2-1}\sqrt{T})^{-1}}}+O\left(\sqrt{T}+\sqrt{T\log\frac{b}{\delta}}\right)+\frac{O(m\tau_{\max}^{n_2-n_3-1}/r^{n_2-n_3})}{\kappa bm \tau_{\max}^{n_2-1}\sqrt{T}}\cdot T\right)\\
    &\le O\left(\sqrt{T\log\left(\frac{\kappa  mb\tau_{\max}^{n_2}T}{\delta}\right)}\right),
\end{align*}
where the first inequality holds with probability at least $1-\delta$.
\end{proof}

\noindent\blue{A similar no-regret learning guarantee can also be given for the estimated robust Chow's objective $\hat{l}(\tau)$. In practice $l(\tau)$ can be hard to compute, but as discussed above the learner can more easily estimate this loss by computing $\hat{l}(\tau)$. The key difference in the proof is that the estimated robust error $\hat{\cE}_\adv(\tau)$ is piecewise constant, while $\cE_\adv(\tau)$ was shown to be Lipschitz for small $\tau$. Roughly speaking, we will use smoothness of the adversary distribution to argue that location of discontinuities of $\hat{\cE}_\adv(\tau)$ cannot concentrate in a small interval. Formally, we show that

\begin{restatable}{theorem}{thmdispersionbound-lhat}\label{thm:regret-lhat}
Consider the online learning setting described above. Assume $\tau\in[0,\tau_{\max}]$ with  $\tau_{\max}=o\left(r\right)$, $r>1$ and each test batch $\cT_t$ is sampled from a data distribution $\cD$ that has $\kappa$-bounded density. If $\tau_t$ is set using a continuous version of the multiplicative updates algorithm, Algorithm \ref{alg:ef}, for $T$ rounds of the online game, then with probability at least $1-\delta$, the total expected regret of the learner for the loss sequence $\hat{l}_t(\tau)$ is bounded by $O\left(\sqrt{n_2T\log\left(\frac{\kappa  msb\tau_{\max}T}{\delta}\right)}\right)$.
Here $b$ is the batch size, $s$ is the number of sample subspaces used to estimate the robust Chow's objective $\hat{l}(\cdot)$ and $r$ is the smallest distance between points of different labels.
\end{restatable}
\begin{proof} 
Lipschitzness of $\cE_\adv(\tau)$ also implies that the breakpoints of $\hat{\cE}_\adv(\tau)$ are smoothly distributed, in particular in any interval of width $w$, we have at most $O\left(bmw\tau_{\max}^{n_2-n_3-1}/r^{n_2-n_3}\right)$ discontinuities (Corollary \ref{cor:estimated-loss}), in expectation over the draw of the adversarial subspace. The rest of the argument is very similar to part (i) above.
\end{proof}
}

\blue{

\noindent In this work we restrict our attention to the {\it full information} setting where entire function $l_t(\tau)$ is available to the learner after the prediction in round $t$. It is an interesting future question to model the adversary with bandit feedback where only $l_t(\tau_t)$ is revealed to the learner.  
The test sets $\cT_t$ may be adversarial as long as they are generated by smooth but possibly different data distributions (in the sense of Theorem \ref{thm:regret}). Our experiments in Section \ref{sec:expt} indicate Algorithm \ref{algorithm: robust separated inference-time classifier} can be made more effective by tuning both parameters $\tau$ and $\sigma$ together. Effective tuning of data-driven algorithms with multiple parameters is an interesting research direction \citep{balcan2022faster}.
Finally, we perform the analysis for tuning our relatively simple thresholded nearest-neighbor approach, but  data-driven algorithm design may prove useful for selecting the best data-specific robust approach from candidate algorithms more generally.

\begin{remark}
A simple goal for setting $\tau$ is to fix an upper limit $d^*$ on $\cD_\nat(\tau)$, corresponding to a maximum abstention rate allowed on the natural data. It is straightforward to search for an optimal $\tau^*$ such that $\cD_\nat(\tau^*)= \max_{\tau,\cD_\nat(\tau)\le d^*} \cD_\nat(\tau)$---simply use the nearest neighbor distances (to training examples) for the test points to compute the abstention rate at any $\tau$, and do a binary search for $d^*$. For $\tau<\tau^*$ we have a higher abstention rate, and when $\tau>\tau^*$ we have a higher robust error rate. 
For more sophisticated goals, for example minimizing objectives that depend on both $\cE_\adv(\tau)$ and $\cD_\nat(\tau)$, we may not be able to perform a binary search, though a linear search would still suffice. 
Here we have considered a setting where we have multiple test sets, conceptually coming from different but related tasks in some domain, and rather than separately performing this parameter tuning on each task, we want instead to learn a common value of $\tau$ that works well across all the tasks. 
\end{remark}
}

\subsection{A simple intuitive example with exact calculation demonstrating significance of data-driven algorithm design}
\label{section: toy example}
The significance of data-driven design in this setting is underlined by the following two observations. Firstly, as noted above, optimization for $\tau$ across problem instances is difficult due to the non-Lipschitz nature of $\cD_\nat(\tau)$ and the intractability of characterizing the objective function $l(\tau)$ exactly due to $\cE_\adv(\tau)$. Secondly, the optimal value of $\tau$ can be a complex function of the data geometry and sampling rate. We illustrate this by exact computation of optimal $\tau$ for a simple intuitive setting: consider a binary classification problem where the features lie uniformly on two one-dimensional manifolds embedded in two-dimensions (that is, $n_2=2$, see Figure \ref{figure: optimal threshold toy example}). Assume that the adversary perturbs in a uniformly random direction ($n_3=1$).
Further assume that our training set consists of $2m$ examples, $m$ from each class. In this toy setting, we  show that the optimal threshold varies with data-specific factors.

\begin{figure}[ht]
\center
\tikzset{every picture/.style={line width=0.75pt}} 

\begin{tikzpicture}[x=0.75pt,y=0.75pt,yscale=-1,xscale=1]

\draw [color={rgb, 255:red, 45; green, 19; blue, 241 }  ,draw opacity=1 ][line width=2.25]    (485.5,69.69) -- (421.5,69.69) ;
\draw    (118.5,49.69) -- (178.5,49.69) ;
\draw [shift={(180.5,49.69)}, rotate = 180] [color={rgb, 255:red, 0; green, 0; blue, 0 }  ][line width=0.75]    (10.93,-3.29) .. controls (6.95,-1.4) and (3.31,-0.3) .. (0,0) .. controls (3.31,0.3) and (6.95,1.4) .. (10.93,3.29)   ;
\draw [shift={(116.5,49.69)}, rotate = 0] [color={rgb, 255:red, 0; green, 0; blue, 0 }  ][line width=0.75]    (10.93,-3.29) .. controls (6.95,-1.4) and (3.31,-0.3) .. (0,0) .. controls (3.31,0.3) and (6.95,1.4) .. (10.93,3.29)   ;
\draw    (190.5,69.19) -- (412.5,69.19) ;
\draw [shift={(418.5,69.19)}, rotate = 180] [color={rgb, 255:red, 0; green, 0; blue, 0 }  ][line width=0.75]    (10.93,-3.29) .. controls (6.95,-1.4) and (3.31,-0.3) .. (0,0) .. controls (3.31,0.3) and (6.95,1.4) .. (10.93,3.29)   ;
\draw [shift={(183.5,69.19)}, rotate = 0] [color={rgb, 255:red, 0; green, 0; blue, 0 }  ][line width=0.75]    (10.93,-3.29) .. controls (6.95,-1.4) and (3.31,-0.3) .. (0,0) .. controls (3.31,0.3) and (6.95,1.4) .. (10.93,3.29)   ;
\draw  [draw opacity=0][fill={rgb, 255:red, 25; green, 163; blue, 226 }  ,fill opacity=1 ] (479.75,64.69) -- (485,73) -- (474.5,73) -- cycle ;
\draw  [draw opacity=0][fill={rgb, 255:red, 25; green, 163; blue, 226 }  ,fill opacity=1 ] (423.25,64.69) -- (428.5,73) -- (418,73) -- cycle ;
\draw  [draw opacity=0][fill={rgb, 255:red, 25; green, 163; blue, 226 }  ,fill opacity=1 ] (439.25,64.69) -- (444.5,73) -- (434,73) -- cycle ;
\draw  [draw opacity=0][fill={rgb, 255:red, 240; green, 101; blue, 37 }  ,fill opacity=1 ] (116.5,65.69) -- (123,65.69) -- (123,72.19) -- (116.5,72.19) -- cycle ;
\draw  [draw opacity=0][fill={rgb, 255:red, 240; green, 101; blue, 37 }  ,fill opacity=1 ] (137.5,66.09) -- (144,66.09) -- (144,72.59) -- (137.5,72.59) -- cycle ;
\draw  [draw opacity=0][fill={rgb, 255:red, 240; green, 101; blue, 37 }  ,fill opacity=1 ] (168,66.19) -- (174.5,66.19) -- (174.5,72.69) -- (168,72.69) -- cycle ;
\draw  [draw opacity=0][fill={rgb, 255:red, 240; green, 101; blue, 37 }  ,fill opacity=1 ] (148.5,66.19) -- (155,66.19) -- (155,72.69) -- (148.5,72.69) -- cycle ;
\draw  [draw opacity=0][fill={rgb, 255:red, 25; green, 163; blue, 226 }  ,fill opacity=1 ] (467.25,64.69) -- (472.5,73) -- (462,73) -- cycle ;
\draw [color={rgb, 255:red, 240; green, 101; blue, 37 }  ,draw opacity=1 ][line width=2.25]    (180.5,69.19) -- (116.5,69.19) ;
\draw    (421.5,49.69) -- (481.5,49.69) ;
\draw [shift={(483.5,49.69)}, rotate = 180] [color={rgb, 255:red, 0; green, 0; blue, 0 }  ][line width=0.75]    (10.93,-3.29) .. controls (6.95,-1.4) and (3.31,-0.3) .. (0,0) .. controls (3.31,0.3) and (6.95,1.4) .. (10.93,3.29)   ;
\draw [shift={(419.5,49.69)}, rotate = 0] [color={rgb, 255:red, 0; green, 0; blue, 0 }  ][line width=0.75]    (10.93,-3.29) .. controls (6.95,-1.4) and (3.31,-0.3) .. (0,0) .. controls (3.31,0.3) and (6.95,1.4) .. (10.93,3.29)   ;

\draw (143,28) node [anchor=north west][inner sep=0.75pt]   [align=left] {$\displaystyle D$};
\draw (288,56) node [anchor=north west][inner sep=0.75pt]   [align=left] {$\displaystyle r$};
\draw (124,93) node [anchor=north west][inner sep=0.75pt]   [align=left] {Class A};
\draw (426,93) node [anchor=north west][inner sep=0.75pt]   [align=left] {Class B};
\draw (443,28) node [anchor=north west][inner sep=0.75pt]   [align=left] {$\displaystyle D$};

\end{tikzpicture}
\caption{A simple example where we compute the optimal value of the abstention threshold exactly. Classes A and B are both distributed respectively on segments of length $D$, embedded collinear and at distance $r$ in $\R^2$.}
\label{figure: optimal threshold toy example}
\end{figure}

\noindent{\it Formal setting}: We set the feature and adversary dimensions as $n_2=2,n_3=1$. Examples of class A are all located on the segment $S_A=[(0,0),(D,0)]$, similarly instances of class B are located on $S_B=[(D+r,0),(2D+r,0)]$ (where $[\a,\b]:=\{\alpha\a+(1-\alpha)\b\mid \alpha\in[0,1]\}$). The data distribution returns an even number of samples, $2m$, with $m>0$ points each drawn uniformly from $S_A$ and $S_B$.
For this setting, we show that the optimal value of the threshold is a function of both the geometry ($D,r$) and the sampling rate ($m$). Proof of lemmas needed to prove the following result appear in Appendix \ref{appendix: toy example}.
\begin{theorem}\label{theorem: toy example optimal threshold}
Let $\tau^*:=\argmin_{\tau\in\R^+}l(\tau)$. For the setting considered above, if we further assume $D=o(r)$ and $m=\omega\left(\log\left(\frac{2\pi c r}{D}\right)\right)$, then there is a unique value of $\tau^*$ in $[0,D/2)$. Further, 
\begin{align*}
    \tau^*=\begin{cases}
    \Theta\left(\frac{D\log \left((\pi c r m)/D\right)}{m}\right),
    &
    \text{if }\frac{1}{m}<\frac{\pi c r}{D};\\
    0,
    &
    \text{if }\frac{\pi c r}{D}\le\frac{1}{m}.\\
    \end{cases}
\end{align*}
\end{theorem}

\begin{proof}
We compute the robust error $\cE_\adv(\tau)$ and abstention rate $\cD_\nat(\tau)$ as functions of $\tau$. Even with $D=o(r)$, the exact computation of the robust error as a simple closed form is difficult without further assuming $\tau=o(r)$ as well. Fortunately, by Lemma \ref{lemma: toy example}, we only need to consider $\tau\le D$. For this case, indeed $\tau=o(r)$. We compute the abstention and robust error rates in Lemmas \ref{lemma: toy example abstain} and \ref{lemma: toy example accuracy}, respectively. This gives us, for $\tau\le D$,
\begin{align*}
    l(\tau)=&\frac{\tau}{\pi r}\left(1-\frac{m+3}{m+1}\cdot\Theta\left(\frac{D}{r}\right)\right) -\Theta\left(\left(\frac{\tau}{r}\right)^3\right)\\&+\frac{c}{m+1}\left[2\left(1-\frac{\tau}{D}\right)^{m+1}+(m-1)\ind_{\tau\le D/2}\left(1-\frac{2\tau}{D}\right)^{m+1}\right].
\end{align*}
For $\tau\le D/2$,
\begin{align*}
    l'(\tau)=&\frac{1}{\pi r}\left(1-\frac{m+3}{m+1}\cdot\Theta\left(\frac{D}{r}\right)\right) -\Theta\left(\frac{1}{r}\left(\frac{\tau}{r}\right)^2\right)\\&-\frac{2c}{D}\left[\left(1-\frac{\tau}{D}\right)^{m}+(m-1)\left(1-\frac{2\tau}{D}\right)^{m}\right].
\end{align*}
We need to consider two cases.\\
{\it Case 1.} $\frac{\pi c r}{D}\le\frac{1}{m}$.
In this case $l'(0)=\frac{1}{\pi r}-\frac{2cm}{D}\ge 0$. Since $l''(\tau)\ge 0$, so we must have the only minimum at $\tau=0$.\\ \\
{\it Case 2.} $\frac{1}{m}<\frac{\pi c r}{D}$. $l'(0)=\frac{1}{\pi r}-\frac{2cm}{D}<0$. Also $l'(D/2)=\frac{1}{\pi r}-\frac{2c}{D2^m}>0$ since $m>\log\left(\frac{2\pi c r}{D}\right)$. But $l''(\tau)\ge 0$, so we must have a unique local minimum in $(0,D/2)$, which is the global minimum.\\
Further, define $y$ as $\tau=\frac{D}{m}\log y$. Now if $y=2^{o(m)}$, we have $\frac{\tau}{D}=o(1)$, or
\[\left(1-\frac{\tau}{D}\right)^{m}=\exp\left({m\log\left(1-\frac{\tau}{D}\right)}\right)=y^{-1-o(1)}.\]
If $y>1$, for $y=\frac{2\pi c r m}{D}$,
\begin{align*}
    l'(\tau)&=\frac{1}{\pi r}-\frac{2c}{D}\left[\left(\frac{D}{2\pi c r m}\right)^{1+o(1)}+(m-1)\left(\frac{D}{2\pi c r m}\right)^{2+o(1)}\right]\\
    &>\frac{1}{\pi r}-\frac{2c}{D}\left[\left(\frac{D}{2\pi c r m}\right)^{1}+(m-1)\left(\frac{D}{2\pi c r m}\right)^{1}\right]\\
    &=\frac{1}{\pi r}-\frac{2c}{D}\left[\frac{D}{2\pi c r}\right]=0,
\end{align*}
and for $y=\left(\frac{2\pi c r (m-1)}{D}\right)^{1/4}$,
\begin{align*}
    l'(\tau)&=\frac{1}{\pi r}-\frac{2c}{D}\left[\left(\frac{D}{2\pi c r (m-1)}\right)^{\frac{1}{4}+o(1)}+(m-1)\left(\frac{D}{2\pi c r m}\right)^{\frac{1}{2}+o(1)}\right]\\
    &<\frac{1}{\pi r}-\frac{2c}{D}\left[\left(\frac{D}{2\pi c r(m-1)}\right)^{1}+(m-1)\left(\frac{D}{2\pi c r(m-1)}\right)^{1}\right]\\
    &=\frac{-1}{\pi  r(m-1)}<0.
\end{align*}
Together, we get that $\tau^*=\Theta\left(\frac{D\log \left((\pi c r m)/D\right)}{m}\right)$ in this case.
\end{proof}

\vspace{-0.2cm}
\section{Experiments on Contrastive Learning}\label{sec:expt}

\blue{Contrastive learning has received significant attention due to the recent popularity of self-supervised learning: many recent studies~\citep{wu2018unsupervised,oord2018representation,hjelm2018learning,zhuang2019local,henaff2019data,tian2019contrastive,bachman2019learning} present promising results of unsupervised representation learning against their supervised counterparts. Representative self-supervised contrastive learning includes MoCo(v2)~\citep{he2020momentum} and SimCLR~\citep{chen2020simple}. In ImageNet classification task, both methods almost match the accuracy of their supervised counterparts; in 7 detection/segmentation tasks on PASCAL VOC, COCO, and other data sets, MoCo~\citep{he2020momentum} can outperform its supervised pre-training counterpart sometimes by large margins. A more recent work of \cite{khosla2020supervised} proposed \emph{supervised contrastive learning}.}

Theorem \ref{theorem: positive results} sheds light on how to design algorithms for robust learning of feature embedding $F$. In order to preserve robustness against adversarial examples regarding a given test point $\x$, in the feature space the theorem suggests minimizing $\tau$---the closest distance between $F(\x)$ and any training example $F(\x_i)$ with the same label, and maximizing $r$---the closest distance between $F(\x)$ and any training example $F(\x_i)$ with a different label. \blue{This is conceptually consistent with the spirit of the nearest-neighbor algorithm. Indeed,  contrastive loss can be seen as nearest-neighbor loss (in the feature space) with the \emph{max} operator replaced by a \emph{softmax} operator for differentiable training}:
\begin{equation}
\label{equ: sn loss}
\min_F -\frac{1}{m}\sum_{i\in [m]} \log\left(\frac{\sum_{j\in[m],j\not=i,y_i=y_j}e^{-\frac{\|F(\x_i)-F(\x_j)\|^2}{T}}}{\sum_{k\in[m],k\not= i} e^{-\frac{\|F(\x_i)-F(\x_k)\|^2}{T}}}\right),
\end{equation}
where $T>0$ is the temperature parameter.
Loss (\ref{equ: sn loss}) is also known as the soft-nearest-neighbor loss in the context of supervised learning~\citep{frosst2019analyzing}, or the InfoNCE loss in the setting of self-supervised learning~\citep{he2020momentum}.

We will now describe an implementation of the attack and empirically measure the performance of our algorithm in the context of supervised and self-supervised contrastive learning\footnote{Code used in the experiments may be found at the following github link: \url{https://github.com/dravyanshsharma/adversarial-contrastive}}.

\comment{
: in the supervised learning setting, we use the true labels $y_i$'s and let $n=m$ for $m$ training examples sampled from $\cD_\cX$. In the setting of self-supervised learning, we sample two separate data augmentation operators from the same family of augmentations and apply them to each training data example to obtain two correlated views. We use the \emph{artificial} labels $y_i$'s: we label the two views of the same data example by the same class, so $|\cY|=m$ and $n=2m$.
}

\comment{
\medskip
\noindent{\textit{Intuition behind the optimization.}} Problem \eqref{equ: sn loss} captures the trade-off between accuracy and robustness: the numerator encourages the intra-class distance to be minimized for natural accuracy, while the denominator encourages the inter-class distance to be maximized for adversarial robustness. The (soft) $k$-nearest neighborhood of $F(\x_i)$ is encouraged to be the features from the same class. 
}

\subsection{Visualization of Representations of Contrastive Learning}

Figure \ref{figure: tsne} shows the two-dimensional t-SNE visualization of 10,000 features by minimizing loss (\ref{equ: sn loss}) on the CIFAR10 test data set. It shows that $\tau_x\ll r_x$ for most of data, where we define $\tau_x:=\min_{i:y=y_i} \dist(F(\x),F(\x_i))$, $r_x:=\min_{i:y\not=y_i} \dist(F(\x),F(\x_i))$, and $\{\x_i\}_{i=1}^m$ is a set of training example with labels $y_i$.

\begin{figure}[t]
\centering
\subfigure{
\includegraphics[scale=0.45]{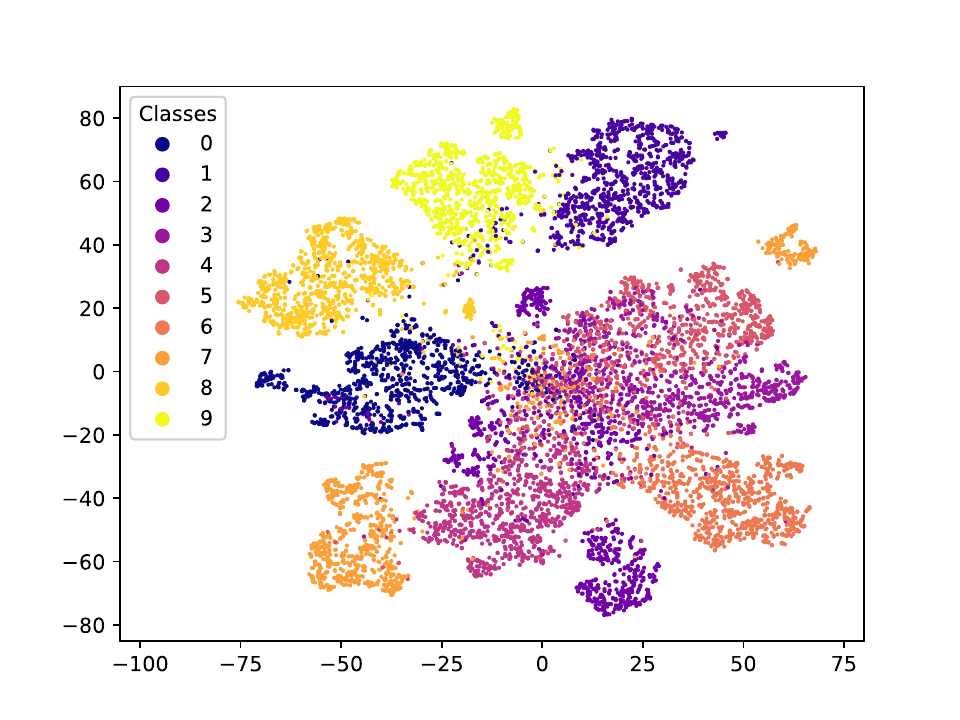}}
\subfigure{
\includegraphics[scale=0.45]{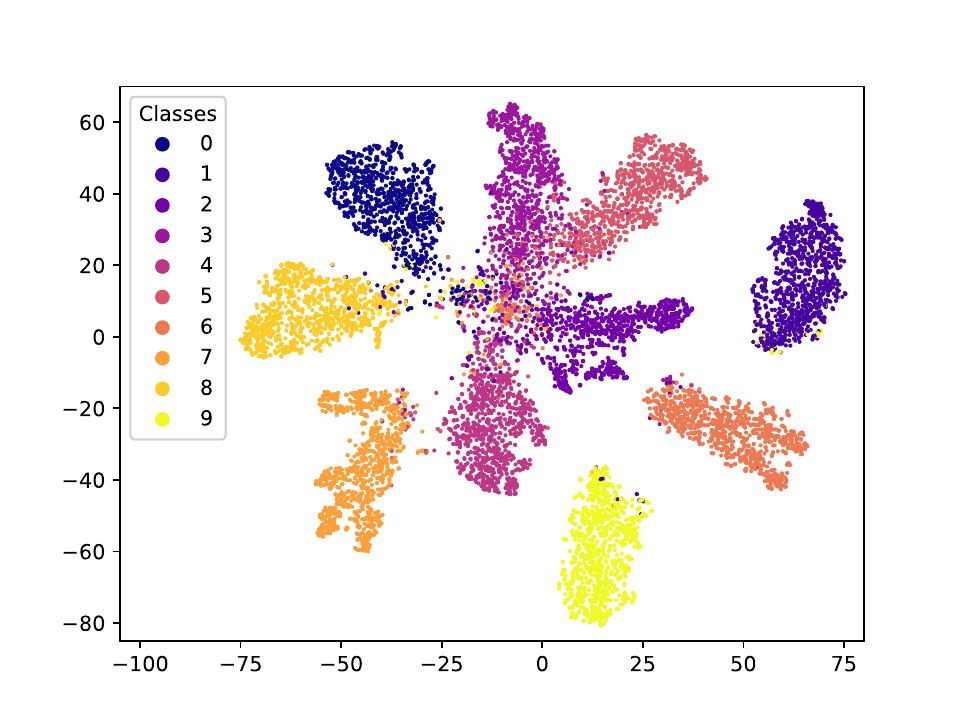}}
\caption{Two-dimensional t-SNE visualization of 512-dimensional embedding by contrastive learning on the CIFAR10 test data set. \textbf{Left Figure:} Self-supervised contrastive learning. \textbf{Right Figure:} Supervised contrastive learning.}
\label{figure: tsne}
\end{figure}

To have a closer look at $\tau_x$ vs. $r_x$, we plot the frequency of $\tau_x/r_x$ in Figure \ref{figure: tau/r}. For self-supervised contrastive learning, there is 84.5\% data which has $\tau_x/r_x$ smaller than $1.0$, while for supervised setting, there is 94.3\% data which has $\tau_x/r_x$ smaller than $1.0$.

\begin{figure}[t]
\centering
\subfigure{
\includegraphics[scale=0.45]{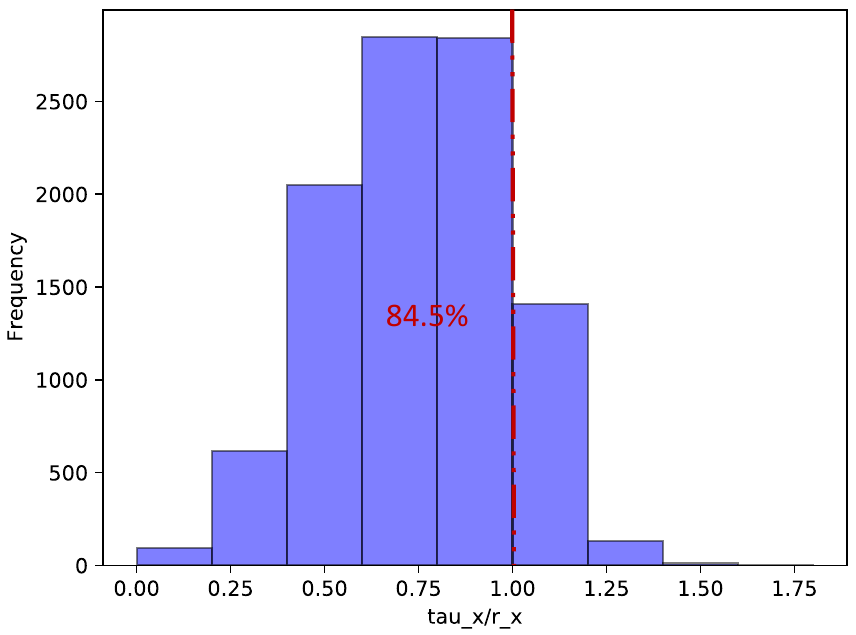}}
\subfigure{
\includegraphics[scale=0.45]{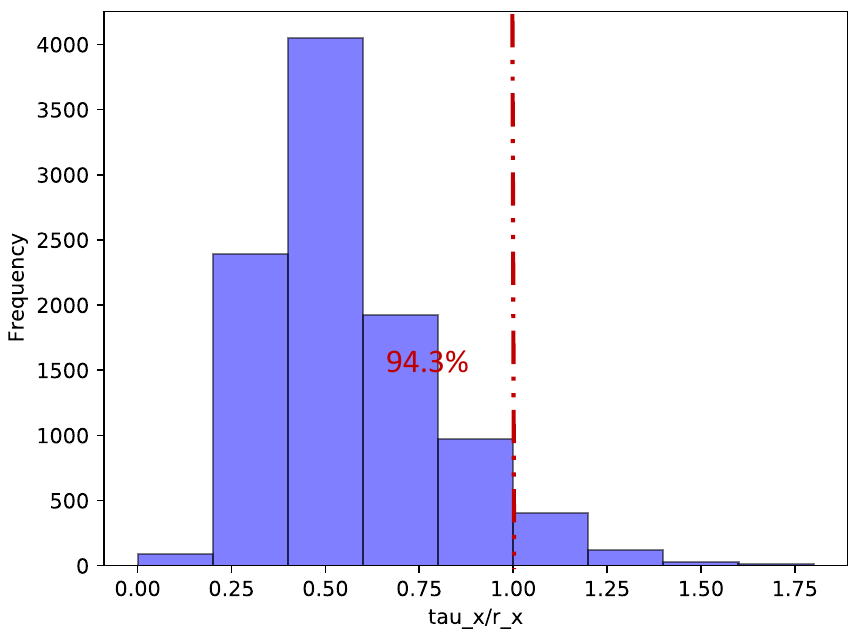}}
\caption{Frequency of $\tau_x/r_x$ by contrastive learning on the CIFAR10 data set, where $\tau_x$ represents the closest distance between the test embedding and any training embedding of the same label, and $r_x$ stands for the closest distance between the test embedding and any training embedding of different labels. \textbf{Left Figure:} Self-supervised contrastive learning. \textbf{Right Figure:} Supervised contrastive learning.}
\label{figure: tau/r}
\end{figure}

\subsection{Certified Adversarial Robustness against Exact Computation of Attacks}

\label{section: certified adversarial robustness}

We verify the robustness of Algorithm \ref{algorithm: robust separated inference-time classifier} when the representations are learned by contrastive learning. Given a embedding function $F$ and a classifier $f$ which outputs either a predicted class or abstains from predicting, recall that we define the natural and robust errors, respectively, as $\cE_\nat(f):=\mathbb{E}_{(\x,\y)\sim\cD}\1\{f(F(\x))\ne\y \text{ and $f(F(\x))$ does not abstain}\}$, and $\cE_\adv(f):=\mathbb{E}_{(\x,\y)\sim\cD,S\sim\cS}\1\{\exists \e\in S+F(\x)\subseteq \R^{n_2} \text{ s.t. } f(\e)\ne\y \text{ and $f(\e)$ does not abstain}\}$, where $S\sim\cS$ is a random adversarial subspace of $\R^{n_2}$ with dimension $n_3$. $\cD_\nat(f):=\mathbb{E}_{(\x,\y)\sim\cD}\1\{f(F(\x))\text{ abstains}\}$ is the abstention rate on the natural examples. Note that the robust error is always at least as large as the natural error.

\begin{table*}[ht]
	\centering
	\caption{Natural error $\cE_\nat$ and robust error $\cE_\adv$ on the CIFAR-10 data set \citep{szegedy2015going} when $n_3=1$ and the 512-dimensional representations are learned by contrastive learning, where $\cD_\nat$ represents the fraction of each algorithm's output of ``don't know'' on the natural data. We report values for $\sigma\approx\tau$ as they tend to give a good abstention-error trade-off w.r.t.  $\sigma$. Bold values correspond to parameter settings that minimize $\cE_\adv+\cD_\nat$ over the grid.}
	\label{table: robustness n3 1}
	\resizebox{1.0\textwidth}{!}{%
\begin{tabular}{@{}cr||cc|ccc|ccc@{}}
\toprule
& Contrastive & \multicolumn{2}{c}{Linear Protocol} & \multicolumn{3}{c}{Ours ($\tau=3.0$)}& \multicolumn{3}{c}{Ours ($\tau=2.0$)} \\
& & $\cE_{\nat}$ & $\cE_\adv$ & $\cE_{\nat}$ & $\cE_\adv$ & $\cD_\nat$ & $\cE_{\nat}$ & $\cE_\adv$ & $\cD_\nat$\\
\midrule
\midrule
\multirow{2}{1.8cm}{$(\sigma=0)$} &
Self-supervised & 8.9\% & 100.0\% & 15.4\% & 40.7\% & 2.2\% &  14.3\% & 26.2\%& 28.7\%\\
& Supervised & 5.6\% & 100.0\% & 5.7\% & 60.5\% & 0.0\% & 5.7\% & 33.4\%& 0.0\%\\
\midrule
\multirow{2}{1.8cm}{$(\sigma=0.9\tau)$} &
Self-supervised & 8.9\% & 100.0\% & \bf 7.2\% & \bf 9.4\% & \bf 12.9\% &  10.0\% & 17.7\%& 29.9\%\\
& Supervised & 5.6\% & 100.0\% & 6.2\% & 18.9\% & 0.0\% & 5.6\% & 22.0\%& 0.1\%\\
\midrule
\multirow{2}{1.8cm}{$(\sigma=\tau)$} &
Self-supervised & 8.9\% & 100.0\% & 1.1\% & 1.2\% & 33.4\% &  2.1\% & 3.1\%& 49.9\%\\
& Supervised & 5.6\% & 100.0\% & 1.9\% & 2.8\% & 10.6\% & \bf 4.1\% & \bf 4.8\%& \bf 3.3\%\\
\bottomrule
\end{tabular}
}
\vspace{-0.3cm}
\end{table*}

\medskip
\noindent{\textit{Self-supervised contrastive learning setup.}} Our experimental setup follows that of SimCLR \citep{chen2020simple}. We use the ResNet-18 architecture~\citep{he2016deep} for representation learning with a two-layer projection head of width 128. The dimension of the representations is 512. We set batch size 512, temperature $T=0.5$, and initial learning rate 0.5 which is followed by cosine learning rate decay. We sequentially apply four simple augmentations: random cropping followed by resizing back to the original size, random flipping, random color distortions, and randomly converting image to grayscale with a probability of 0.2. In the linear evaluation protocol, we set batch size 512 and learning rate 1.0 to learn a linear classifier in the feature space by empirical risk minimization. All experiments are run on two GeForce RTX 2080 GPUs.

\medskip
\noindent{\textit{Supervised contrastive learning setup.}} Our experimental setup follows that of \cite{khosla2020supervised}. We use the ResNet-18 architecture for representation learning with a two-layer projection head of width 128. The dimension of the representations is 512. We set batch size 512, temperature $T=0.1$, and initial learning rate 0.5 which is followed by cosine learning rate decay. We sequentially apply four simple augmentations: random cropping followed by resize back to the original size, random flipping, random color distortions, and randomly converting image to grayscale with a probability of 0.2. In the linear evaluation protocol, we set batch size 512 and learning rate 5.0 to learn a linear classifier in the feature space by empirical risk minimization.

\medskip
\noindent{\textit{Algorithm for exact implementation of the attack.}} In both self-supervised and supervised setups, we compare the robustness of the linear protocol with that of our defense protocol in Algorithm \ref{algorithm: robust separated inference-time classifier} under exact computation of adversarial examples using a convex optimization program in $n_3$ dimensions and $m$ constraints. Algorithm \ref{algorithm: adversarial attack opt} provides an efficient implementation of the attack. 

\begin{algorithm}[!h]
\caption{Exact computation of attacks under threat model \ref{sec: threat model} against Algorithm \ref{algorithm: robust separated inference-time classifier}}
\label{algorithm: adversarial attack opt}
\begin{algorithmic}[1]
\STATE {\bfseries Input:} A randomly-sampled adversarial subspace $S$ of dimension $n_3$, a test example $F(\x)$ and its label $y$, a set of training examples $F(\x_i)$ and their labels $y_i$, $i\in[m]$, a threshold parameter $\tau$.
\STATE {\bfseries Output:} A misclassified adversarial feature $F(\x)+\v$, $\v\in S$ if it exists; otherwise, output ``no adversarial example''.
\STATE{$F_{\mathrm{center}}(\x_i)\leftarrow F(\x_i)-F(\x)$ for $i\in[m]$.}
\FOR{$i=1,...,m$}
\IF{$y_i\ne y$ }
\STATE{$\u_i=\argmin_{\u\in S}d(\u,F_{\mathrm{center}}(\x_i))$;\hspace*{2.28cm}(candidate adversarial perturbation)}
\STATE{$C\leftarrow\{\x_j\mid y_j=y\}$;}
\IF{$\exists \w \in C \mid \dist(\u_i,F_{\mathrm{center}}(\w))<\dist(\u_i,F_{\mathrm{center}}(\x_i))$}
\STATE{$H_j\leftarrow \{\z\mid \dist(F_{\mathrm{center}}(\x_i),\z)\le \dist(\w_j,\z), \w_j\in C\}$;} 
\STATE{$H\leftarrow \cap_iH_i$;}
\STATE{$A\leftarrow H\cap S$;}

\IF{$A=\{\}$}
\STATE{\bfseries continue};
\ENDIF

\STATE{$\z_i=\argmin_{\z\in A}\dist(\z,F_{\mathrm{center}}(\x_i))$;\hspace*{1.75cm}(candidate adversarial perturbation)}\label{line:convex-opt}
\ELSE
\STATE{$\z_i\leftarrow\u_i$;}
\ENDIF
\IF{$\dist(\z_i,F_{\mathrm{center}}(\x_i))<\tau$}
\STATE{\textbf{return} $F(\x)+\z_i$.}
\ENDIF
\ENDIF
\ENDFOR
\STATE{\textbf{return} ``no adversarial example''.}
\end{algorithmic}
\end{algorithm}

{\it Overview of Algorithm \ref{algorithm: adversarial attack opt}}. If the point $\u_i$ closest to the training point $\x_i$ of different label than test point $\x$ in the adversarial subspace $S$ (slight abuse of notation to refer to $\x+S$ as $S$) is closer to $\x_i$ than any training point $\w_j$ with the same label as $\x$ and within the threshold $\tau$ of $\x_i$, it will be misclassified as $\x_i$ (or potentially another point of an incorrect label). If however $\u_i$ is closer to some $\w_j$, we look at the points closer to $\x_i$ than all $\w_j$ in the subspace $S$, and consider the closest point $\z_i$ to $\x_i$ (if it is within threshold $\tau$) which should be misclassified. This can be computed using a convex optimization program (Line \ref{line:convex-opt} of Algorithm \ref{algorithm: adversarial attack opt}) in $n_3$ dimensions. We claim it is sufficient to look at these two points for each training example $\x_i$. 

{\it Proof of correctness}. To argue correctness of Algorithm \ref{algorithm: adversarial attack opt}, suppose an adversary wins by perturbing to some point $\v$. Then $\v$ must be closer to some point $\x_i$ than all $\w_j\in C$ (the set of training points with same label as $\x$) and within $\tau$ of $\x_i$. If $\u_i$ is closer to $\x_i$ than all $\w_j\in C$ then, it must be at least as close as $\v$ (since $\v$ is in the adversarial subspace $S$) and therefore within $\tau$ of $\x_i$.

Otherwise there is some $\w_j$ closer to $\u_i$ than $\x_i$. Let $H$ be the convex polytope of points closer to $\x_i$ than $\w_j$'s in $C$. Consider the intersection $A$ of $H$ with $S$. All points in $A$ are misclassified by our algorithm, if within the threshold $\tau$. $\v$ must lie within $A$ since it is closer to $\x_i$. $\u_i$ must lie outside of $A$ in this case. If $\v$ is within $\tau$ of $\x_i$, so is $\u_i$ and therefore also the line joining the two. If this line intersects $A$ at point $\v$, then $\v$ is a valid adversarial point and so is point closest to $\x_i$ in $A$. This proves completeness of the algorithm, soundness is more straightforward to verify.

\comment{
{\it Overview of Algorithm \ref{algorithm: adversarial attack opt}}: If the point $\u_i$ closest to the training point $\x_i$ of different label than test point $\x$ in the adversarial subspace $\cS$ (slight abuse of notation to refer to $\x+\cS$ as $\cS$) is closer to $\x_i$ than any training point $\w_j$ with the same label as $\x$ and within the threshold $\tau$ of $\x_i$, it will be misclassified as $\x_i$ (or potentially another point of an incorrect label). If however $\u_i$ is closer to some $\w_j$, we look at the points closer to $\x_i$ than all $\w_j$ in the subspace $\cS$, and consider the closest point $\z_i$ to $\x_i$ (if it is within threshold $\tau$) which should be misclassified. This can be computed using a convex optimization program (Line \ref{line:convex-opt} of Algorithm \ref{algorithm: adversarial attack opt}) in $n_3$ dimensions. We claim it is sufficient to look at these two points for each training example $\x_i$. See Appendix \ref{app:proof-attack-algo} for a proof of correctness.
}

\medskip
\noindent\textit{Experimental results.} We summarize our results in Table \ref{table: robustness n3 1}. Comparing with a linear protocol, our algorithms have much lower robust error. Note that even if abstention is added based on distance from the linear boundary, sufficiently large perturbations will ensure the adversary can always succeed. 
For an approximate adversary which can be efficiently implemented for large $n_3$, see Appendix \ref{app:attack-large-n3}.

\begin{figure}[t]
\centering
\includegraphics[scale=0.5]{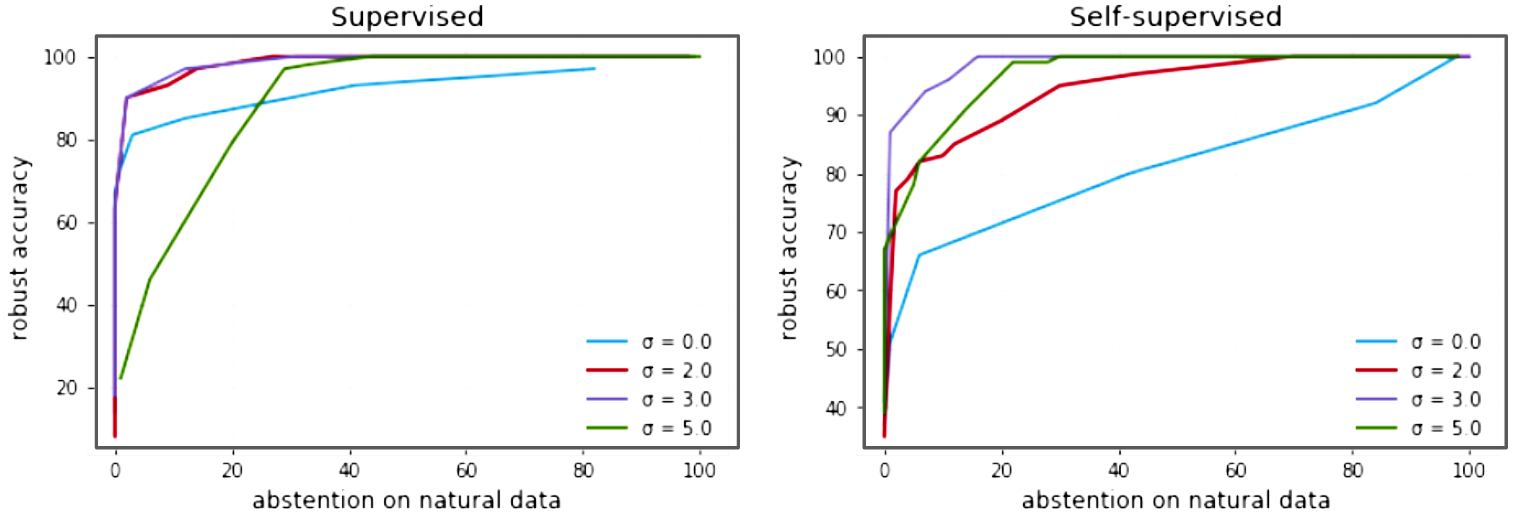}
\caption{Adversarial accuracy (that is, rate of adversary failure) vs. abstention rate as threshold $\tau$ varies for $n_3=1$ and different outlier removal thresholds $\sigma$. Each colored line corresponds to a fixed $\sigma$, as $\tau$ is varied from 0 (always abstain) to infinity (vanilla nearest-neighbor).}
\label{figure: tau1}
\end{figure}

\subsection{Robustness-abstention Trade-off}
The threshold parameter $\tau$ captures the trade-off between the robust accuracy $\cA_{\adv}:=1-\cE_{\adv}$ and the abstention rate $\cD_{\nat}$ on the natural data. We report both metrics for different values of $\tau$ for supervised and self-supervised contrastive learning. The supervised setting enjoys higher adversarial accuracy and a smaller abstention rate for fixed $\tau$'s due to the use of extra label information. We plot $\cA_{\adv}$ against $\cD_{\nat}$ for Algorithm \ref{algorithm: robust separated inference-time classifier} as hyperparameters vary. For small $\tau$, both accuracy and abstention rate approach 1.0. As the threshold increases, the abstention rate decreases rapidly and our algorithm enjoys good accuracy even with small abstention rates. For $\tau\rightarrow\infty$ (that is the nearest neighbor search), the abstention rate on the natural data $\cD_\nat$ is $0\%$ but the robust accuracy is also roughly $0\%$. 
Increasing $\sigma$ (for small $\sigma$) gives us higher robust accuracy for the same abstention rate. Too large $\sigma$ may also lead to degraded performance (Figure \ref{figure: tau1}). 

\blue{
\section{Discussion and Conclusion}
\label{sec:discussion}

We propose a model to study robustness of non-Lipschitz networks, against an adversary whose perturbations  modify the features in a random low-dimensional subspace.
Our first result is that in our model if the learner does not use any abstention, then the adversary
will succeed for any data distribution. To complement our lower bound, we present a threshold-equipped nearest-neighbor classifier that simultaneously achieves low robust error as well as low abstention rate on natural data. Our robust error guarantee is independent of the distribution, and is small as long as the label classes are well-separated in the feature space. Our bounds for  abstention rate scale with the covering number of the distribution, and hold for sufficiently large training set size $m$. Our positive results indicate a trade-off between the robust error and abstention rate. 
We further show how one may tune the threshold to minimize a combination of robust error and abstention rate using techniques from data-driven algorithm design. We also validate our positive results empirically for contrastive learning based deep networks.

Adversarial robustness is an important challenge for the practical deployment of deep networks. We believe we should analyze different types of adversaries beyond classic ones ($\ell_\infty$, $\ell_2$, $\ell_1$ bounded-norm perturbations) which have largely been the focus in our community.  We view our contribution as defining and analyzing a new and interesting type of adversary designed to help in  studying the robustness of non-Lipschitz networks.  It is an interesting open question to provide new families of adversaries as well as defenses for them, since bounded-norm models are limited in their ability to capture all possible realistic attacks.}

\section*{Acknowledgments}
This work was supported in part by the National Science Foundation under grants CCF-1815011, CCF-1535967, IIS-1901403, CCF-1910321, SES-1919453, the Defense Advanced Research Projects Agency under cooperative agreement HR00112020003, the NSERC Discovery Grant RGPIN-2022-03215, DGECR-2022-00357, an AWS Machine Learning Research Award, a Microsoft Research Faculty Fellowship, and a Bloomberg Research Grant. The views expressed in this work do not necessarily reflect the position or the policy of the Government and no official endorsement should be inferred. Approved for public release; distribution is unlimited.

\bibliographystyle{plainnat}
\bibliography{reference.bib}

\newpage
\appendix

\section{Technical Lemmas Needed for Results in Section \ref{sec: positive results}}
\label{sec:upper bound beta function}

\blue{The following lemma gives a bound on the fraction of the surface of the sphere at some fixed small distance from the subspace in Theorem \ref{theorem: improved bound}. The bound involves a geometric calculation of a surface element of a sphere in $\R^n$.}

\begin{lemma}\label{lem:hsphere} The fraction of the surface of the unit $(n-1)$-sphere at a distance at most small $\varepsilon=o(1)$ from a fixed $(n-k)$-hyperplane through its center is at most $\frac{2\varepsilon^k}{k}\cdot\frac{A(k-1)A(n-k-1)}{A(n-1)}$, where $A(m)$ is the surface-area of the unit $m$-sphere embedded in $m+1$ dimensions.

\end{lemma}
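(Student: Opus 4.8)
The plan is to slice the sphere by the distance to the hyperplane, reducing the quantity of interest to a one-dimensional integral, and then bound that integral crudely using $\varepsilon=o(1)$.

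First, by rotational invariance of the uniform surface measure on $S^{n-1}$, I would assume without loss of generality that the fixed $(n-k)$-hyperplane through the center is the coordinate subspace $\{x_1=\dots=x_k=0\}$. Then the Euclidean distance from a point $z=(z_1,\dots,z_n)\in S^{n-1}$ to that hyperplane equals $\sqrt{z_1^2+\dots+z_k^2}$, so the set in question is $E_\varepsilon:=\{z\in S^{n-1}:z_1^2+\dots+z_k^2\le\varepsilon^2\}$, and the goal becomes to show $A(E_\varepsilon)\le\frac{2\varepsilon^k}{k}A(k-1)A(n-k-1)$.

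Next I would introduce bi-polar coordinates adapted to the splitting $\R^n=\R^k\times\R^{n-k}$, writing $z=(\rho\,\xi,\ \sqrt{1-\rho^2}\,\eta)$ with $\rho\in[0,1]$, $\xi\in S^{k-1}$, $\eta\in S^{n-k-1}$. Differentiating this parametrization, one checks that the $d\rho$, $d\xi$, and $d\eta$ directions are mutually orthogonal, that the $\rho$-direction has length $(1-\rho^2)^{-1/2}$, that each of the $k-1$ independent $\xi$-directions is scaled by $\rho$, and that each of the $n-k-1$ independent $\eta$-directions is scaled by $\sqrt{1-\rho^2}$; hence the surface measure factors as
\[
dA_{S^{n-1}}=\rho^{\,k-1}(1-\rho^2)^{(n-k)/2-1}\,d\rho\;dA_{S^{k-1}}(\xi)\;dA_{S^{n-k-1}}(\eta).
\]
Integrating over $E_\varepsilon$ (that is, $\rho\in[0,\varepsilon]$ with $\xi,\eta$ unrestricted) gives $A(E_\varepsilon)=A(k-1)A(n-k-1)\int_0^\varepsilon\rho^{k-1}(1-\rho^2)^{(n-k)/2-1}\,d\rho$. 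Finally I would bound the scalar integral: since $\varepsilon=o(1)$, on $[0,\varepsilon]$ the factor $(1-\rho^2)^{(n-k)/2-1}$ is at most $\max\{1,(1-\varepsilon^2)^{(n-k)/2-1}\}\le 2$ once $\varepsilon$ is small enough (the only case with a negative exponent is $n-k=1$, where this factor still tends to $1$), so $\int_0^\varepsilon\rho^{k-1}(1-\rho^2)^{(n-k)/2-1}\,d\rho\le 2\int_0^\varepsilon\rho^{k-1}\,d\rho=\frac{2\varepsilon^k}{k}$, and dividing by $A(n-1)$ yields the claimed bound.

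The step I expect to be the main obstacle is getting the Jacobian factor $\rho^{k-1}(1-\rho^2)^{(n-k)/2-1}$ in the bi-polar decomposition exactly right. I would sanity-check it using the closed form $A(m)=2\pi^{(m+1)/2}/\Gamma((m+1)/2)$: the substitution $t=\rho^2$ gives $\int_0^1\rho^{k-1}(1-\rho^2)^{(n-k)/2-1}\,d\rho=\tfrac12 B(k/2,(n-k)/2)$, whence $A(k-1)A(n-k-1)\cdot\tfrac12 B(k/2,(n-k)/2)=A(n-1)$, confirming the decomposition integrates to the correct total area. Alternatively, one can bypass the Jacobian entirely: generate a uniform point on $S^{n-1}$ as $G/\|G\|$ with $G$ standard Gaussian in $\R^n$, observe $z_1^2+\dots+z_k^2\sim\mathrm{Beta}(k/2,(n-k)/2)$, bound its CDF at $\varepsilon^2$ by discarding the factor $(1-t)^{(n-k)/2-1}\le 1$, and convert the Beta normalization into surface areas via the same Gamma identity; this route even saves the factor of $2$.
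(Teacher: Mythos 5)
Your proof is correct and follows essentially the same route as the paper: both split $\R^n=\R^k\times\R^{n-k}$, express the spherical surface measure as $\rho^{k-1}(1-\rho^2)^{(n-k)/2-1}\,d\rho\,dA_{S^{k-1}}\,dA_{S^{n-k-1}}$ (up to normalization), and bound the resulting one-dimensional integral by $2\varepsilon^k/k$. The only cosmetic difference is where the factor of $2$ enters---the paper extracts it by combining the $dr$ and $d\rho$ terms of its surface element, whereas you obtain it by bounding $(1-\rho^2)^{(n-k)/2-1}\le 2$ for small $\varepsilon$ (needed only when $n-k=1$)---and your direct parametrization together with the Beta-function sanity check is, if anything, the cleaner derivation.
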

\begin{proof}
Let the fixed hyperplane be $x_{1}=x_{2}=\dots=x_{k}=0$. We change the coordinates to a product of spherical coordinates ($\rho$ is the distance from the hyperplane, $r$ is the orthogonal component of the radius vector).

\[x_j=\begin{cases} 
\rho S_{j-1}\cos\phi_j, & \mbox{if } j < k; \\
\rho S_{j-1}, & \mbox{if } j = k; \\
r T_{j-k-1}\cos\alpha_{j-k}, & \mbox{if }  k < j < n; \\
r T_{j-k-1}, & \mbox{if }  j = n.
\end{cases}
\]
where $S_l=\prod_{i=1}^l\sin\phi_i$ and $T_l=\prod_{i=1}^l\sin\alpha_i$. The desired surface area is easier to compute in the new coordinate system.

The new coordinates are $(y_1,\dots,y_n)=(\rho,\phi_1,\phi_2,\dots,\phi_{k-1},r,\alpha_1,\dots,\alpha_{n-k-1})$. Let $z=\sqrt{r^2+\rho^2}=\sqrt{\sum_{i=1}^nx_i^2}$ denote the usual radial spherical coordinate. Volume element in this new coordinate system is given by
$$dV=|\det(J)|\,d\rho\,  d\phi_1\dots d\phi_{k-1} dr\, d\alpha_1\dots d\alpha_{n-k-1},$$
where $J$ is the Jacobian matrix, $J_{ij}=\frac{\partial x_i}{\partial y_j}$.
We can write
\[
J=\begin{bmatrix}
A&0\\
0&B
\end{bmatrix},
\]
where $A_{ij}=\frac{\partial x_i}{\partial y_j}$ for $1\le i,j\le k$ and $B_{ij}=\frac{\partial x_{i+k}}{\partial y_{j+k}}$ for $1\le i,j\le n-k$.
By Leibniz formula for determinants, it is easy to see
\begin{align*}\det(J)&=\det(A)\cdot\det(B)\\
&=\rho^{k-1}\left(\prod_{i=1}^{k-2}\sin^{k-i-1}\phi_i\right)\cdot r^{n-k-1}\left(\prod_{i=1}^{n-k-2}\sin^{n-k-i-1}\alpha_i\right)\\
&=\rho^{k-1}r^{n-k-1}\left(\prod_{i=1}^{k-2}\sin^{k-i-1}\phi_i\right)\left(\prod_{i=1}^{n-k-2}\sin^{n-k-i-1}\alpha_i\right).
\end{align*}
Now the surface element is given by $$dS=\frac{1}{z^{n-1}}\frac{dV}{dz}=\frac{1}{z^{n-1}}\left(\frac{dV}{dr}\frac{\partial r}{\partial z}+\frac{dV}{d\rho}\frac{\partial \rho}{\partial z}\right)=\frac{1}{rz^{n-2}}\frac{dV}{dr}+\frac{1}{\rho z^{n-2}}\frac{dV}{d\rho}.$$
Plugging in our computation for $dV$,
$$dS=\left(\frac{\rho^{k-1}r^{n-k-2}}{z^{n-2}}\,d\rho+\frac{\rho^{k-2}r^{n-k-1}}{z^{n-2}}\,dr\right) \left(\prod_{i=1}^{k-2}\sin^{k-i-1}\phi_id\phi_i\right)\left(\prod_{i=1}^{n-k-2}\sin^{n-k-i-1}\alpha_id\alpha_1\right).$$
We care about $z=1$ and $\rho\le\varepsilon$ (or $r\ge \sqrt{1-\varepsilon^2}$). Notice
$$\int_{\sqrt{1-\varepsilon^2}}^1\frac{\rho^{k-2}r^{n-k-1}}{z^{n-2}}\,dr= \int_{\varepsilon}^0\rho^{k-2}r^{n-k-1}\frac{-\rho d\rho}{r}=\int_0^{\varepsilon}\rho^{k-1}r^{n-k-2}d\rho.$$

Thus, using the surface element in the new coordinates and integrating, we get
\[\text{Area of $\varepsilon$-close points}
=A(k-1)A(n-k-1)\cdot 2\int_0^{\varepsilon}\rho^{k-1}r^{n-k-2}d\rho\le A(k-1)A(n-k-1)\cdot \frac{2\varepsilon^k}{k}\]
which gives the desired fraction.
\end{proof}

\noindent\blue{The following lemma establishes a useful convexity property for the adversarial linear subspaces.}

\begin{lemma}\label{lem: convex adversary}
Let $\x,\x'\in \R^{n_2}, \tau\in \R^+$ and $\cS(x',\tau)$ denote the subset of linear subspaces of dimension $n_3$ such that for any $S\in\cS(x',\tau)$ there exists $v\in S$ with $x+v\in \cB(x',\tau)$. The set $\cS(x',\tau)$ is convex.
\end{lemma}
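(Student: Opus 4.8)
The plan is to show that membership in $\cS(x',\tau)$ is governed by a \emph{single linear inequality} in the orthogonal projection onto $S$; the claimed convexity (and, what the proof sketch of Theorem~\ref{theorem: bounded density} actually invokes, measurability) is then immediate.

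First I would reformulate the defining condition: $S\in\cS(x',\tau)$ exactly when the affine subspace $x+S$ meets the closed ball $\cB(x',\tau)$, i.e.\ when $\dist(x',x+S)\le\tau$, i.e.\ when $\|(\I-P_S)w\|\le\tau$, where $w:=x'-x$ and $P_S$ is the orthogonal projection of $\R^{n_2}$ onto $S$. If $\tau\ge\|w\|$ this holds for every $S$ (take $v=0$), so $\cS(x',\tau)$ is the whole space and we are done; assume $\tau<\|w\|$ henceforth.

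The crux is to linearize. Since $\I-P_S$ is itself an orthogonal projection, $(\I-P_S)^\top(\I-P_S)=\I-P_S$, whence
\[
\|(\I-P_S)w\|^2=w^\top(\I-P_S)w=\|w\|^2-w^\top P_S w=\|w\|^2-\langle P_S,\,ww^\top\rangle ,
\]
$\langle\cdot,\cdot\rangle$ being the Frobenius inner product. Hence
\[
S\in\cS(x',\tau)\iff\langle P_S,\,ww^\top\rangle\ \ge\ \|w\|^2-\tau^2 ,
\]
a single linear inequality in the entries of $P_S$ (with positive, hence consistent, right-hand side since $\tau<\|w\|$). Under the standard continuous identification $S\leftrightarrow P_S$ of the space of $n_3$-dimensional subspaces with the rank-$n_3$ orthogonal projections sitting inside the linear space $\mathrm{Sym}(\R^{n_2})$, this exhibits $\cS(x',\tau)$ as that space intersected with the closed half-space $H:=\{M\in\mathrm{Sym}(\R^{n_2}):\langle M,ww^\top\rangle\ge\|w\|^2-\tau^2\}$. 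Taking the ambient convex body to be the Fantope $\mathcal F_{n_3}:=\{M:\0\preceq M\preceq\I,\ \tr M=n_3\}$ --- whose extreme points are precisely the rank-$n_3$ projections --- $\cS(x',\tau)$ is exactly the set of extreme points of $\mathcal F_{n_3}$ that satisfy the linear inequality, i.e.\ it is cut out of the subspace space by one linear constraint; in particular it is closed, hence Borel measurable. Given measurability, the rest of Theorem~\ref{theorem: bounded density} is routine: $\kappa$-boundedness yields $\Pr_S[S\in\cS(x',\tau)]\le\kappa\cdot\mathrm{Haar}(\cS(x',\tau))\le\frac{\kappa}{m}\E(\tau,r)$ by the volume estimate of Section~\ref{sec:upper bound beta function}, and a union bound over the $m$ training points finishes the argument.

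I expect the linearization to be the only real obstacle: in a concrete chart (e.g.\ writing $S$ as the graph of a linear map) $\dist(x',x+S)$ is a manifestly nonlinear, rational function of the parameters of $S$, so it is not obvious that $\cS(x',\tau)$ is defined by a \emph{linear} condition --- the point is that idempotency of $P_S$ collapses the apparently quadratic $w^\top P_S^\top P_S w$ to the linear $\langle P_S,ww^\top\rangle$. The remainder is bookkeeping: pinning down the projection/Fantope identification so that ``convex'' is literally meaningful, and recalling that $S\mapsto P_S$ is continuous so the intersection with $H$ is closed.
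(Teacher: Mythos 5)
Your linearization is correct and rigorous: $S\in\cS(x',\tau)$ iff $\|(\I-P_S)w\|\le\tau$ with $w=x'-x$, and idempotency of $\I-P_S$ collapses this to the linear condition $\langle P_S,\,ww^\top\rangle\ge\|w\|^2-\tau^2$, so under the identification $S\mapsto P_S$ the set $\cS(x',\tau)$ is the compact set of rank-$n_3$ orthogonal projections intersected with a closed half-space, hence closed and Borel. That is exactly the property the proof sketch of Theorem~\ref{theorem: bounded density} invokes. However, this is not a proof of the lemma as stated: the rank-$n_3$ projections are the \emph{extreme points} of the Fantope and form a non-convex set (already for $n_2=2$, $n_3=1$ they trace a circle in $\mathrm{Sym}(\R^2)$, and a circle cut by a half-plane is an arc), so in your own embedding $\cS(x',\tau)$ is generally \emph{not} convex; your argument delivers closedness and measurability, not convexity.

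The paper's proof takes a genuinely different, more literal route: given $S,S'\in\cS(x',\tau)$ with witnesses $v\in S$, $v'\in S'$, it forms $S^*=\alpha S+(1-\alpha)S'$, picks $v^*=\alpha v+(1-\alpha)v'\in S^*$, and concludes $x+v^*\in\cB(x',\tau)$ from convexity of the ball. This presupposes a convex-combination operation on $n_3$-dimensional subspaces; read as a Minkowski combination, $\alpha S+(1-\alpha)S'$ equals $S+S'$ for $\alpha\in(0,1)$ and can have dimension up to $2n_3$, leaving the Grassmannian, so the paper's notion of convexity is itself informal. In short, your route proves the weaker property that the downstream theorem actually uses, and does so cleanly; to match the lemma you would need either to replace ``convex'' by ``closed/measurable'' or to specify the convex structure on the space of $n_3$-dimensional subspaces in which the stated convexity is meant to hold.
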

\begin{proof}
Let $S,S'\in \cS(x',\tau)$. Then we have $v\in S, v'\in S'$ such that $x+v,x+v'\in \cB(x',\tau)$. Let $S^*=\alpha S+(1-\alpha)S'$, $\alpha\in[0,1]$. Pick $v^*=\alpha v+(1-\alpha)v' \in S^*$. $x+v^*$ must lie in $\cB(x',\tau)$ by convexity of $\cB(x',\tau)$.
\end{proof}

\section{Error Upper Bound with Outlier Removal}\label{app:outlier}
\blue{
Our results will be good for distributions for which the induced distribution $\cD_{\sigma}$ after the preprocessing step of Algorithm \ref{algorithm: robust separated inference-time classifier} satisfies the following property with small $N=\sum_y|\cB^y|$.

\begin{definition} 
\label{definition: separably covering}
A distribution $\cD$ is $\sigma$-separably $\{\cB^y\}$-coverable if all points in the support of the marginal distribution $\cD_{F(\cX)\mid y}$ over $\R^{n_2}$ can be covered by balls in the set $\cB^y=\{\bbB_1^y$, \dots, $\bbB_{N_y}^y\}$, of radius $\tau/2$ such that $$\min_{\substack{F(\x)\in \bbB_i^y,F(\x')\in \bbB_j^{y'},\\y\ne y'}} \dist(F(\x),F(\x'))\ge \sigma.$$
\end{definition}

\noindent In addition, we will assume that a test point $(\x,y)$ from the natural distribution $\cD$ has the property that $\x$ is covered by some ball in $\cB^y$ with high probability. 

\begin{theorem}\label{theorem: robust accuracy with separation}
Suppose the  distribution $\cD_{\sigma}$ induced by the preprocessing step of Algorithm \ref{algorithm: robust separated inference-time classifier} is $\sigma$-separably $\{\cB^y\}$-coverable with finite $N=\sum_y|\cB^y|$. Let $\Pr_{\x,y\sim \cD}[\x\in\cup_{\bbB_i\in\cB^y}\bbB_i]\ge 1-\gamma$.  If $\tau=o(\sigma)$, the robust error of Algorithm \ref{algorithm: robust separated inference-time classifier} on any test point $\x\sim\cD_{F(\cX)}$ is at most $${\cO}\left(
N\Bigg(\frac{c\tau}{(\sigma+\tau/2)\sqrt{1-\frac{n_3}{n_2}}}\Bigg)^{n_2-n_3}+Nc_0^{n_2-n_3} +\gamma\right),$$
where $c>0$ and $0<c_0<1$ are absolute constants.
\end{theorem}
\begin{proof}
Let $\x,y\sim \cD$. We will bound the probability the adversary succeeds for a  test point $\x$ covered  by $\cup_y\cB^y$, that is $\Pr[\text{adversary succeeds on }\x\mid \x\in\cup_{\bbB_i\in\cB^y}\bbB_i]$. Let $\x_i$ be a training point that survives the preprocessing step of Algorithm \ref{algorithm: robust separated inference-time classifier}, and belongs to a different class than $\x$. By the covering assumption, $\x_i\in\bbB^{y'}_j$ for some $y'\ne y$ and $\bbB^{y'}_j\in\cB^{y'}$. Let $\c$ denote the center of $\bbB^{y'}_j$. By the $\sigma$-separable property, we have $\dist(\x,\c)\ge \sigma+\tau/2$. Moreover, to succeed by perturbing close to any training point in $\bbB^{y'}_j$, the adversary must perturb to a point at distance at most $\tau+\tau/2=3\tau/2$ from $\c$ (by triangle inequality).

Using the same argument as in Theorem \ref{theorem: positive results}, the adversary succeeds in causing misclassification by perturbing $\x$ close to a point in $\bbB^{y'}_j$ with probability at most 
$$\Bigg(\frac{c\tau}{(\sigma+\tau/2)\sqrt{1-\frac{n_3}{n_2}}}\Bigg)^{n_2-n_3}+c_0^{n_2-n_3}$$
over the randomness of the adversarial subspace, for absolute constants $c>0$ and $0<c_0<1$. By a union bound, the adversary's success probability is at most $N$ times the above quantity, conditioned on $\x\in\cup_{\bbB_i\in\cB^y}\bbB_i$. Finally by assumption $\Pr_{\x,y\sim \cD}[\x\notin\cup_{\bbB_i\in\cB^y}\bbB_i]\le \gamma$, and using the law of total probability we get the desired upper bound.
\end{proof}
}

\section{New Lemmas and Results from Prior Work needed to prove Theorem \ref{thm:regret}}\label{app:dd}
\blue{We begin with an observation, which allows us to focus on small $\tau$. In particular we note that the nearest-neighbor distance for most points is $O(m^{-1/n_2})$, and therefore searching for threshold in the range $[0,\tau_{\max}]$ with $\tau_{\max}=O(m^{-1/n_2})$ is sufficient for almost no abstention. This can provide a useful guide in setting $\tau_{\max}$ in Theorem \ref{thm:regret}.} 
To simplify our results, we will treat $n_2,n_3$ as constants in the following.  
\begin{lemma}\label{lem:small-tau}
Let $\Phi$ be a distribution defined on a compact convex subset $C$ of $\R^n$ whose density function $\phi$ is continuous and strictly positive  on $C$ (that is $\phi(\x)>0$ for $\x\in C$), and has bounded partial derivatives throughout $C$. If $m$ samples $B=\{\beta_1,\dots,\beta_m\}$ are drawn from $\Phi$, for any $\beta_i$ the probability that the distance $d_i$ to its nearest neighbor in $B$ is not $O(m^{-1/n})$ is $o(1)$.
\end{lemma}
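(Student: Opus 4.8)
\noindent\emph{Proof plan.} The plan is to run the classical ``the mass of a small ball around $\beta_i$ is polynomially large, so a nearest neighbor lands in it'' argument; the only nontrivial ingredient is a \emph{uniform} lower bound on the $\phi$-mass of a radius-$r$ ball centered at an arbitrary point of $C$. First I would observe that since $\phi$ is continuous and strictly positive on the compact set $C$, there exists $p_{\min}>0$ with $\phi\ge p_{\min}$ on $C$; hence for every $\beta_i\in C$ and every $r>0$,
\[
q(r)\;:=\;\Pr_{\beta\sim\phi}\big[\beta\in\bbB(\beta_i,r)\big]\;\ge\;p_{\min}\cdot\vol\big(\bbB(\beta_i,r)\cap C\big).
\]
(Only strict positivity, continuity, and compactness enter here; the ``bounded partial derivatives'' hypothesis of the lemma is not needed for this part.)

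The geometric heart of the argument is the claim that there are constants $\kappa,r_0>0$, depending only on $n$ and $C$, with $\vol(\bbB(\beta_i,r)\cap C)\ge\kappa\,r^{n}$ for \emph{all} $\beta_i\in C$ and all $r\le r_0$. I would prove this using convexity: being a Lebesgue density on $C$ forces $C$ to have positive volume, hence (convexity) nonempty interior, so $C$ contains a ball $\bbB(x_0,\rho)$; writing $D=\mathrm{diam}(C)$, convexity gives $\mathrm{conv}\big(\{\beta_i\}\cup\bbB(x_0,\rho)\big)\subseteq C$, and within distance $r$ of $\beta_i$ this ``ice-cream cone'' contains a solid cone of half-angle at least $\arcsin(\rho/D)$, whose volume is $\ge\kappa\,r^{n}$ with $\kappa=\Omega_n\!\big((\rho/D)^{n-1}\big)$. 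Thus $q(r)\ge\kappa' r^{n}$ for all $r\le r_0$, with $\kappa':=p_{\min}\kappa>0$.

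Given this, the probabilistic part is routine. Conditioning on $\beta_i$, the event $\{d_i>r\}$ is exactly the event that none of the other $m-1$ i.i.d.\ draws falls in $\bbB(\beta_i,r)$, so
\[
\Pr[d_i>r]\;\le\;(1-q(r))^{m-1}\;\le\;\exp\!\big(-(m-1)\,\kappa' r^{n}\big),
\]
and since this bound does not depend on $\beta_i$ it holds unconditionally. Taking $r=A\,m^{-1/n}$ (which is $\le r_0$ once $m$ is large) makes the right-hand side at most $\exp(-\tfrac12\kappa' A^{n})$ for all large $m$; choosing the absolute constant $A$ large enough — equivalently, taking $r=(C_1\log m/m)^{1/n}$ to get a bound of $m^{-\Omega(1)}$ — makes this $o(1)$, which is the claim that $d_i=O(m^{-1/n})$ except on an event of probability $o(1)$. (The $(\log m/m)^{1/n}$ choice also survives a union bound over all $i\in[m]$, should the all-points version be needed elsewhere in Appendix~\ref{app:dd}.)

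The step I expect to be the real obstacle is the uniform volume bound $\vol(\bbB(\beta_i,r)\cap C)=\Omega(r^{n})$: the lazy estimate $\vol(\bbB(\beta_i,r)\cap C)\ge\tfrac12\vol(\bbB(\beta_i,r))$ is false when $\beta_i$ sits at a sharp vertex of $C$, so convexity must be used essentially (a point of a convex body always ``sees'' into it a cone of solid angle bounded below), and one must be a little careful about the transition region where $\beta_i$ is near or inside the inscribed ball $\bbB(x_0,\rho)$ and about how $\kappa$ degrades with the aspect ratio $\rho/D$ of $C$. Once that geometric lemma is in hand, the rest is just a Chernoff bound.
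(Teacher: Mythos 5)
Your proposal is correct, but it takes a genuinely different route from the paper. The paper's proof is a two-line citation argument: it invokes the asymptotic moment computations of Evans et al.\ (2002) to get that the mean nearest-neighbor distance is $O(m^{-1/n})$ and the variance is $O(m^{-2/n})$, and then applies Chebyshev's inequality; the ``bounded partial derivatives'' hypothesis is what licenses the cited asymptotics. You instead give a self-contained small-ball argument: a uniform lower bound $q(r)\ge \kappa' r^{n}$ on the $\phi$-mass of $\bbB(\beta_i,r)$ (continuity plus strict positivity plus compactness for the density floor, convexity for the volume floor), followed by $\Pr[d_i>r]\le(1-q(r))^{m-1}\le e^{-(m-1)\kappa' r^{n}}$. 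This buys you two things the paper's proof does not have: the bounded-derivative hypothesis becomes unnecessary, and the tail bound is exponential rather than the $1/t^{2}$ of Chebyshev, so it survives a union bound over all $m$ points at the cost of only a $(\log m)^{1/n}$ factor, as you note. The one step you flag as delicate — the uniform bound $\vol(\bbB(x,r)\cap C)=\Omega(r^{n})$ for all $x\in C$ — is correct but can be obtained more cleanly than via the ice-cream-cone construction: for any $x\in C$ and $r\le D=\mathrm{diam}(C)$, convexity gives $x+\frac{r}{D}(C-x)\subseteq C\cap\bbB(x,r)$, and this homothetic copy has volume $(r/D)^{n}\vol(C)$, which yields $\kappa=\vol(C)/D^{n}$ with no case analysis near the inscribed ball. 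Your reading of the (admittedly informal) conclusion is also the right one: the exponential bound shows $\Pr[d_i>g(m)]=o(1)$ for every $g(m)=\omega(m^{-1/n})$, which is what the paper's Chebyshev argument establishes as well.
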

\begin{proof}
We use the asymptotic moments of nearest neighbor distance distribution due to \cite{evans2002asymptotic} together with a concentration inequality to complete the proof. Indeed, the asymptotic mean nearest neighbor distance is shown to be $O(m^{-1/n})$, and the variance is $O(m^{-2/n})$. By Chebyshev's inequality, the probability that $d_i$ is outside $\omega(1)$ standard deviations is $o(1)$.
\end{proof}

\noindent\blue{We will need the following lemma about Lipschitzness of $\cE_\adv(\tau)$. The argument can also be adapted to bounded density adversary (Corollary \ref{cor:bounded density}), and to show a bound on the breakpoints in $\hat{\cE}_\adv(\tau)$ (Corollary \ref{cor:estimated-loss}).}

\begin{lemma}\label{lem:acc-lipschitzness}
If $\tau\le \tau_{\max}=o\left(r\right)$, $\cE_\adv(\tau)$ is $O\left(m\tau_{\max}^{n_2-n_3-1}/r^{n_2-n_3}\right)$-Lipschitz.
\end{lemma}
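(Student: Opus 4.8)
The plan is to exploit monotonicity to reduce to a one-sided increment bound: since $\cE_\adv(\tau)$ is non-decreasing in $\tau$, it suffices to show $\cE_\adv(\tau+\epsilon)-\cE_\adv(\tau)\le L\,\epsilon$ for every admissible $\tau$ and small $\epsilon>0$, with $L=O(m^{(n_3+1)/n_2}/r^{n_2-n_3})$ (constants in $n_2,n_3$ suppressed throughout, as elsewhere in this section). Writing $\cE_\adv(\tau)=\frac{1}{|\cT|}\sum_{(\x,\y)\in\cT}q_\x(\tau)$ with $q_\x(\tau):=\Pr_{S\sim\cS}[\textup{the adversary wins on }\x\textup{ at threshold }\tau]$, I would bound each $q_\x(\tau+\epsilon)-q_\x(\tau)$ separately. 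Fixing $\x$ with true label $\y$ and a subspace $S$, the adversary wins at threshold $\tau$ iff the affine subspace $S+F(\x)$ meets $A_i\cap\bbB(F(\x_i),\tau)$ for some training point $F(\x_i)$ with $y_i\ne\y$, where $A_i$ is the Voronoi cell of $F(\x_i)$. This event is monotone in $\tau$, so if it fails at $\tau$ but holds at $\tau+\epsilon$ then $S+F(\x)$ must reach the thin shell $\bbB(F(\x_i),\tau+\epsilon)\setminus\bbB(F(\x_i),\tau)$ through $A_i$ for some wrong-label $F(\x_i)$; equivalently $\dist\bigl(F(\x_i),(S+F(\x))\cap A_i\bigr)\in[\tau,\tau+\epsilon)$.

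The second step is the probability estimate. Union-bounding over the at most $m$ training points $F(\x_i)$ with $y_i\ne\y$, and writing $D_i:=\dist(F(\x),F(\x_i))\ge r$ and $p(\tau,D):=\Pr_{S\sim\cS}[\dist(S+F(\x),F(\x_i))<\tau]$, the computation behind Theorem~\ref{theorem: improved bound} (via Lemma~\ref{lem:hsphere}) shows that for $\tau=o(D)$ one has $p(\tau,D)=\Theta\bigl((\tau/D)^{n_2-n_3}\bigr)$, a smooth, strictly increasing function of $\tau$ with derivative $\Theta\bigl(\tau^{n_2-n_3-1}/D^{n_2-n_3}\bigr)$. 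Bounding the per-point increment by $p(\tau+\epsilon,D_i)-p(\tau,D_i)$ and applying the mean value theorem contributes $O\bigl(\xi^{\,n_2-n_3-1}r^{-(n_2-n_3)}\bigr)\epsilon$ for some $\xi\in(\tau,\tau+\epsilon)$; using $D_i\ge r$, $n_2-n_3-1\ge 0$, and the hypothesis $\tau=o(m^{-1/n_2})$ (so $\xi^{\,n_2-n_3-1}=o(m^{-(n_2-n_3-1)/n_2})$), then summing over the $\le m$ wrong-label training points and averaging over $\cT$, the total increment is $O\bigl(m\cdot m^{-(n_2-n_3-1)/n_2}/r^{n_2-n_3}\bigr)\epsilon=O\bigl(m^{(n_3+1)/n_2}/r^{n_2-n_3}\bigr)\epsilon$, which is the claimed constant.

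The hard part will be justifying, in the first step, the replacement of the crude hitting probability $p(\tau+\epsilon,D_i)$ by the \emph{increment} $p(\tau+\epsilon,D_i)-p(\tau,D_i)$. This is immediate whenever the closest point of $S+F(\x)$ to $F(\x_i)$ already lies in $A_i$, since then $\dist(S+F(\x),F(\x_i))=\dist(F(\x_i),(S+F(\x))\cap A_i)$. The remaining, delicate case is ``shielding'': $S+F(\x)$ passes within $\tau$ of $F(\x_i)$ but its closest approach falls in the Voronoi cell of a \emph{same-label} training point $F(\x_k)$, so the win materializes only at $\tau+\epsilon$. Geometrically this forces $\dist(F(\x_i),F(\x_k))<2(\tau+\epsilon)$ and the subspace to be nearly tangent to the $A_i$--$A_k$ boundary at distance $\approx\tau$ from $F(\x_i)$; I would argue that, under the standing assumptions of this section (continuous, $\kappa$-bounded, positive density with bounded partial derivatives) together with Lemma~\ref{lem:small-tau}, the measure of such subspaces shrinks to $0$ with $\epsilon$ and is in fact $o(\epsilon\cdot m^{(n_3+1)/n_2}/r^{n_2-n_3})$ for $\tau=o(m^{-1/n_2})$, so it is absorbed into the stated Lipschitz constant.
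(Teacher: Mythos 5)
Your proposal follows essentially the same route as the paper's proof: reduce Lipschitzness to bounding the one-sided increment, union-bound over the at most $m$ wrong-label training points, bound each per-point contribution by the probability that the subspace's distance to that point falls in the shell $[\tau,\tau+\epsilon)$ --- which by the spherical-coordinate computation behind Lemma~\ref{lem:hsphere} is $O\bigl(\tau^{n_2-n_3-1}\epsilon/r^{n_2-n_3}\bigr)$ --- and then substitute $\tau=O\bigl(m^{-1/n_2}\bigr)$ to get $m\cdot m^{-(n_2-n_3-1)/n_2}/r^{n_2-n_3}=m^{(n_3+1)/n_2}/r^{n_2-n_3}$. That core computation is exactly the paper's, and it is correct.

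The one place you diverge is the ``shielding'' issue you flag at the end, and there your proposed fix is the wrong tool. You suggest controlling the measure of subspaces whose closest approach to a wrong-label point $F(\x_i)$ lands in the Voronoi cell of a same-label point $F(\x_k)$ by appealing to the $\kappa$-boundedness of the \emph{data} distribution; but for a fixed training set the relevant randomness is over subspaces, not data, so that argument does not apply as stated, and quantifying a near-tangency event at scale $\epsilon$ would require another shell computation anyway. Fortunately the case is vacuous: as you yourself observe, shielding forces $\dist(F(\x_i),F(\x_k))\le 2(\tau+\epsilon)$ for two differently-labeled training points, which contradicts $\dist(F(\x_i),F(\x_k))\ge r$ once $\tau+\epsilon<r/2$, guaranteed by the hypothesis $\tau=o(r)$. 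Concretely: if the subspace's closest point $p_0$ to $F(\x_i)$ satisfies $\dist(p_0,F(\x_i))<\tau$ yet the adversary does not win at threshold $\tau$, then $p_0$'s nearest neighbor is some same-label $F(\x_k)$ with $\dist(p_0,F(\x_k))\le\dist(p_0,F(\x_i))<\tau$, whence $\dist(F(\x_i),F(\x_k))<2\tau<r$, a contradiction; so any ``new win'' at $\tau+\epsilon$ does place $\dist(S+F(\x),F(\x_i))$ in $[\tau,\tau+\epsilon)$ for some wrong-label $F(\x_i)$. With that one-line closure your argument is complete and matches the paper's bound; the paper itself passes over this point silently, so making it explicit is a genuine (if small) improvement.
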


\begin{proof}
Consider the probability that the adversary is able to succeed in misclassifying a test point $x$ as a fixed training point $x'$ (of different label) only when the threshold increases from $\tau$ to $\tau+d\tau$. Scale all distances by a factor of $\frac{1}{\dist(x,x')}=:\frac{1}{r'}$. WLOG, let $x$ be the origin and the adversarial subspace $S$ be given by $x_{n_3+1}=x_{n_3+2}=\dots=x_{n_2}=0$, and $x'$ is the uniformly random unit vector $(z_1,\dots,z_{n_2})$. The adversary can win only if the distance $\Delta$ of $x'$ from $S$ is at most $\frac{\tau}{r'}$. Therefore a threshold change of $\tau$ to $\tau+d\tau$ corresponds to $\Delta\in\left(\frac{\tau}{r'},\frac{\tau+d\tau}{r'}\right)$. We observe from the proof of Lemma \ref{lem:hsphere} that
\begin{align*}
    \Pr\left[\Delta\in\left(\frac{\tau}{r'},\frac{\tau+d\tau}{r'}\right)\right]&=C(n_2,n_3)\cdot \int_{\tau/r'}^{(\tau+d\tau)/r'}\rho^{n_2-n_3-1}\left(\sqrt{1-\rho^2}\right)^{n_3-2}d\rho\\&\le C(n_2,n_3)\cdot \frac{\tau^{n_2-n_3-1}d\tau}{r'^{n_2-n_3}},
\end{align*}
where $C(n_2,n_3)=2A(n_3-1)A(n_2-n_3-1)$ is a constant for fixed dimensions $n_2,n_3$. This holds for any test point $\x\in\cT$, and in particular, in average over the test points. Using a union bound over training points we conclude,
\[\cE_\adv(\tau+d\tau)-\cE_\adv(\tau)\le mC(n_2,n_3)\frac{\tau^{n_2-n_3-1}d\tau}{r'^{n_2-n_3}}.\]
The slope bound increases with $\tau$, substituting $\tau\le{\tau_{\max}}$ and $r'\ge r$ gives the desired bound on Lipschitzness.
\end{proof}

\begin{corollary}\label{cor:bounded density}
For a $\Tilde{\kappa}$-bounded adversary distribution $\cS$ in Lemma \ref{lem:acc-lipschitzness}, we have that $\cE_\adv(\tau)$ is $O\left(\Tilde{\kappa}m\tau_{\max}^{n_2-n_3-1}/r^{n_2-n_3}\right)$-Lipshcitz.
\end{corollary} 
\begin{proof}
The proof follows using the same arguments in the proof of Theorem \ref{theorem: bounded density} applied to Lemma \ref{lem:acc-lipschitzness} (instead of our upper bounds on the robust error).
\end{proof}
\blue{
\begin{corollary}\label{cor:estimated-loss}
For $S$ drawn from a $\Tilde{\kappa}$-bounded adversary distribution $\cS$, the expected number of discontinuities of $\cE_\adv(\tau,S)$ in any $\tau$-interval of length $w$ is at most $O\left(\Tilde{\kappa}bmw\tau_{\max}^{n_2-n_3-1}/r^{n_2-n_3}\right)$.
\end{corollary}

\begin{proof}
Consider the interval $[\tau,\tau+w]$. We are interested in bounding the probability that for a given test point $\x$, the smallest threshold $\tau'$ for which the adversary succeeds when perturbing along $S$ (over the draw $S\sim\cS$) lies in the interval $[\tau,\tau+w]$.

For a fixed training point $\x_i$, the probability of adversarial success on any $\x\in\cT$ by perturbing to a point at distance $\tau'\in[\tau,\tau+w]$ from $\x_i$ is  bounded by $O\left(\Tilde{\kappa}w\tau_{\max}^{n_2-n_3-1}/r^{n_2-n_3}\right)$ as argued above (Lemma \ref{lem:acc-lipschitzness}). Taking a union bound over training points $\x_i$ implies the adversary succeeds with probability at most $O\left(\Tilde{\kappa}mw\tau_{\max}^{n_2-n_3-1}/r^{n_2-n_3}\right)$ by perturbing to within $[\tau,\tau+w]$ of some training point. Thus, for $b$ test points the expected number of breakpoints is at most $O\left(\Tilde{\kappa}bmw\tau_{\max}^{n_2-n_3-1}/r^{n_2-n_3}\right)$.

\end{proof}
}

\noindent\blue{The following lemma gives a bound on the expected number of breakpoints in $\cD_\nat(\tau)$, a piecewise constant function in $\tau$, in a small interval of width $w$.}

\begin{lemma}\label{lem:dk-dispersion}
Suppose that the data distribution satisfies the assumptions in Lemma \ref{lem:small-tau}, and further is $\kappa$-bounded. The expected number of discontinuties in $\cD_\nat(\tau)$ in any interval of width $w$ for $\tau\le \tau_{\max}$ is $O(\kappa bmw \tau_{\max}^{n_2-1})$.
\end{lemma}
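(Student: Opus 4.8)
The strategy is to (i) identify the discontinuity set of $\cD_\nat(\tau)$, (ii) reduce the expected discontinuity count in an interval to a one-point probability by linearity of expectation, and (iii) bound that probability by a volume argument. For a test point $\x$ write $\tau_\x:=\min_{i\in[m]}\dist(F(\x),F(\x_i))$ for its distance to the nearest training feature. By the definition of $f_\tau$ (Algorithm~\ref{algorithm: robust separated inference-time classifier} with $\sigma=0$), $f_\tau$ abstains on $F(\x)$ precisely when $\tau<\tau_\x$, so $\tau\mapsto\1\{f_\tau(F(\x))\text{ abstains}\}$ is piecewise constant with its unique jump at $\tau=\tau_\x$. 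Hence $\cD_\nat(\tau)=\frac{1}{|T|}\sum_{(\x,\y)\in T}\1\{\tau<\tau_\x\}$ is piecewise constant with discontinuity locations exactly the multiset $\{\tau_\x:(\x,\y)\in T\}$, and for any fixed interval $I$ of width $w$ the expected number of discontinuities of $\cD_\nat$ lying in $I$ is $\sum_{(\x,\y)\in T}\Pr[\tau_\x\in I]=|T|\cdot\Pr[\tau_\x\in I]$; the correlations among the $\tau_\x$ induced by the shared training sample are irrelevant for linearity.

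It then remains to bound $\Pr[\tau_\x\in I]$ for an interval $I=(a,a+w)$ inside the regime $\tau=o(m^{-1/n_2})$, so that $a+w=o(m^{-1/n_2})$. Condition on the training features. If $\tau_\x\in(a,a+w)$ and $i^\star$ achieves the minimum, then $\dist(F(\x),F(\x_{i^\star}))\in(a,a+w)$, i.e. $F(\x)$ lies in the spherical shell $\{z:a<\dist(z,F(\x_{i^\star}))<a+w\}$; therefore $\{\tau_\x\in I\}\subseteq\bigcup_{i=1}^m\{F(\x)\in\mathrm{Shell}_i\}$, where $\mathrm{Shell}_i$ denotes the shell of radii $a<\rho<a+w$ centered at $F(\x_i)$. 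Each shell has Lebesgue volume $\int_a^{a+w}S_{n_2-1}\rho^{n_2-1}\,d\rho\le S_{n_2-1}(a+w)^{n_2-1}w$, where $S_{n_2-1}$ is the surface area of the unit $(n_2-1)$-sphere. Since $F(\x)$ has density at most $\kappa$ (the marginal of the $\kappa$-bounded data distribution is again $\kappa$-bounded), a union bound over the $m$ centers gives
\begin{equation*}
\Pr[\tau_\x\in I]\ \le\ \kappa\sum_{i=1}^m\vol(\mathrm{Shell}_i)\ \le\ \kappa\,m\,S_{n_2-1}\,(a+w)^{n_2-1}\,w.
\end{equation*}
As $a+w=o(m^{-1/n_2})$ we have $(a+w)^{n_2-1}=o(m^{-(n_2-1)/n_2})$, so for fixed $n_2$ the right-hand side is $O(\kappa\,m^{1/n_2}w)$. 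Multiplying by $|T|$ yields the asserted bound $O(\kappa\,m^{1/n_2}|T|w)$.

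Finally, the restriction to $\tau=o(m^{-1/n_2})$ costs nothing substantive: by Lemma~\ref{lem:small-tau} (applied to a fresh test point together with the $m$ training points) a point's nearest-neighbor distance is $\Theta(m^{-1/n_2})$ with probability $1-o(1)$, so the online procedure only explores this window, which is exactly where the shell-volume estimate is effective. The one point requiring care is the bookkeeping of the conditioning: the discontinuity locations depend jointly on the (possibly random) training sample and on $T$, so one should condition on the training features first --- the shell-volume bound above is uniform over the shell centers --- and only afterwards take the test-point probability and sum over $T$. If one ultimately wants a dispersion statement holding simultaneously over all width-$w$ intervals, this per-interval expectation is combined with a standard net/uniform-convergence argument over the interval's location; I would place that step in the proof of Theorem~\ref{thm:regret} rather than in this lemma.
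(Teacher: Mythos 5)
Your proof is correct and follows essentially the same route as the paper's: identify the discontinuities of $\cD_\nat(\tau)$ with the nearest-neighbor distances of the test points, apply linearity of expectation to reduce to a single-point probability, and bound that probability by a union bound over the $m$ training points using the $\kappa$-bounded density and the volume of a spherical shell of width $w$ and radius $o(m^{-1/n_2})$. Your write-up is in fact somewhat more careful than the paper's (explicit conditioning on the training sample and explicit shell-volume integral), but the argument is the same.
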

\begin{proof} Note that the discontinuities of $\cD_\nat(\tau)$ in an interval $(\tau,\tau+w)$ corresponds to points $(\x,\y)\in T$ such that nearest neighbor distance of $\x$ is in that interval.
\begin{align*}
    E[\text{number of discontinuities in }(\tau,\tau+w)]&=b \Pr[\text{nearest neighbor of a test point }\in (\tau,\tau+w)]\\
    &\le b \Pr[\text{some neighbor of a test point }\in (\tau,\tau+w)]\\
    &\le\kappa bm \vol(\text{spherical shell of radius }\tau\text{ and width }w)\\
    &=\kappa bmO(\tau_{\max}^{n_2-1}w)\\
    &=O(\kappa bmw \tau_{\max}^{n_2-1}).
\end{align*}
\end{proof}

\noindent For the full proof of Theorem \ref{thm:regret}, we will need  a low-regret bound for dispersed functions due to \cite{balcan2018dispersion}.  If the sequence of functions is dispersed (Definition \ref{def:dispersion}), we can bound the regret of a simple exponential forecaster algorithm (Algorithm \ref{alg:ef}) by the following theorem.

\begin{theorem}[\cite{balcan2018dispersion}]\label{thm:dispersion}
Let $u_1 ,\dots , u_T : C \rightarrow [0, 1]$ be any sequence of piecewise $L$-Lipschitz functions that are $(w,k)$-dispersed. Suppose $C \subset \R^d$ is contained in a ball of radius $R$ and $B(\rho^*,w)\subset C$, where $\rho^* = \argmax_{\rho\in C} \sum_{i=1}^T u_i(\rho)$. The exponentially weighted forecaster with $\lambda = \sqrt{d \ln(R/w)/T}$ has expected regret bounded by
$O\left(\sqrt{Td\log(R/w)}+k +TLw\right)$.
\end{theorem}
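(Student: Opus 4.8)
The plan is to run the standard potential-function (continuous exponential weights) analysis, where the only place the combinatorial hypothesis enters is a lower bound on the total weight that is controlled by dispersion. Write $U_t(\rho)=\sum_{s=1}^t u_s(\rho)$ and let the forecaster at round $t$ sample $\rho_t$ from the density $p_t(\rho)\propto\exp(\lambda U_{t-1}(\rho))$ on $C$. Define the normalizing potential $W_t=\int_C\exp(\lambda U_t(\rho))\,d\rho$, with $W_0=\vol(C)$. The quantity I want to bound is $\OPT-\text{ALG}$, where $\OPT=U_T(\rho^*)=\max_\rho U_T(\rho)$ and $\text{ALG}=\bbE\sum_{t=1}^T u_t(\rho_t)=\sum_{t=1}^T\bbE_{\rho\sim p_t}[u_t(\rho)]$. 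I would sandwich $\ln(W_T/W_0)$ between a quantity controlled by $\text{ALG}$ (from above) and a quantity controlled by $\OPT$ (from below).

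For the upper bound, observe that $\ln(W_t/W_{t-1})=\ln\bbE_{\rho\sim p_t}[e^{\lambda u_t(\rho)}]$. Since $u_t\in[0,1]$, the convexity inequality $e^{\lambda x}\le 1+(e^\lambda-1)x$ on $[0,1]$ together with $\ln(1+z)\le z$ gives $\ln(W_t/W_{t-1})\le(e^\lambda-1)\bbE_{\rho\sim p_t}[u_t(\rho)]$, and telescoping yields $\ln(W_T/W_0)\le(e^\lambda-1)\,\text{ALG}$. This step is routine and uses nothing about the structure of the $u_t$.

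The heart of the argument — and the step I expect to be the main obstacle — is the lower bound, where $(w,k)$-dispersion is used to show that the reward surface stays high on a whole ball around $\rho^*$, not merely at $\rho^*$. Restricting the integral to $B(\rho^*,w)\subseteq C$, I would lower bound $U_T(\rho)$ uniformly over this ball. For each $t$ whose discontinuity partition $P_t$ does \emph{not} split $B(\rho^*,w)$, the function $u_t$ is genuinely $L$-Lipschitz across the ball (which lies in a single piece), so $u_t(\rho)\ge u_t(\rho^*)-Lw$; for the remaining $t$ — of which there are at most $k$, precisely by $(w,k)$-dispersion — I can only use the crude $u_t(\rho)\ge u_t(\rho^*)-1$ coming from the range $[0,1]$. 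Summing gives $U_T(\rho)\ge\OPT-TLw-k$ for every $\rho\in B(\rho^*,w)$, hence $W_T\ge\vol(B(\rho^*,w))\exp(\lambda(\OPT-TLw-k))$. Dividing by $W_0=\vol(C)\le\vol(B(\cdot,R))$ and using $\vol(B(\rho^*,w))/\vol(B(\cdot,R))=(w/R)^d$ yields $\ln(W_T/W_0)\ge\lambda(\OPT-TLw-k)-d\ln(R/w)$; the hypothesis $B(\rho^*,w)\subset C$ is exactly what makes this restriction legitimate.

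Finally I would combine the two bounds to get $\lambda\,\OPT\le(e^\lambda-1)\text{ALG}+\lambda TLw+\lambda k+d\ln(R/w)$, then subtract $\lambda\,\text{ALG}$ from both sides. Using $e^\lambda-1-\lambda\le\lambda^2$ (valid for $\lambda$ not too large, which holds for the chosen $\lambda$ once $T$ is at least a constant) together with $\text{ALG}\le T$ gives $\lambda(\OPT-\text{ALG})\le\lambda^2 T+\lambda TLw+\lambda k+d\ln(R/w)$, i.e. $\OPT-\text{ALG}\le\lambda T+TLw+k+d\ln(R/w)/\lambda$. Choosing $\lambda=\sqrt{d\ln(R/w)/T}$ to balance the first and last terms yields the claimed $O\left(\sqrt{Td\log(R/w)}+k+TLw\right)$.
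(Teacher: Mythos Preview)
The paper does not actually prove this theorem: it is quoted verbatim as a black-box result from \cite{balcan2018dispersion} and is only \emph{applied} (in the proof of Theorem~\ref{thm:regret}) after separately establishing Lipschitzness and dispersion of $g(\tau)$. So there is no ``paper's own proof'' to compare against.

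That said, your argument is correct and is essentially the proof given in the cited reference: the continuous multiplicative-weights potential $W_t=\int_C e^{\lambda U_t}$, the one-step upper bound $\ln(W_t/W_{t-1})\le(e^\lambda-1)\,\bbE_{p_t}[u_t]$ from the range $[0,1]$, and the lower bound obtained by restricting the final integral to $B(\rho^*,w)$ and using $(w,k)$-dispersion to control the drop $U_T(\rho^*)-U_T(\rho)$ over that ball by $TLw+k$. Your handling of the volume ratio (using $C\subset B(\cdot,R)$ so $\vol(C)\le\vol(B(\cdot,R))$ and $\vol(B(\rho^*,w))/\vol(B(\cdot,R))=(w/R)^d$) and the final balancing with $\lambda=\sqrt{d\ln(R/w)/T}$ match the original as well. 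The only minor point worth tightening is the inequality $e^\lambda-1-\lambda\le\lambda^2$, which holds for $\lambda\le 1.79\ldots$; for the stated choice of $\lambda$ this is automatic once $T\ge d\ln(R/w)$, and for smaller $T$ the regret bound is vacuous anyway since $\sqrt{Td\ln(R/w)}\ge T$.
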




\section{Estimating Point-Specific Threshold of ``Don't Know''}
Algorithm \ref{algorithm: robust inference-time classifier with data-specific threshold} gives an alternative to our algorithm where instead of using a fixed threshold for each point, we use a variable point-specific threshold learned from the data. For this algorithm, we have the following guarantee.

\begin{algorithm}[t]
\caption{Robust classifier in the feature space with point-specific threshold $\tau_i^\cA$ of ``don't know''}
\label{algorithm: robust inference-time classifier with data-specific threshold}
\begin{algorithmic}[1]
\STATE {\bfseries Input:} A test example $F(\x)$ (potentially an adversarial example), a set $\cA$ of training examples $F(\x_i^\cA)$ and their labels $y_i^\cA$, $i\in[m_\cA]$, a set $\cB$ of training examples $F(\x_i^\cB)$ and their labels $y_i^\cB$, $i\in[m_\cB]$.
\STATE {\bfseries Output:} A predicted label of $F(\x)$, or ``don't know''.
\STATE{$\tau_i^\cA\leftarrow \min_{j:\ y_i^\cA\not=y_j^\cB} \dist(F(\x_i^\cA),F(\x_j^\cB))$ for all $i\in [m_\cA]$.}
\STATE{$i_{\min}\leftarrow \argmin_{i\in[m]} \dist(F(\x),F(\x_i^\cA))$.}
\IF{$\dist(F(\x),F(\x_{i_{\min}}^\cA))<\tau_{i_{\min}}^\cA$}
\STATE{\textbf{return} $y_{i_{\min}}^\cA$;}
\ELSE
\STATE{\textbf{return} ``don't know''.}
\ENDIF
\end{algorithmic}
\end{algorithm}

\begin{theorem}
Suppose that the sets $\cA$ and $\cB$ are two independent samples from $F(\cX)$ of size $m_\cA$ and $m_\cB$, respectively. Let $m_\cB=\frac{m_\cA}{\epsilon\delta}$. Then with probability at least $1-\delta$ over the draw of $\cA$, for a new sample $F(\x)$, the probability that ``\blue{there exists $F(\x^\cA) \in \cA$ such that $F(\x)$ is closer to $F(\x^\cA)$ than any point in $\cB$ of different labels than $F(\x^\cA)$, and $F(\x)$ has a different label than $F(\x^\cA)$}'' is at most $\epsilon$, where the probability is taken over the draw of $F(\x)$ and the draw of $\cB$.
\end{theorem}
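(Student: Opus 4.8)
The plan is to reduce the statement to a leave-one-out (exchangeability) computation on the pooled sample $\{F(\x')\}\cup\cB$, and then apply Markov's inequality to convert a bound on the expectation over $\cA$ into the stated high-probability statement over $\cA$.

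\textbf{Setup.} For a fixed draw of $\cA$, let $E$ be the event (over the draws of $F(\x')$ and $\cB$) that there is \emph{some} $\x^\cA\in\cA$ for which $F(\x')$ is strictly closer to $F(\x^\cA)$ than every point of $\cB$ whose label differs from $y^\cA$, while $y'\ne y^\cA$. By the definition of $\tau_i^\cA$ in Algorithm~\ref{algorithm: robust inference-time classifier with data-specific threshold}, this ``exists'' event contains the event described in the theorem (which refers to the particular $\x^\cA$ that is $\x'$'s nearest neighbour in $\cA$), so it suffices to bound $Z(\cA):=\Pr_{\x',\cB}[E]$. By Markov's inequality, $\Pr_{\cA}[Z(\cA)>\epsilon]\le \mathbb{E}_\cA[Z(\cA)]/\epsilon$, so it is enough to prove $\mathbb{E}_\cA[Z(\cA)]=\Pr_{\cA,\x',\cB}[E]\le m_\cA/m_\cB=\epsilon\delta$, using that $m_\cB=m_\cA/(\epsilon\delta)$.

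\textbf{Main step.} Union-bound over the $m_\cA$ points of $\cA$: $\Pr[E]\le\sum_{i=1}^{m_\cA}\Pr[E_i]$, where $E_i$ is the same event with $\x^\cA$ forced to be $\x_i^\cA$. Fix $i$ and condition on $(F(\x_i^\cA),y_i^\cA=:\ell)$; the $m_\cB+1$ labelled points $F(\x'),F(\x_1^\cB),\dots,F(\x_{m_\cB}^\cB)$ are i.i.d.\ and independent of this conditioning. Let $W$ be the sub-collection of these $m_\cB+1$ points whose label is $\ne\ell$. Then $E_i$ is exactly the event that $F(\x')\in W$ and $F(\x')$ is the strictly nearest element of $W$ to $F(\x_i^\cA)$. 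Among the $m_\cB+1$ i.i.d.\ points, the property ``this point lies in $W$ and is the strictly nearest element of $W$ to $F(\x_i^\cA)$'' is invariant under relabelling the points, and at most one of the $m_\cB+1$ such events occurs in any realization (the degenerate case $W=\emptyset$ and ties only remove occurrences); hence by exchangeability each has probability at most $1/(m_\cB+1)$, so $\Pr[E_i]\le 1/(m_\cB+1)$. Summing, $\Pr[E]\le m_\cA/(m_\cB+1)<m_\cA/m_\cB=\epsilon\delta$, and Markov then gives $\Pr_{\cA}[Z(\cA)>\epsilon]<\delta$, i.e.\ with probability at least $1-\delta$ over $\cA$ the stated probability is at most $\epsilon$.

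\textbf{Main obstacle.} The only subtle point is the exchangeability step: one must see that $F(\x')$ should be put on exactly the same footing as the $m_\cB$ points of $\cB$, that ``being the strictly nearest differently-labelled point to $F(\x_i^\cA)$'' is a permutation-symmetric property of the $m_\cB+1$ points, and that these $m_\cB+1$ events are mutually exclusive, so their common probability is at most $1/(m_\cB+1)$. The union bound over $\cA$, the Markov step, the handling of ties and of the degenerate case where $\cB$ has no differently-labelled point, and the reduction of the nearest-neighbour failure of Algorithm~\ref{algorithm: robust inference-time classifier with data-specific threshold} to the ``exists'' event $E$ are all routine.
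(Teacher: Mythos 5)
Your proposal is correct and follows essentially the same route as the paper's proof: the paper likewise views $F(\x')$ together with $\cB$ as a pooled sample of $m_\cB+1$ exchangeable points, notes that $F(\x')$ is the nearest differently-labelled point to a fixed $F(\x^\cA)$ with probability at most $\frac{1}{m_\cB+1}$, union-bounds over the $m_\cA$ points of $\cA$ to get $\frac{m_\cA}{m_\cB+1}<\epsilon\delta$, and finishes with Markov's inequality over the draw of $\cA$. Your additional care with ties and the empty-$W$ case only tightens the same argument.
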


\begin{proof}
 Fixing the draw of set $\cA$, we can think of picking a random set $\cS$ of size $m_\cB+1$ and randomly choosing one of the points in it to be $F(\x)$ and the rest to be $\cB$. \blue{Let $F(\x^\cA)$ be an arbitrary point in $\cA$.} Assuming $\cS$ has at least one point in it of a different label than $F(\x^\cA)$, then there is exactly a $\frac{1}{m_\cB+1}$ probability that we choose $F(\x)$ to be the closest point in $\cS$ to $F(\x^\cA)$ of a different label than $F(\x^\cA)$; if $\cS$ has all points of the same label as $\x^\cA$, then the probability is 0. Now we can apply the union bound over all $F(\x^\cA)$ in $\cA$ to get a total probability of failure at most $\frac{m_\cA}{m_\cB+1} < \epsilon \delta$.

The above analysis gives an expected failure probability over the draw of set $\cA$. Applying the Markov inequality gives a high-probability bound.
\end{proof}

\section{Technical Lemmas for Proof of Theorem \ref{theorem: toy example optimal threshold}}
\label{appendix: toy example}

\begin{lemma}\label{lemma: toy example}
In the setting of Theorem \ref{theorem: toy example optimal threshold}, $l(\tau)$ is monotonically non-decreasing for $\tau>D$.
\end{lemma}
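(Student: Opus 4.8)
The plan is to show that once $\tau>D$ the abstention penalty in $g$ vanishes identically, so that $g$ collapses to robust accuracy $\cA_\adv(\tau)=1-\cE_\adv(\tau)$, and the latter is trivially non-increasing in $\tau$. First I would argue that $\cD_\nat(\tau)=0$ for every $\tau>D$: each class is supported on a segment of diameter $D$, and the training set always contains $m\ge 1$ points from whichever segment a given natural test point $F(\x)$ lies in, so $F(\x)$ has a training point within distance $D<\tau$. Hence $\min_i\dist(F(\x),F(\x_i))<\tau$ and $\textsc{RobustClassifier}(\tau,0)$ never outputs ``don't know'' on natural data. This holds for any realization of the sample, so $\cD_\nat\equiv 0$ on $(D,\infty)$ (consistently with the formula of Lemma~\ref{lemma: toy example abstain}, which evaluates to $0$ at $\tau=D$), and therefore $g(\tau)=\cA_\adv(\tau)=1-\cE_\adv(\tau)$ on that interval.

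Next I would show that $\cE_\adv(\tau)$ is non-decreasing in $\tau$ over all of $\R^+$, in particular on $(D,\infty)$. Fix a test point $\x$ and an adversarial subspace $S\sim\cS$, and for a candidate target $\e\in S+F(\x)$ let $i^\star(\e):=\argmin_{i}\dist(\e,F(\x_i))$; crucially $i^\star(\e)$ does not depend on $\tau$, since $\tau$ enters only the accept/abstain test. By definition of the algorithm, $f_\tau$ misclassifies-without-abstaining at $\e$ iff $y_{i^\star(\e)}\ne\y$ \emph{and} $\dist(\e,F(\x_{i^\star(\e)}))<\tau$; the first clause is $\tau$-independent and the second only becomes easier as $\tau$ grows. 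Hence for fixed $(\x,S)$ the set of successful adversarial targets can only grow as $\tau$ increases, so the indicator $\1\{\exists\,\e\in S+F(\x):\ f_\tau(\e)\ne\y\text{ and }f_\tau(\e)\text{ does not abstain}\}$ is non-decreasing in $\tau$; taking the expectation over $\x\sim\cD_\cX$ and $S\sim\cS$ shows $\cE_\adv(\tau)$ is non-decreasing.

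Combining the two steps, on $(D,\infty)$ we have $g(\tau)=1-\cE_\adv(\tau)$ with $\cE_\adv$ non-decreasing, so $g$ is monotonically decreasing there; together with continuity of $g$ at $\tau=D$ this also yields $\sup_{\tau>D}g(\tau)\le g(D)$, which is the form in which the lemma is invoked in the proof of Theorem~\ref{theorem: toy example optimal threshold}. The only delicate point — what I would call the main obstacle, though it is mild — is the monotonicity claim for $\cE_\adv$: one must verify that permitting abstention does not destroy monotonicity of the adversary's success probability. The case analysis on the four possibilities for $f_\tau(\e)$ (nearest training point of correct/incorrect label, inside/outside the threshold) confirms that as $\tau$ grows a target can only move from ``abstain'' to a definite prediction, so an adversarial success is never lost. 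If genuinely strict monotonicity is wanted on an initial stretch $D<\tau=o(r)$, one can additionally invoke the $\Theta(\tau/r)$ estimate for $\cE_\adv$ underlying Theorem~\ref{theorem: improved bound}, but only the non-strict statement is needed downstream.
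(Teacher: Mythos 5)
Your proof is correct and follows essentially the same route as the paper's: first observe that $\cD_\nat(\tau)=0$ for $\tau>D$ since every natural test point lies within distance $D$ of any same-class training point (and $m\ge 1$), and then note that increasing $\tau$ only enlarges the set of feature vectors on which the classifier commits to a (possibly wrong) label, so $\cE_\adv$ is non-decreasing and $g=\cA_\adv$ is non-increasing on $(D,\infty)$. Your write-up merely makes explicit the case analysis behind the paper's one-line monotonicity remark.
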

\begin{proof}
Note that $\cD_\nat(\tau)=0$ for $\tau>D$ as long as $m>0$, since any test point of a class must be within $D$ of every training point of that class. Hence, it suffices to note that $\cE_\adv(\tau)$ is monotonically non-decreasing in $\tau$ (increasing the threshold can only increase the ability of the adversary to successfully perturb to the opposite class).
\end{proof}

\begin{lemma}\label{lemma: toy example abstain}
In the setting of Theorem \ref{theorem: toy example optimal threshold}, the abstention rate is given by
\begin{align*}
    \cD_\nat(\tau)=\frac{1}{m+1}\left[2\ind_{\tau\le D}\left(1-\frac{\tau}{D}\right)^{m+1}+(m-1)\ind_{\tau\le D/2}\left(1-\frac{2\tau}{D}\right)^{m+1}\right].
\end{align*}
\end{lemma}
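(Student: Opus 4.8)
The plan is to reduce the abstention rate to a one–dimensional integral and then evaluate it piecewise in $\tau$. First I would observe that the whole configuration is invariant under the reflection that swaps $S_A$ and $S_B$, so the abstention rate on the class mixture equals the abstention rate conditioned on the test point lying in class $A$. Next, since $D=o(r)$ we have $D<r$, and in the only regime that matters (and whenever either indicator in the claimed formula is nonzero) we have $\tau\le D<r$; hence every class-$B$ training point is at distance at least $D+r-t\ge r>\tau$ from any class-$A$ test point $(t,0)$ with $t\le D$. So only the $m$ class-$A$ training points can keep a class-$A$ test point from abstaining.

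Then I would set up the integral. Write a class-$A$ test point as $(t,0)$ with $t\sim\mathrm{Unif}[0,D]$. It abstains iff none of the $m$ i.i.d.\ $\mathrm{Unif}[0,D]$ class-$A$ training coordinates lands in $[t-\tau,t+\tau]$. A single training coordinate lands in $[t-\tau,t+\tau]\cap[0,D]$ with probability $\ell(t)/D$, where $\ell(t):=\min(D,t+\tau)-\max(0,t-\tau)$; by independence the conditional abstention probability is $(1-\ell(t)/D)^m$, so
\[
\cD_\nat(\tau)=\frac1D\int_0^D\Bigl(1-\frac{\ell(t)}{D}\Bigr)^m\,dt.
\]

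The remaining work is to evaluate this integral in the three regimes $\tau\le D/2$, $D/2<\tau\le D$, and $\tau>D$. For $\tau\le D/2$ the interval $[0,D]$ splits into a left end $[0,\tau)$ where $\ell(t)=t+\tau$ (here $t+\tau<2\tau\le D$, which is exactly why $\tau\le D/2$ is the relevant threshold), a middle part $[\tau,D-\tau]$ where $\ell(t)=2\tau$, and a symmetric right end with $\ell(t)=D-t+\tau$. The middle contributes $\frac{D-2\tau}{D}(1-\tfrac{2\tau}{D})^m=(1-\tfrac{2\tau}{D})^{m+1}$, and each end contributes $\frac{1}{m+1}\bigl[(1-\tfrac{\tau}{D})^{m+1}-(1-\tfrac{2\tau}{D})^{m+1}\bigr]$ after the substitution $s=(t+\tau)/D$ and the telescoping $\int(1-s)^m\,ds$. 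Summing the three pieces and collecting the $(1-\tfrac{2\tau}{D})^{m+1}$ terms yields exactly $\frac{1}{m+1}\bigl[2(1-\tfrac{\tau}{D})^{m+1}+(m-1)(1-\tfrac{2\tau}{D})^{m+1}\bigr]$. For $D/2<\tau\le D$ the middle part disappears, a central band $[D-\tau,\tau]$ has $\ell(t)=D$ (integrand $0$), and the two ends each integrate to $\frac{1}{m+1}(1-\tfrac{\tau}{D})^{m+1}$, which matches the formula once $\ind_{\tau\le D/2}=0$. For $\tau>D$ we have $\ell(t)=D$ for all $t\in[0,D]$, so $\cD_\nat(\tau)=0$, matching both indicators vanishing. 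There is no genuine obstacle here: the only thing requiring care is the bookkeeping of the case boundaries and the algebraic identity $(1-\tfrac{2\tau}{D})(1-\tfrac{2\tau}{D})^m=(1-\tfrac{2\tau}{D})^{m+1}$, which together with the telescoping integral does all the simplification.
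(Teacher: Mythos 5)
Your proposal is correct and follows essentially the same route as the paper: reduce by class symmetry to a test point on $S_A$, express the conditional abstention probability as $(1-\ell(t)/D)^m$ with $\ell(t)$ the length of the $\tau$-neighborhood intersected with $S_A$, and integrate piecewise over $t$. You are in fact slightly more careful than the paper in two spots—explicitly ruling out class-$B$ training points via $\tau\le D<r$ and spelling out the $D/2<\tau\le D$ and $\tau>D$ cases—but the decomposition and the resulting telescoping integrals are the same.
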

\begin{proof}
Note that for $\tau\ge D$, if $m>0$, we never abstain on any test point. So we will assume $\tau\le D$ in the following. Consider a test point $\x=(x,0)$ sampled from class $A$ (class $B$ is symmetric, so the overall abstention rate is the same is that of points drawn from class $A$). Let $\mathrm{nbd}_{\x}(\tau)$ denote the intersection of a ball of radius $\tau$ around $x$ with $S_A$. For $x$ to be classified as `don't know', we must have no training point sampled from $\mathrm{nbd}_{\x}(\tau)$. This happens with probability $\left(1-\frac{|\mathrm{nbd}_{\x}(\tau)|}{D}\right)^m$, where $|\mathrm{nbd}_{\x}(\tau)|$ is the size of $\mathrm{nbd}_{\x}(\tau)$ and is given by
\begin{align*}
    |\mathrm{nbd}_{\x}(\tau)|=\begin{cases}
    \min\{x+\tau,D\},&x<\tau;\\
    \min\{2\tau,D\},&\tau\le x\le D-\tau;\\
    \min\{D-x+\tau,D\},&x>D-\tau.
    \end{cases}
\end{align*}

\noindent Averaging over the distribution of test points $\x$, we get
\begin{align*}
    \cD_\nat(\tau)&=\frac{1}{D}\int_0^{D}\left(1-\frac{|\mathrm{nbd}_{\x}(\tau)|}{D}\right)^mdx\\
    &=\frac{1}{m+1}\left[2\left(1-\frac{\tau}{D}\right)^{m+1}+(m-1)\ind_{\tau\le D/2}\left(1-\frac{2\tau}{D}\right)^{m+1}\right].
\end{align*}
\end{proof}

\begin{lemma}\label{lemma: toy example accuracy}
In the setting of Theorem \ref{theorem: toy example optimal threshold}, the robust accuracy rate for $\tau\le D$ is given by
\begin{align*}
    \cA_\adv(\tau)=1-\frac{\tau}{\pi r}\left(1-\frac{m+3}{m+1}\cdot\Theta\left(\frac{D}{r}\right)\right) -\Theta\left(\left(\frac{\tau}{r}\right)^3\right).
\end{align*}
\end{lemma}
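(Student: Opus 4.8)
The plan is to compute $\cE_\adv(\tau)=1-\cA_\adv(\tau)$ for a single class-$A$ test point (class $B$ is identical by the left--right reflection symmetry of the configuration, and the data distribution is balanced, so the per-class value equals the overall value) and then average over the randomness. First I would fix a test point $\x=(x,0)$ with $x\sim U[0,D]$ and a training sample whose class-$B$ points have $x$-coordinates $D+r+t_{(i)}$, where $t_{(1)}\le\cdots\le t_{(m)}$ are the order statistics of $m$ i.i.d.\ $U[0,D]$ variables independent of $x$. The crucial geometric point is that $\x$ and \emph{every} class-$B$ training point lie on the $x$-axis, so the perturbation ray $\{F(\x)+\delta_0\u:\delta_0>0\}$ (with $\u$ the uniformly random unit direction, since $n_3=1$) comes within $\tau$ of some class-$B$ training point if and only if it comes within $\tau$ of the \emph{nearest} one, which sits at distance $w_{\min}:=D+r+t_{(1)}-x$; the caps of ``good'' directions for the farther points are nested inside the cap for the nearest point. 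I would also dispatch the routine check that any point of the ray landing within $\tau$ of a class-$B$ training point is actually labeled $B$ by $\textsc{RobustClassifier}(\tau,0)$ (Algorithm~\ref{algorithm: robust separated inference-time classifier}) rather than ``don't know'' or $A$: since $\tau\le D=o(r)$, such a point has $x$-coordinate $\ge D+r-\tau$, hence is at Euclidean distance $\ge r-\tau\gg\tau$ from every class-$A$ point, so its nearest neighbor is a class-$B$ point at distance $\le\tau$. Because $w_{\min}\ge r>D\ge\tau$, the adversary therefore succeeds precisely when $\u$ lies within angle $\arcsin(\tau/w_{\min})$ of the direction toward $S_B$, an event of probability $\tfrac1\pi\arcsin(\tau/w_{\min})$.

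Next I would expand this probability in $\tau/r$. Writing $w_{\min}=r(1+\eta)$ with $\eta=(D+t_{(1)}-x)/r\in[0,2D/r]$, the expansion $\tau/w_{\min}=\tfrac\tau r\bigl(1-\eta+O((D/r)^2)\bigr)$ is uniformly valid (since $\eta$ is bounded), and $\arcsin u=u+\tfrac16u^3+O(u^5)$ gives
\[
\tfrac1\pi\arcsin\!\Bigl(\tfrac{\tau}{w_{\min}}\Bigr)=\tfrac{\tau}{\pi r}(1-\eta)+O\!\Bigl(\tfrac\tau r\bigl(\tfrac Dr\bigr)^2\Bigr)+\Theta\!\Bigl(\bigl(\tfrac\tau r\bigr)^3\Bigr).
\]
Taking expectations over $x$ and $t_{(1)}$ and using $\mathbb{E}[t_{(1)}]=\tfrac D{m+1}$ (minimum of $m$ uniforms on $[0,D]$) and $\mathbb{E}[x]=\tfrac D2$ yields $\mathbb{E}[\eta]=\tfrac1r\bigl(D+\tfrac D{m+1}-\tfrac D2\bigr)=\tfrac Dr\cdot\tfrac{m+3}{2(m+1)}$, so
\[
\cE_\adv(\tau)=\tfrac{\tau}{\pi r}\Bigl(1-\tfrac{m+3}{m+1}\,\Theta\bigl(\tfrac Dr\bigr)\Bigr)+\Theta\!\Bigl(\bigl(\tfrac\tau r\bigr)^3\Bigr),
\]
which is exactly the claimed expression for $\cA_\adv(\tau)=1-\cE_\adv(\tau)$. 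I would retain the cubic term as a separate summand even though it is of smaller order than the $\Theta(D/r)$ correction, because Theorem~\ref{theorem: toy example optimal threshold} differentiates $g(\tau)=\cA_\adv(\tau)-c\,\cD_\nat(\tau)$ and needs the $\Theta((\tau/r)^2)$ contribution it makes to $g'$.

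The one genuinely delicate part is the error bookkeeping, and that is where I expect to spend the most care. I would verify that (i) the dependence of $w_{\min}$ on $x$ and $t_{(1)}$ contributes precisely the factor $\tfrac{m+3}{m+1}\Theta(D/r)$ multiplying $\tfrac{\tau}{\pi r}$ and nothing of larger order — this works because $w_{\min}=r(1+O(D/r))$ holds \emph{surely}, so the full fluctuation of $w_{\min}$ (not just its mean) is controlled and the Jensen-type gaps between $\mathbb{E}[1/w_{\min}]$ and $1/\mathbb{E}[w_{\min}]$ are only $O((D/r)^2)$; (ii) the leftover terms $O(\tfrac\tau r(D/r)^2)$ and $O((\tau/r)^5)$ are absorbed into the $\tfrac{m+3}{m+1}\Theta(D/r)$ and $\Theta((\tau/r)^3)$ terms respectively, which is immediate from $\tau\le D=o(r)$; and (iii) the arcsine cubic term is $\Theta((\tau/r)^3)$, not merely $O$, which follows from the strictly positive coefficient $\tfrac16$ in $\arcsin u=u+\tfrac16u^3+\cdots$ together with the matching lower bound $\arcsin u\ge u+\tfrac16 u^3$ for $u\in[0,1]$. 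A smaller point to state cleanly is the probability constant $\tfrac1\pi\arcsin(\tau/w_{\min})$: it is the angular measure, over the circle of directions of $\u$, of the cone of directions from $F(\x)$ whose ray meets the $\tau$-ball around the nearest class-$B$ point, and for $n_2=2,n_3=1$ this is a direct one-line computation.
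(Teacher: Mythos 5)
Your proof is correct and takes essentially the same route as the paper's: reduce to the nearest opposite-class point (the nested-caps observation is exactly the paper's ``it suffices to consider the nearest point of the opposite class''), write the per-direction success probability as $\frac{1}{\pi}\arcsin(\tau/w_{\min})$, expand in $\tau/r$, and average over the test point and the minimum order statistic, using $\mathbb{E}[t_{(1)}]=D/(m+1)$ to get $\bar d=\frac{D(m+3)}{2(m+1)}$. Your additional checks (that the perturbed point is genuinely labeled $B$ rather than abstained on, and the explicit error bookkeeping) are sound refinements of steps the paper leaves implicit.
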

\begin{proof}
Consider a test point $\x=(x,0)$ from $S_A$. Let $\y=(y,0)$ denote the nearest point in $S_B$. In the given geometry, it is easy to see that if $\x$ can be perturbed into the $\tau$ neighborhood of some point $\y'\in S_B$ when moved along a fixed direction, then it must be possible to perturb it into the $\tau$ neighborhood of $\y$ (Figure \ref{figure:toy-example-appendix}). Therefore it suffices to consider directions where perturbation to the $\tau$-ball around $\y$ is possible.

\begin{figure}[t]
\center

\tikzset{every picture/.style={line width=0.75pt}} 

\begin{tikzpicture}[x=0.75pt,y=0.75pt,yscale=-1,xscale=1]

\draw [color={rgb, 255:red, 45; green, 19; blue, 241 }  ,draw opacity=1 ][line width=2.25]    (485.5,69.69) -- (421.5,69.69) ;
\draw    (182.5,69.19) -- (419.5,69.19) ;
\draw [shift={(421.5,69.19)}, rotate = 180] [color={rgb, 255:red, 0; green, 0; blue, 0 }  ][line width=0.75]    (10.93,-3.29) .. controls (6.95,-1.4) and (3.31,-0.3) .. (0,0) .. controls (3.31,0.3) and (6.95,1.4) .. (10.93,3.29)   ;
\draw [shift={(180.5,69.19)}, rotate = 0] [color={rgb, 255:red, 0; green, 0; blue, 0 }  ][line width=0.75]    (10.93,-3.29) .. controls (6.95,-1.4) and (3.31,-0.3) .. (0,0) .. controls (3.31,0.3) and (6.95,1.4) .. (10.93,3.29)   ;
\draw  [draw opacity=0][fill={rgb, 255:red, 25; green, 163; blue, 226 }  ,fill opacity=1 ] (435.25,64.69) -- (440.5,73) -- (430,73) -- cycle ;
\draw  [draw opacity=0][fill={rgb, 255:red, 240; green, 101; blue, 37 }  ,fill opacity=1 ] (148.5,66.19) -- (155,66.19) -- (155,72.69) -- (148.5,72.69) -- cycle ;
\draw  [draw opacity=0][fill={rgb, 255:red, 25; green, 163; blue, 226 }  ,fill opacity=1 ] (467.25,64.69) -- (472.5,73) -- (462,73) -- cycle ;
\draw [color={rgb, 255:red, 240; green, 101; blue, 37 }  ,draw opacity=1 ][line width=2.25]    (180.5,69.19) -- (116.5,69.19) ;
\draw    (51.75,75.32) -- (540.5,46.8) ;
\draw [shift={(542.5,46.69)}, rotate = 536.6600000000001] [color={rgb, 255:red, 0; green, 0; blue, 0 }  ][line width=0.75]    (10.93,-3.29) .. controls (6.95,-1.4) and (3.31,-0.3) .. (0,0) .. controls (3.31,0.3) and (6.95,1.4) .. (10.93,3.29)   ;
\draw [shift={(49.75,75.44)}, rotate = 356.66] [color={rgb, 255:red, 0; green, 0; blue, 0 }  ][line width=0.75]    (10.93,-3.29) .. controls (6.95,-1.4) and (3.31,-0.3) .. (0,0) .. controls (3.31,0.3) and (6.95,1.4) .. (10.93,3.29)   ;
\draw   (410.25,68.84) .. controls (410.25,55.04) and (421.44,43.84) .. (435.25,43.84) .. controls (449.06,43.84) and (460.25,55.04) .. (460.25,68.84) .. controls (460.25,82.65) and (449.06,93.84) .. (435.25,93.84) .. controls (421.44,93.84) and (410.25,82.65) .. (410.25,68.84) -- cycle ;
\draw   (442.25,68.84) .. controls (442.25,55.04) and (453.44,43.84) .. (467.25,43.84) .. controls (481.06,43.84) and (492.25,55.04) .. (492.25,68.84) .. controls (492.25,82.65) and (481.06,93.84) .. (467.25,93.84) .. controls (453.44,93.84) and (442.25,82.65) .. (442.25,68.84) -- cycle ;

\draw (290,75) node [anchor=north west][inner sep=0.75pt]   [align=left] {$\displaystyle r$};
\draw (126,102) node [anchor=north west][inner sep=0.75pt]   [align=left] {Class A};
\draw (424,104) node [anchor=north west][inner sep=0.75pt]   [align=left] {Class B};
\draw (147,50) node [anchor=north west][inner sep=0.75pt]   [align=left] {$\displaystyle \mathbf{x}$};
\draw (423,77) node [anchor=north west][inner sep=0.75pt]   [align=left] {$\displaystyle \mathbf{y}$};
\draw (469,73) node [anchor=north west][inner sep=0.75pt]   [align=left] {$\displaystyle \mathbf{y} '$};
\draw (260,35) node [anchor=north west][inner sep=0.75pt]   [align=left] {adversarial direction};
\draw (430,20) node [anchor=north west][inner sep=0.75pt]   [align=left] {$\tau$ balls};

\end{tikzpicture}
\caption{It suffices to consider the nearest point of the opposite class for adversarial perturbation.\label{figure:toy-example-appendix}}
\end{figure}

Therefore the probability of adversary's success for $\x$, given $\y$ is the nearest point of the opposite class, is
\[\mathrm{err}_{\x\mid\y}(\tau)=\frac{1}{\pi}\arcsin\left(\frac{\tau}{y-x}\right)=\frac{1}{\pi}\arcsin\left(\frac{\tau}{r+d}\right),\]
where $d=y-x-r\in [0,2D]$. Now since $\tau\le D=o(r)$, we have
\[\mathrm{err}_{\x\mid\y}(\tau)=\frac{\tau}{\pi(r+d)}+\Theta\left(\left(\frac{\tau}{r}\right)^3\right)=\frac{\tau}{\pi r}\left(1-\Theta\left(\frac{d}{r}\right)\right) +\Theta\left(\left(\frac{\tau}{r}\right)^3\right).\]

\noindent We can now compute the average error using the probability distributions for $\x$ and $\y$, $\x$ is a uniform distribution over $S_A$, while $\y$ is a nearest-neighbor distribution.
\[p(x)=\frac{1}{D},\;\; p(y)=\frac{m}{D}\left(1-\frac{y-r-D}{D}\right)^{m-1}.\]

\noindent The average value of $d$ is
\[\Bar{d}=\int_0^D\int_0^D(y'+x')\frac{m}{D}\left(1-\frac{y'}{D}\right)^{m-1}dy'\frac{dx'}{D}=\frac{D(m+3)}{2(m+1)}.\]
Using this to compute the average of $\mathrm{err}_{\x\mid\y}(\tau)$ gives the result.
\end{proof}



\section{Additional Experiments}

\begin{algorithm}[t]
\caption{Approximate computation of attacks under threat model \ref{sec: threat model} against Algorithm \ref{algorithm: robust separated inference-time classifier}}
\label{algorithm: adversarial attack}
\begin{algorithmic}[1]
\STATE {\bfseries Input:} A randomly-sampled adversarial subspace $S$ of dimension $n_3$, a test example $F(\x)$ and its label $y$, a set of training examples $F(\x_i)$ and their labels $y_i$, $i\in[m]$, a threshold parameter $\tau$.
\STATE {\bfseries Output:} A misclassified adversarial example $F(\x)+\v$, $\v\in S$ if it exists; otherwise, output ``no adversarial example found''.
\STATE{$F_{\mathrm{center}}(\x_i)\leftarrow F(\x_i)-F(\x)$ for $i\in[m]$.}\\
{(The $F_{\mathrm{proj}}(\x_i)$'s in the next step are candidate adversarial examples.)}
\STATE{Project $F_{\mathrm{center}}(\x_i)$, $i\in[m]$ onto $S$ and obtain $F_{\mathrm{proj}}(\x_i)$ for $i\in[m]$.}
\FOR{$i=1,...,m$}
\STATE{Run the nearest-neighbor algorithm to predict the label of $F_{\mathrm{proj}}(\x_i)$ with the training set $\{(F_{\mathrm{center}}(\x_j),y_j):j=1,...,m\}$.}
\IF{the output of the nearest-neighbor algorithm is NOT $y$ \textbf{and} the closest distance is smaller than $\tau$}
\STATE{\textbf{return} $F(\x)+F_{\mathrm{proj}}(\x_i)$.}
\ENDIF
\ENDFOR
\STATE{\textbf{return} ``no adversarial example found''.}
\end{algorithmic}
\end{algorithm}

\subsection{Approximating Robust Accuracy for Large $n_3$}\label{app:attack-large-n3}
The experiments in Section \ref{sec:expt} consider an adversary which is difficult to compute in practice for large adversarial space, that is large $n_3$. In this section we present a `greedy' adversary (Algorithm \ref{algorithm: adversarial attack}) which provides a good approximation to the exact adversary for small $\tau$, which can be easily run even for large $n_3$: we can generate the adversarial examples of $F(\x)$ by projecting each training example onto the affine subspace $F(\x)+S$ and pick the one with the closest distance to $F(\x)$. We denote the accuracy against this algorithm as $\hat{\cA}_\adv$. The averaged results of multiple runs are in Table \ref{table: robustness}: we report the \emph{natural accuracy} ($\cA_\nat=1-\cE_\nat$), the \emph{adversarial accuracy}, and the \emph{abstention rate}, where the \emph{abstention rate} represents the fraction of algorithm's output of ``don't know'' among the misclassified data by the nearest-neighbor classifier.

\begin{table*}[t]
	\centering
	\caption{Natural accuracy $\cA_\nat$ and adversarial accuracy $\hat{\cA}_\adv$ on the CIFAR-10 data set when the 512-dimensional representations are learned by contrastive learning, where \emph{abstain} represents the fraction of each algorithm's output of ``don't know'' among the misclassified data by ours ($\tau\rightarrow\infty$, a.k.a. the nearest-neighbor classifier).}
	\label{table: robustness}
	\resizebox{1.0\textwidth}{!}{%
\begin{tabular}{@{}cr||cc|cc|cc|cc@{}}
\toprule
& \multirow{2}{2cm}{Contrastive} & \multicolumn{2}{c}{Linear Protocol} & \multicolumn{2}{c}{Ours ($\tau\rightarrow\infty$)} & \multicolumn{2}{c}{Ours ($\tau=1.0$)} & \multicolumn{2}{c}{Ours ($\tau=0.8$)}\\
& & $\cA_{\nat}$ & $\hat{\cA}_\adv$ & $\cA_{\nat}$ & $\hat{\cA}_\adv$ & $\cA_{\nat}$/abstain & $\hat{\cA}_\adv$/abstain & $\cA_{\nat}$/abstain & $\hat{\cA}_\adv$/abstain\\
\midrule
\midrule
\multirow{2}{1.5cm}{$n_3=1$} & Self-supervised & 91.1\% & 0.0\% & 84.5\% & 81.5\% & 99.3\%/95.5\% & 99.2\%/95.7\% & 100.0\%/100.0\% & 100.0\%/100.0\%\\
& Supervised & 94.4\% & 0.0\% & 94.3\% & 93.5\% & 95.0\%/12.3\% & 94.5\%/15.4\% & 97.7\%/59.6\% & 97.7\%/64.6\%\\
\midrule
\multirow{2}{1.5cm}{$n_3=25$} & Self-supervised & 91.1\% & 0.0\% & 84.5\% & 65.1\% & 99.3\%/95.5\% & 98.8\%/96.6\% & 100.0\%/100.0\% & 100.0\%/100.0\%\\
& Supervised & 94.4\% & 0.0\% & 94.3\% & 84.5\% & 95.0\%/12.3\% & 91.6\%/45.8\% & 97.7\%/59.6\% & 96.8\%/79.4\%\\
\midrule
\multirow{2}{1.5cm}{$n_3=50$} & Self-supervised & 91.1\% & 0.0\% & 84.5\% & 56.3\% & 99.3\%/95.5\% & 98.3\%/96.1\% & 100.0\%/100.0\% & 100.0\%/100.0\%\\
& Supervised & 94.4\% & 0.0\% & 94.3\% & 71.7\% & 95.0\%/12.3\% & 89.7\%/63.6\% & 97.7\%/59.6\% & 95.5\%/84.1\%\\
\midrule
\multirow{2}{1.5cm}{$n_3=100$} & Self-supervised & 91.1\% & 0.0\% & 84.5\% & 31.1\% & 99.3\%/95.5\% & 96.7\%/95.2\% & 100.0\%/100.0\% & 99.7\%/99.6\%\\
& Supervised & 94.4\% & 0.0\% & 94.3\% & 35.0\% & 95.0\%/12.3\% & 86.3\%/78.9\%  & 97.7\%/59.6\% & 93.0\%/89.2\%\\
\midrule
\multirow{2}{1.5cm}{$n_3=200$} & Self-supervised & 91.1\% & 0.0\% & 84.5\% & 1.2\% & 99.3\%/95.5\% & 91.1\%/91.0\% & 100.0\%/100.0\% & 98.6\%/98.6\%\\
& Supervised & 94.4\% & 0.0\% & 94.3\% & 0.7\% & 95.0\%/12.3\% & 74.7\%/74.5\% & 97.7\%/59.6\% & 85.8\%/85.7\%\\
\bottomrule
\end{tabular}
}
\end{table*}

\begin{figure}[b]
\centering
\subfigure{
\includegraphics[scale=0.3]{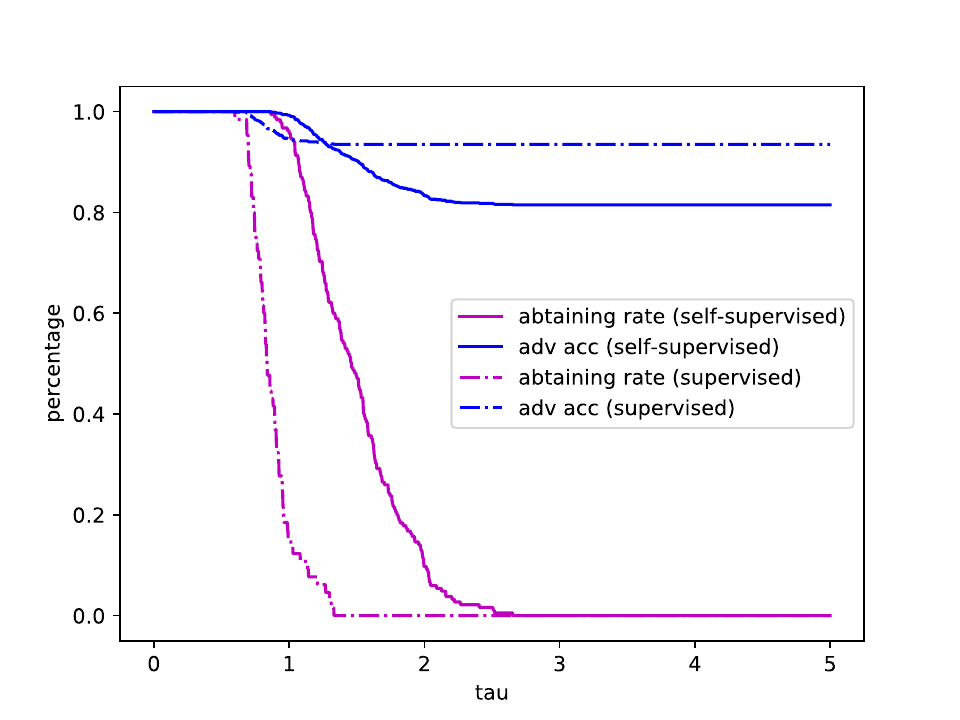}}
\subfigure{
\includegraphics[scale=0.3]{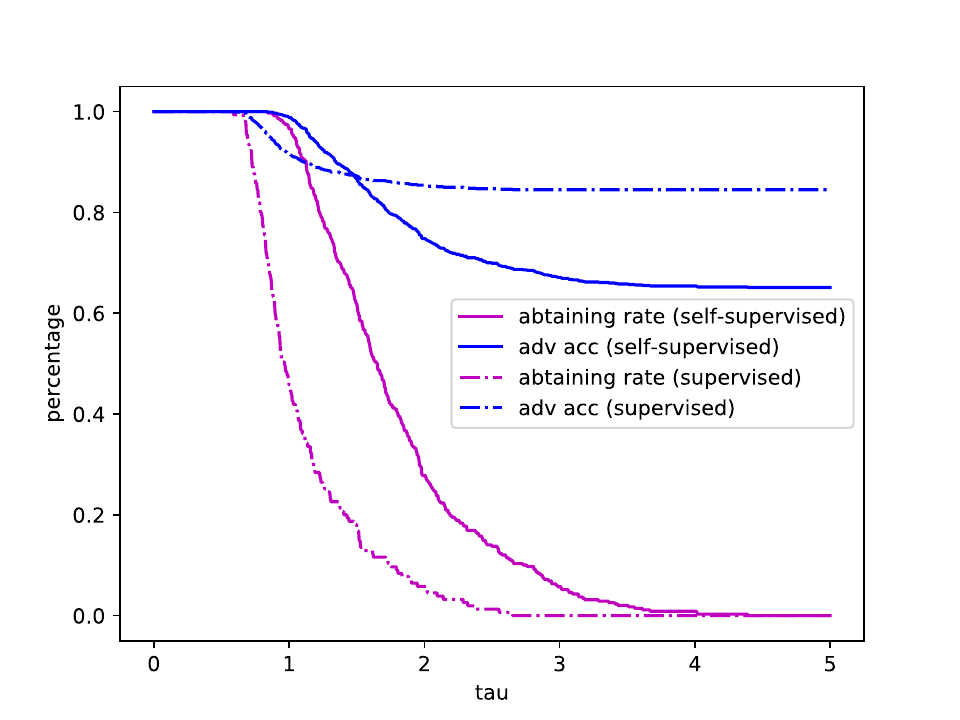}}
\subfigure{
\includegraphics[scale=0.3]{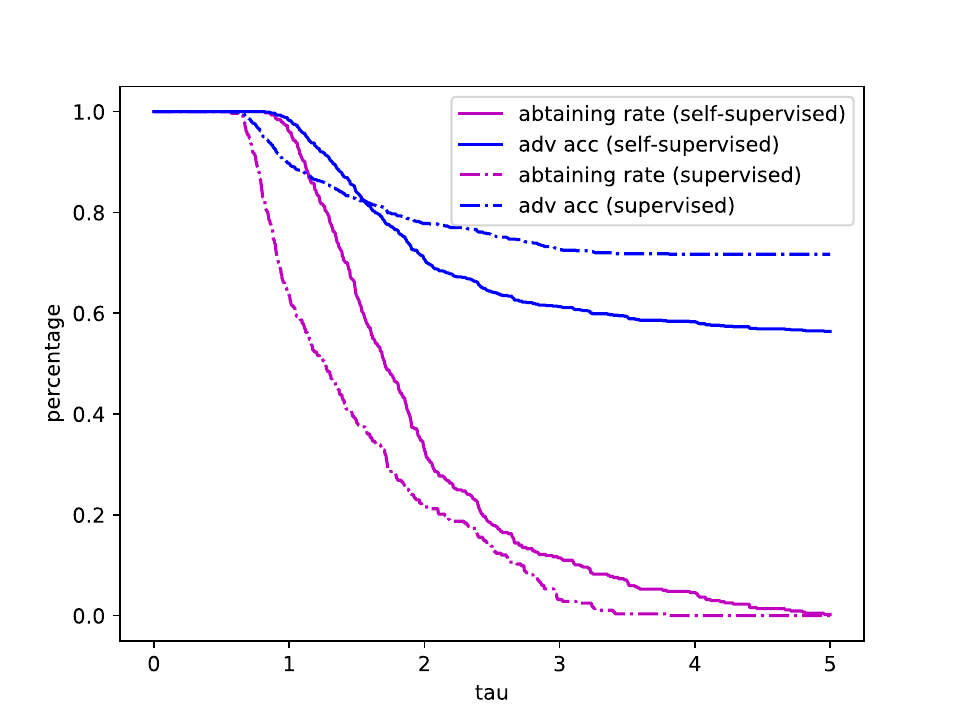}}
\caption{Sensitivity of model success rate (estimated by $\hat{\cA}_\adv$) and abstention rate on the parameter $\tau$, where \emph{abstain} represents the fraction of algorithm's output of ``don't know'' among the misclassified data by ours ($\tau\rightarrow\infty$, a.k.a. the nearest-neighbor classifier). \textbf{Left Figure:} $n_3=1$. \textbf{Middle Figure:} $n_3=25$. \textbf{Right Figure:} $n_3=50$.}
\label{figure: sensitivity}
\end{figure}
We observe that as the dimension of adversarial subspaces $n_3$ increases, the adversarial accuracy $\hat{\cA}_\adv$ decreases while the abstention rate tends to increase, which verifies an intrinsic trade-off between robustness and abstention rate. Recall that our algorithm abstains if and only if the closest distance in feature space between the given test example and any training example is larger than a threshold $\tau$. As the threshold parameter $\tau$ decreases, the adversarial accuracy $\hat{\cA}_\adv$ increases while the algorithm abstains from predicting the class of more data.

\subsubsection{Sensitivity of Threshold Parameter $\tau$}

The threshold parameter $\tau$ is an important hyperparameter in our proposed method. It captures the trade-off between the accuracy and the abstention rate. We show how the threshold parameter affects the performance of our robust classifiers by numerical experiments on the CIFAR-10 data set. We first train a embedding function $F$ by following the setups in Section \ref{section: certified adversarial robustness}. We then fix $F$ and run our evaluation protocol by varying $\tau$ from 0.0 to 5.0 with step size 0.001. We summarize our results in Figure \ref{figure: sensitivity} which plots the adversarial accuracy $\hat{\cA}_\adv$ and the abstention rate for three representative dimension of adversarial subspace. Compared with self-supervised contrastive learning (the solid line), supervised contrastive learning (the dashed line) enjoys higher adversarial accuracy (the blue curve) and smaller abstention rate (the red curve) for fixed $\tau$'s due to the use of extra label information. For both setups, the adversarial accuracy is not very sensitive to the choice of $\tau$.

\end{document}